\documentclass[11pt]{article}
\usepackage{fullpage}
\usepackage{enumitem}

\usepackage[dvipsnames]{xcolor}

\usepackage{appendix}
\usepackage{epsf}

\usepackage[dvipsnames]{xcolor}
\usepackage{graphics}
\usepackage[normalem]{ulem}
\usepackage[style = alphabetic,citestyle=alphabetic,minalphanames=4,maxalphanames=4,maxbibnames=99,backend=biber,natbib=true,backref=true]{biblatex}
\addbibresource{refs.bib}
%\addbibresource{AP-refs.bib}
\DefineBibliographyStrings{english}{%
  backrefpage = {Cited on page},% originally "cited on page"
  backrefpages = {Cited on pages},% originally "cited on pages"
}
\usepackage{nicefrac}
\usepackage{subcaption}
\usepackage{psfrag}
\usepackage{comment}
\usepackage{color}
\usepackage{wrapfig}
\usepackage{pgfplots}
\usepackage{pgfplotstable}
\usepackage{multirow}
\usepackage{mathtools}
\usepackage{amsfonts}
\usepackage{amsthm}
\usepackage{amsmath}
\usepackage{amssymb}
\usepackage{scalerel}
\usepackage{nccmath}
\usepackage{tikz}
\usepackage{algorithmic}
\usepackage[ruled,vlined]{algorithm2e}
\usepackage{hyperref}

\hypersetup{colorlinks,linkcolor = violet, urlcolor  = YellowOrange, citecolor = Aquamarine, anchorcolor = YellowOrange}
%\usepackage{algorithmic}

%Verbose

\DeclareMathOperator*{\pinf}{\vphantom{p}inf}
\DeclareMathOperator*{\psup}{\vphantom{p}sup}
\DeclareMathOperator*{\pmax}{\vphantom{p}max}
\DeclareMathOperator*{\pmin}{\vphantom{p}min}

\newcommand{\real}{\ensuremath{\mathbb{R}}}
\newcommand{\En}{\ensuremath{\mathbb{E}}}

\newcommand{\ind}{\mathbb{I}}
\newcommand{\prob}{\mathbb{P}}
\newcommand{\simplex}{\Delta}

%generic
\newcommand{\defn}{:\,=}

\newtheorem{theorem}{Theorem}
\newtheorem{lemma}{Lemma}

\newtheorem{corollary}{Corollary}
\newtheorem{proposition}{Proposition}
\newtheorem{remark}{Remark}
\newtheorem{definition}{Definition}
\newtheorem{assumption}{Assumption}

%Numbering
\newcommand{\1}{\ensuremath{{\sf (i)}}}
\newcommand{\2}{\ensuremath{{\sf (ii)}}}
\newcommand{\3}{\ensuremath{{\sf (iii)}}}

\newcommand\inner[2]{\langle #1, #2 \rangle} %inner product
\newcommand{\ir}{i_1}
\newcommand{\ic}{i_2}
\newcommand{\jj}{j}
\newcommand{\ii}{i}

\newcommand{\lipv}{L_{\textsf{v}}}
\newcommand{\alplb}{\alpha}
\newcommand{\betlb}{\beta}
\newcommand{\prefab}{\pref_{\alplb, \betlb}}
\newcommand{\ld}{q}
\newcommand{\alpz}{\alplb_0}
\newcommand{\betz}{\betlb_0}
\newcommand{\prefz}{\pref_{\alpz, \betz}}
\newcommand{\half}[1]{\left[\frac{1}{2}\right]^{#1}}
\newcommand{\mixd}{\delta}
\newcommand{\mixdb}{\bar{\mixd}}
\newcommand{\numh}{h}
\newcommand{\slope}{m}
\newcommand{\normal}{\mathcal{N}}
\newcommand{\sig}{\sigma}

\newcommand{\pref}{\mathbf{P}}
\newcommand{\pnash}{\mathbf{A}}
\newcommand{\hpnash}{\hat{\pnash}}
\newcommand{\preftil}{\widetilde{\mathbf{P}}}
\newcommand{\pvec}{\mathbf{P}}
\newcommand{\prefm}[1]{\pref_{\textsf{wt}, #1}}
\newcommand{\hpref}{\widehat{\pref}}
\newcommand{\Sc}{S}
\newcommand{\norm}{\|\cdot\|}
\newcommand{\DeltaP}{\Delta_{\pref}}
\newcommand{\DeltaA}{\Delta_{\pnash}}
\newcommand{\cdim}{k}
\newcommand{\adim}{d}
\newcommand{\Ber}{{\sf Ber}}
\newcommand{\ubox}{\mathbb{B}}
\newcommand{\prefeq}{\succeq}
\newcommand{\distf}{\rho}
\newcommand{\rvec}{r}

\newcommand{\oracleS}{\mathcal{O}_{\Sc}}
\newcommand{\robj}{\pi}
\newcommand{\pihat}{\widehat{\pi}}
\newcommand{\pihatplug}{\pihat_{{\sf plug}}}

\newcommand{\samp}{n}

\newcommand{\val}{\mathcal{V}}
\newcommand{\G}{G}

\newcommand{\z}{z}
\newcommand{\wght}{w}
\newcommand{\X}{\mathcal{X}}
\newcommand{\Y}{\mathcal{Y}}

\newcommand{\rew}{r}
\newcommand{\x}{x}
\newcommand{\y}{y}
\newcommand{\alg}{\mathcal{A}}
\newcommand{\robjs}{\robj^*}
\newcommand{\vg}{v}
\newcommand{\f}{f}
\newcommand{\ones}{1}
\newcommand{\zeros}{0}
\newcommand{\ti}{t}
\newcommand{\reg}{r}
\newcommand{\smrad}{\delta}
\newcommand{\smvec}{u}
\newcommand{\subg}{\hat{g}}
\newcommand{\T}{T}
\newcommand{\step}{\eta}
\newcommand{\eps}{\epsilon}
\newcommand{\parop}{\theta}
\newcommand{\sg}{g}
\newcommand{\subd}{\partial}
\newcommand{\maxset}{\Gamma}
\newcommand{\proj}{\Pi}

\newcommand{\p}{p}

\newcommand{\inds}{I}
\newcommand{\hsamp}{\hat{\samp}}

\newcommand{\conf}{\delta}
\newcommand{\prefS}{\mathcal{P}}

\newcommand{\minmax}{\mathfrak{M}}

\newcommand{\const}{c}

\newcommand{\prefbl}{\pref_{\textsf{cr}}}
\newcommand{\wtprefbl}{\widetilde{\prefbl}}
\newcommand{\gap}{\gamma}
\newcommand{\prefhalf}{\pref_{1/2}}
\renewcommand{\prob}{\mathbb{P}}
\newcommand{\hyper}{H}
\newcommand{\dhyp}{\zeta}
\newcommand{\dface}{k_f}

\newcommand{\cmat}{C}
\newcommand{\simp}{\sf{sim}}
\newcommand{\csim}{c_{\simp}}
\newcommand{\cext}{\cmat_{\sf ext}}
\newcommand{\cextj}[1]{\cmat_{\mathsf{ext}, #1}}
\newcommand{\vars}{x}
\newcommand{\cmathat}{\hat{\cmat}}
\newcommand{\cexthat}{\hat{\cmat}_{\sf ext}}
\newcommand{\conset}{J}
\newcommand{\consethat}{\hat{\conset}}
\newcommand{\bvec}[1]{b_{#1}}
\newcommand{\pinv}{\dagger}
\newcommand{\Aalt}{\Phi}

\newcommand{\Znoise}{Z}
\newcommand{\Us}{U}
\newcommand{\Vs}{V}
\newcommand{\Sigs}{\Sigma}
\newcommand{\Ztnoise}{\tilde{\Znoise}}

%%% New version of \caption puts things in smaller type, single-spaced
%%% and indents them to set them off more from the text.
\makeatletter
\long\def\@makecaption#1#2{
        \vskip 0.8ex
        \setbox\@tempboxa\hbox{\small {\bf #1:} #2}
        \parindent 1.5em  %% How can we use the global value of this???
        \dimen0=\hsize
        \advance\dimen0 by -3em
        \ifdim \wd\@tempboxa >\dimen0
                \hbox to \hsize{
                        \parindent 0em
                        \hfil
                        \parbox{\dimen0}{\def\baselinestretch{0.96}\small
                                {\bf #1.} #2
                                %%\unhbox\@tempboxa
                                }
                        \hfil}
        \else \hbox to \hsize{\hfil \box\@tempboxa \hfil}
        \fi
        }
\makeatother

\newcommand{\prefov}{{\pref}_{\textsf{ov}}}

\newcommand{\rvy}{y}
\newcommand{\prefex}{\pref_{\textsf{ex}}}

%\DeclareUnicodeCharacter{0301}{*************************************}

\begin{document}

\begin{center}
{\bf {\LARGE{Preference learning along multiple criteria: \\A game-theoretic perspective}}}

\vspace*{.2in}

\large{
\begin{tabular}{cc}
Kush Bhatia$^\dagger$ & Ashwin Pananjady$^\circ$
\end{tabular}
\begin{tabular}{ccc}
Peter L. Bartlett$^{\dagger, \ddagger}$ & Anca D. Dragan$^{\dagger}$ & Martin J. Wainwright$^{\dagger, \ddagger}$
\end{tabular}
}
\vspace*{.2in}

\begin{tabular}{c}
$^\dagger$Department of Electrical Engineering and Computer Sciences, UC
Berkeley \\ $^\ddagger$Department of Statistics, UC Berkeley
\\ $^\circ$Schools of Industrial \& Systems Engineering and Electrical \& Computer Engineering, \\
 Georgia Tech
\end{tabular}

\vspace*{.2in}

\today

\vspace*{.2in}

\end{center}

\begin{abstract}
The literature on ranking from ordinal data is vast, and there are
several ways to aggregate overall preferences from pairwise
comparisons between objects. In particular, it is well known that any
Nash equilibrium of the zero-sum game induced by the preference matrix
defines a natural solution concept (winning distribution over objects)
known as a von Neumann winner. Many real-world problems, however, are
inevitably multi-criteria, with different pairwise preferences
governing the different criteria. In this work, we generalize the
notion of a von Neumann winner to the multi-criteria setting by taking
inspiration from Blackwell’s approachability. Our framework allows for
non-linear aggregation of preferences across criteria, and generalizes
the linearization-based approach from multi-objective optimization.

From a theoretical standpoint, we show that the Blackwell winner of a
multi-criteria problem instance can be computed as the solution to a
convex optimization problem. Furthermore, given random samples of
pairwise comparisons, we show that a simple ``plug-in'' estimator
achieves near-optimal minimax sample complexity. Finally, we
showcase the practical utility of our framework in a user study on
autonomous driving, where we find that the Blackwell winner
outperforms the von Neumann winner for the overall preferences.
\end{abstract}

\section{Introduction}
\label{sec:intro}

Economists, social scientists, engineers, and computer scientists have
long studied models for human preferences, under the broad umbrella of
social choice theory~\cite{black1948, arrow1951}. Learning from human
preferences has found applications in interactive robotics for
learning reward functions~\cite{sadigh2017, palan2019}, in medical
domains for personalizing assistive devices~\cite{zhang2017,
  biyik2020}, and in recommender systems for optimizing search
engines~\cite{chapelle2012, hofmann2011}. The recent focus on safety
in AI has popularized human-in-the-loop learning methods that use
human preferences in order to promote value
alignment~\cite{christiano2017, saunders2018, amershi2014}.

The most popular form of preference elicitation is to make pairwise
comparisons~\cite{thurstone1927, bradley1952, luce1959}. Eliciting
such feedback involves showing users a pair of objects and asking them
a query: Do you prefer object A or object B? Depending on the
application, an object could correspond to a product in a search
query, or a policy or reward function in reinforcement learning. A
vast body of classical work dating back to Condorcet and
Borda~\cite{condorcet1785, borda1784} has focused on defining and
producing a ``winning'' object from the result of a set of pairwise
comparisons.

Dudik et al.~\cite{dudik2015} proposed the concept of a von Neumann
winner, corresponding to a distribution over objects that beats or
ties every other object in the collection. They showed that under an
expected utility assumption, such a randomized winner always exists
and overcomes limitations of existing winning concepts---the Condorcet
winner does not always exist, while the Borda winner fails an
independence of clones test~\cite{schulze2011}. However, the
assumption of expected utility relies on a strong hypothesis about how
humans evaluate distributions over objects: it posits that the
probability with which any distribution over objects $\robj$ beats an
object is linear in $\robj$.

\begin{figure}[t!]
  \centering\hspace*{-4ex} \captionsetup{font=small}
\begin{tabular}{cc}
  \includegraphics[width=.50\textwidth]{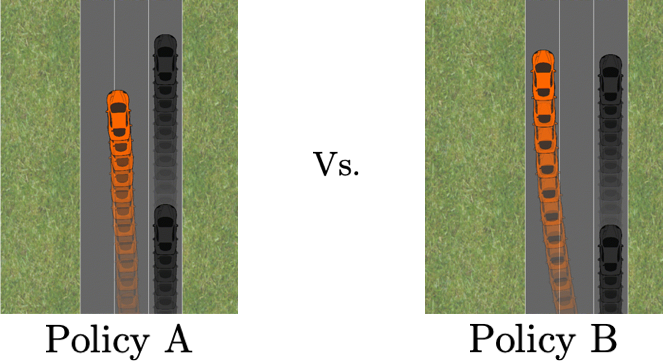}&
  \hspace{3ex}
  \includegraphics[width=.32\textwidth]{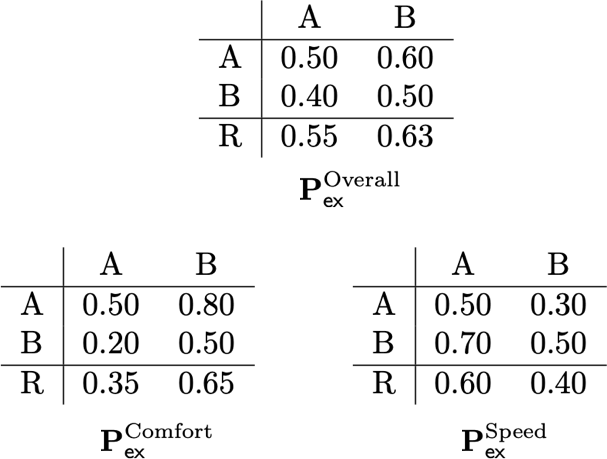}\\ (a)&(b)
\end{tabular}
\caption{\small{(a) Policy A focuses on optimizing comfort, whereas
    policy B focuses on speed, and we consider pairwise comparisons of
    these two policies in different environments.  (b) Preference
    matrices, where entry $(i, j)$ of the matrix contains the
    proportion of comparisons between the pair $(i, j)$ that are won
    by object $i$. (The diagonals are set to half by convention). The
    overall pairwise comparisons are given by the matrix
    $\prefex^\textsf{Overall}$, and preferences along each of the
    criteria by matrices $\prefex^\textsf{Comfort}$ and
    $\prefex^\textsf{Speed}$. Policy R is a randomized policy
    \mbox{$\nicefrac{1}{2}$ A $+ \nicefrac{1}{2}$ B}. While the
    preference matrices satisfy the linearity assumption individually
    along speed and comfort, the assumption is violated overall,
    wherein R is preferred over both A and B.}}
  \label{fig:intro}
\end{figure}

\paragraph{Consequences of assuming linearity:}

In order to better appreciate these consequences, consider as an
example\footnote{Note that while this is an illustrative example, we observe a similar trend in our actual user study in Section~\ref{sec:pol_drive}.} the task of deciding between two policies (say A and B) to
deploy in an autonomous vehicle. Suppose that these policies have been
obtained by optimizing two different objectives, with policy A
optimized for comfort and policy B optimized for
speed. Figure~\ref{fig:intro}(a) shows a snapshot of these two
policies. When compared overall, 60\% of the people preferred Policy A
over B -- making A the von Neumann winner. The linearity assumption
then posits that a randomized policy that mixes between A and B can
\emph{never} be better than both A and B; but we see that the Policy R
= $\nicefrac{1}{2}$ A $+$ $\nicefrac{1}{2}$ B is actually preferred by
a majority over both A and B! Why is the linearity assumption violated
here?

One possible explanation for such a violation is that the comparison
problem is actually \emph{multi-criteria} in nature. If we look at the
preferences for the speed and comfort criteria individually in
Figure~\ref{fig:intro}(b), we see that Policy A does quite poorly on
the speed axis while B lags behind in comfort. In contrast, Policy R
does acceptably well along both the criteria and hence is preferred
overall to both Policies A and B. It is indeed impossible to come to
this conclusion by only observing the overall comparisons. This
observation forms the basis of our main proposal: decompose the single
overall comparison and ask humans to provide preferences along
\emph{simpler} criteria. This decomposition of the comparison task
allows us to place structural assumptions on comparisons along each
criterion. For instance, we may now posit the linearity assumption
along each criterion separately rather than on the overall comparison
task.

In addition to allowing for simplified assumptions, breaking up the
task into such simpler comparisons allows us to obtain richer and more
accurate feedback as compared to the single overall
comparison. Indeed, such a motivation for eliciting simpler feedback
from humans finds its roots in the study of cognitive biases in
decision making, which suggests that the human mind resorts to simple
heuristics when faced with a complicated questions~\cite{tversky1974}.

\paragraph{Contributions:}

In this paper, we formalize these insights and propose a new framework
for preference learning when pairwise comparisons are available along
multiple, possibly conflicting, criteria. As shown by our example in
Figure~\ref{fig:intro}, a single distribution that is the von Neumann
winner along every criteria might not exist.  In order to address this
issue, we formulate the problem of finding the ``best'' randomized
policy by drawing on tools from the literature on vector-valued
pay-offs in game theory. Specifically, we take inspiration from
Blackwell's approachability~\cite{blackwell1956} and introduce the
notion of a Blackwell winner. This solution concept generalizes the
concept of a von Neumann winner, and recovers the latter when there is
only a single criterion present.  Section~\ref{sec:prob} describes
this framework in detail, and Section~\ref{sec:main} collects our
statistical and computational guarantees for learning the Blackwell
winner from data. Section~\ref{sec:pol_drive} describes a user study
with an autonomous driving environment, in which we ask human subjects
to compare self-driving policies along multiple criteria such as
safety, aggressiveness, and conservativeness. Our experiments
demonstrate that the Blackwell winner is able to better trade off
utility along these criteria and produces randomized policies that
outperform the von Neumann winner for the overall preferences.

%%%%%%%%%%%%%%%%%%%%%%%%%%%%%%%%%%%%%%%%%%%%%%%%%%%%%%%%%%%%%%%%%%%%%%%%%%%%

\section{Related work}
\label{sec:rw}

This paper sits at the intersection of multiple fields of study:
learning from pairwise comparisons, multi-objective optimization,
preference aggregation, and equilibrium concepts in games. Here we
discuss those papers from these areas most relevant to our
contributions.

\paragraph{Winners from pairwise comparisons.}

Most closely related to our work is the field of computational social
choice, which has focused on defining notions of winners from overall
pairwise comparisons (see the survey~\cite{moulin_2016} for a
review). Amongst them, three deterministic notions of a winner---the
Condorcet~\cite{condorcet1785}, Borda~\cite{borda1784}, and
Copeland~\cite{copeland1951} winners---have been widely studied. In
more recent work, Dudik et al.~\cite{dudik2015} introduced
the notion of a (randomized) von Neumann winner.

Starting with the work of Yue et al.~\cite{yue2012}, there have been
several research papers studying an online version of preference
learning, called the Dueling Bandits problem. This is a partial
information version of the classic $K$-armed bandit problem, in which
feedback takes the forms of pairwise comparisons between arms of the
bandit.  Many algorithms have been proposed, including versions that
compete with Condorcet~\cite{zoghi2013, zoghi2015b, ailon2014},
Copeland~\cite{zoghi2015, wu2016}, Borda~\cite{jamieson2015} and von
Neumann~\cite{dudik2015} winners.

\paragraph{Multi-criteria decision making.}

The theoretical foundations of decision making based on multiple
criteria have been widely studied within the operations research
community. This sub-field---called multiple-criteria decision
analysis---has focused largely on scoring, classification, and sorting
based on multiple-criteria feedback. See the
surveys~\cite{pomerol2012multicriterion, zopounidis2002multicriteria}
for thorough overviews of existing methods and their associated
guarantees. The problem of eliciting the user's relative weighting of
the various criteria has also been
considered~\cite{doumpos2007regularized}.  However, relatively less
attention has been paid to the study of randomized decisions and
statistical inference, both of which form the focus of our work.  From
an applied perspective, the combination of multi-criteria assessments
has received attention in disparate fields such as
psychometrics~\cite{papay2011different, mcbee2014combining},
healthcare~\cite{teixeira2008statistical}, and recidivism
prediction~\cite{walters2011taking}. In many of these cases, a variety
of approaches---both linear and non-linear---have been empirically
evaluated~\cite{douglas2010estimating}.  Justification for non-linear
aggregation of scores has a long history in
psychology and the behavioral
sciences~\cite{goldstein1991judgments,frisch1994beyond,tversky1979prospect}.

\paragraph{Blackwell's approachability.}

In the game theory literature, Blackwell~\cite{blackwell1956}
introduced the notion of approachability as a generalization of a
zero-sum game with vector-valued payoffs; see
Appendix~\ref{app:blackwell} for more details.  Blackwell's
approachability and its connections with no-regret learning and
calibrated forecasting have been extensively
studied~\cite{abernethy2011, perchet2013, mannor2014}. These
connections have enabled applications of Blackwell's results to
problems ranging from constrained reinforcement
learning~\cite{miryoosefi2019} to uncertainty estimation for
question-answering tasks~\cite{kuleshov2017}. In contrast with such
applications of the repeated vector-valued game, our framework for
preference learning along multiple criteria deals with a single shot
game and uses the idea of the target set to define the concept of a
Blackwell winner.

\paragraph{Stability of Nash equilibria.}

Another related body of literature focuses on Nash equilibria in games
with perturbed payoffs, under both
robust~\cite{aghassi2006robust,lehrer2012partially} and uncertain or
Bayesian~\cite{fudenberg1993self} formulations; see the recent survey
by Perchet~\cite{perchet2014note}. Perturbation theory for Nash
equilibria has been derived in these contexts, and it is well-known
that the Nash equilibrium is not (in general) stable to perturbations
of the payoff matrix. On the other hand, Dudik et
al.~\cite{dudik2015}, working in the context of dueling bandits,
consider Nash equilibria of perturbed, symmetric, zero-sum games, but
show that the \emph{payoff} of the perturbed Nash equilibrium is
indeed stable. That is, even if the equilibrium itself can change
substantially with a small perturbation of the payoff matrix, the
corresponding payoff is still close to the payoff of the original
equilibrium.  Our work provides a similar characterization for the
multi-criteria setting.

\section{Framework for preference learning along multiple criteria}
\label{sec:prob}

We now set up our framework for preference learning along multiple
criteria.  We consider a collection of $\adim$ objects over which
comparisons can be elicited along $\cdim$ different criteria. We index
the objects by the set $[\adim] \defn \{1, \ldots, \adim\}$ and the
criteria by the set $[\cdim]$.

\subsection{Probabilistic model for comparisons}

Since human responses to comparison queries are typically noisy, we
model the pairwise preferences as random variables drawn from an
underlying population distribution. In particular, the result of a
comparison between a pair of objects $(\ir, \ic)$ along criterion
$\jj$ is modeled as a draw from a Bernoulli distribution, with $p(\ir,
\ic; \jj) = \prob(\ir \prefeq \ic \text{ along criterion } \jj).$ By
symmetry, we must have
\begin{equation}
  \label{eq:symm}
  p(\ic, \ir; \jj) = 1 - p(\ir, \ic; \jj) \quad \mbox{for each triple
    $(\ir, \ic, \jj) \in [\adim] \times [\adim] \times [\cdim]$.}
\end{equation}
Letting $\simplex_d$ denote the $d$-dimensional probability simplex,
consider two probability distributions $\robj_1 , \robj_2 \in
\simplex_{\adim}$ over the $\adim$ objects. With a slight abuse of
notation, let $p(\robj_1, \robj_2; \jj)$ denote the probability with
which an object drawn from distribution $\robj_1$ beats an object
drawn independently from distribution $\robj_2$ along criterion $\jj$. We assume for
each individual criterion $j$ that the probability $p(\robj_1,
\robj_2; \jj)$ is linear in the distributions $\robj_1$ and $\robj_2$,
i.e. that it satisfies the relation
\begin{equation}
  \label{eq:expec_pref}
p(\robj_1, \robj_2; \jj) \defn \En_{ \substack{\ir \sim \robj_1 \\ \ic
    \sim \robj_2}} \left[p(\ir, \ic; \jj) \right].
\end{equation}
Equation~\eqref{eq:expec_pref} encodes the per-criterion linearity
assumption highlighted in Section~\ref{sec:intro}.  We collect the
probabilities $\{ p(\ir, \ic; \jj) \}$ into a \emph{preference tensor}
$\pref \in [0,1]^{\adim \times \adim \times \cdim}$ and denote by
$\prefS_{\adim, \cdim}$ the set of all preference tensors that satisfy
the symmetry condition~\eqref{eq:symm}. Specifically, we have
\begin{equation}
\label{eq:pref_set}
 \prefS_{\adim, \cdim} = \{\pref \in [0,1]^{\adim\times \adim \times
   \cdim}\; | \; \pref(\ir, \ic;\jj) = 1 - \pref(\ic, \ir;\jj) \text{
   for all } (\ir, \ic, \jj) \}\;.
\end{equation}
Let $\pref^{\jj}$ denote the $\adim \times \adim$ matrix corresponding
to the comparisons along criterion $\jj$, so that $p(\robj_1, \robj_2;
\jj) = \robj_1^\top \pref^{\jj} \robj_2$.  Also note that a comparison
between a pair of objects $(\ir,\ic)$ induces a \emph{score vector}
containing $\cdim$ such probabilities. Denote this vector by
$\pvec(\ir, \ic) \in [0, 1]^k$, whose $\jj$-th entry is given by
$p(\ir, \ic; \jj)$. Denote by $\pvec(\robj_1, \robj_2)$ the score
vector for a pair of distribution $(\robj_1, \robj_2)$.

In the single criterion case when $\cdim = 1$, each comparison between
a pair of objects is along an \emph{overall} criterion. We let
$\prefov \in [0,1]^{\adim \times \adim }$ represent such an overall
comparison matrix. As mentioned in Section~\ref{sec:intro}, most
preference learning problems are multi-objective in nature, and the
overall preference matrix $\prefov$ is derived as a non-linear
combination of per-criterion preference matrices $\{ \pref^\jj \}_{j =
  1}^k$. Therefore, even when the linearity
assumption~\eqref{eq:expec_pref} holds across each criterion, it might
not hold for the \emph{overall} preference $\prefov$.

In contrast, when the matrices $\pref^\jj$ are aggregated linearly to
obtain the overall matrix $\prefov$, we recover the assumptions of
Dudik et al.~\cite{dudik2015}.

%%%%%%%%%%%%%%%%%%%%%%%%%%%%%%%%%%%%%%%%%%%%%%%%%%%%%%%%%%%%%%%%%%%%%%

\subsection{Blackwell winner}
\label{sec:prob-bw}

Given our probabilistic model for pairwise comparisons, we now
describe our notion of a Blackwell winner. When defining a winning
distribution for the multi-criteria case, it would be ideal to find a
distribution $\robjs$ that is a von Neumann winner along \emph{each}
of the criteria separately. %, i.e. its score satisfies the relation
$p(\pi^*,i;j)\geq 0.5$ for all items $i$ along all criteria $j$.
However, as shown in our example from Figure~\ref{fig:intro}, such a
distribution need not exist: policy A is preferred along the comfort
axis, while policy B along speed.  We thus need a generalization of
the von Neumann winner that explicitly accounts for conflicts between
the criteria.

\begin{figure}[t!]
  \centering\hspace*{-4ex}
  \captionsetup{font=small}
\begin{tabular}{cc}
  \includegraphics[ scale= 0.28]{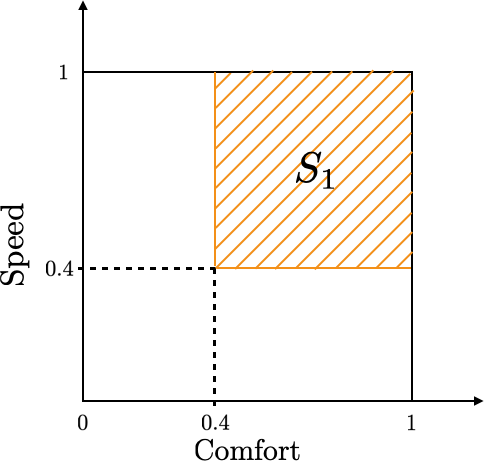}&
  \hspace{4ex}
  \includegraphics[scale = 0.28]{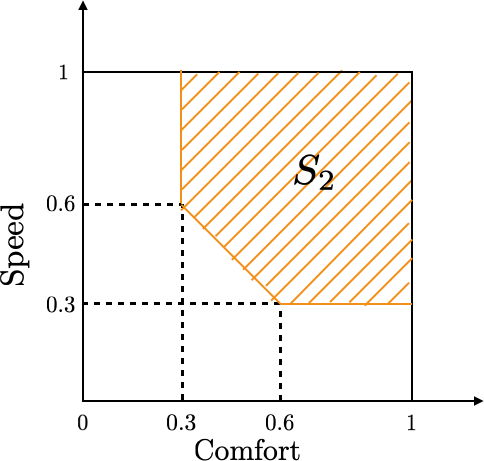}\\
(a)&(b)
\end{tabular}
\caption{\small{ In the context of the example introduced in
    Figure~\ref{fig:intro}, two target sets $\Sc_1$ and $\Sc_2$ that
    capture trade-offs between comfort and speed. Set $\Sc_1$ requires
    feasible score vectors to satisfy 40\% of the population along
    both comfort and speed. Set $\Sc_2$ requires both scores to be
    greater than $0.3$ but with a linear trade-off: the combined score
    must be at least $0.9$.}}
  \label{fig:setup}
\end{figure}

Blackwell~\cite{blackwell1956} asked a related question for the theory of zero-sum games: how can one generalize von Neumann's minimax theorem to vector-valued games? He proposed the notion of a \emph{target set}: a set of acceptable payoff vectors that the first player in a zero-sum game seeks to attain. Within this context, Blackwell proposed the notion of approachability, i.e. how the player might obtain payoffs in a repeated game that are close to the target set on average. We take inspiration from these ideas to define a solution concept for the multi-criteria preference problem.

Our notion of a winner also relies on a target set, which we denote by $\Sc \subset [0,1]^\cdim$, and which in our setting contains \emph{score vectors}. This set provides a way to combine different criteria by specifying combinations of preference scores that are acceptable.
Figure~\ref{fig:setup} provides an example of two such sets.
Observe that for our preference learning problem, the target set $\Sc$ is by definition monotonic with respect to the orthant ordering, that is, if $z_1 \geq z_2$ coordinate-wise, then $\z_2\in \Sc$ implies $\z_1 \in \Sc$.
Our goal is to then produce a distribution $\pi^*$ that can achieve a target score vector for any distribution with which it is compared---that is
$
\pvec(\pi^*, \pi) \in \Sc \text{ for all } \pi \in \simplex_{\adim}.
$
When such a distribution $\pi^*$ exists, we say that the problem instance $(\pref, \Sc)$ is \emph{achievable}.

On the other hand, it is clear that there are problem instances $(\pref, \Sc)$ that are not achievable. While Blackwell's workaround was to move to the setting of repeated games, preference aggregation is usually a one-shot problem. Consequently, our relaxation instead introduces the notion of a \emph{worst-case distance} to the target set. In particular, we measure the distance between any pair of score vectors $u, v \in [0, 1]^k$ as $\rho(u, v) = \| u - v \|$ for some norm $\| \cdot \|$. Using the shorthand $\rho(u, \Sc) \defn \inf_{v \in \Sc} \| u - v \|$, the \emph{Blackwell winner} for an instance $(\pref, \Sc, \norm)$ is now defined as the one that minimizes the maximum distance to the set $\Sc$, i.e.,
\begin{equation} \label{eq:pop-opt}
\pi(\pref, \Sc, \| \cdot \|) \in \arg \min_{\robj \in \simplex_{\adim}}[v(\pi; \pref, \Sc, \norm)], \quad \text{where} \quad
v(\pi; \pref, \Sc, \norm) \defn \pmax_{\pi' \in \simplex_{\adim}}  \distf(\pvec(\pi, \pi'), \Sc) \;.
\end{equation}
Observe that equation~\eqref{eq:pop-opt} has an interpretation as a zero-sum game, where the objective of the minimizing player is to make the score vector $\pref(\robj, \robj')$ as close as possible to the target set $\Sc$.

We now look at commonly studied frameworks for single criterion preference aggregation and multi-objective optimization and show how these can be naturally derived from our framework.

\paragraph{Example: Preference learning along a single criterion.}
A special case of our framework is when we have a single criterion ($\cdim =1$) and the preferences are given by a matrix $\prefov$. %The score space $\Sc$ is the unit interval $[0,1]$, and
The score $\prefov(\ir, \ic)$ is a scalar representing the probability with which object $\ir$ beats object $\ic$ in an overall comparison.
As a consequence of the von Neumann minimax theorem, we have
\begin{equation}\label{eq:vn_half}
  \max_{\robj_1 \in \simplex_{\adim}}\min_{\robj_2 \in \simplex_{\adim}} \prefov(\robj_1, \robj_2) {=} \min_{\robj_2 \in \simplex_{\adim}} \max_{\robj_1 \in \simplex_{\adim}} \prefov(\robj_1, \robj_2) {=} \frac{1}{2},
\end{equation}
with any maximizer above called a von Neumann winner~\cite{dudik2015}.
Thus, for \emph{any} preference matrix $\prefov$, a von Neumann winner
is preferred to any other object with probability at least
$\frac{1}{2}$.

Let us show how this uni-criterion formulation can be derived as a
special case of our framework. Consider the target set $\Sc =
[\frac{1}{2}, 1]$ and choose the distance function $\distf(a, b) = |a
- b|$.  By equation~\eqref{eq:vn_half}, the target set $\Sc =
[\frac{1}{2}, 1]$ is achievable \emph{for all} preference matrices
$\prefov$, and so the von Neumann winner and the Blackwell
winner~$\robj(\prefov, [\frac{1}{2}, 1], |\cdot|)$ coincide.
\hfill $\clubsuit$

\paragraph{Example: Weighted combinations of a multi-criterion problem.}

We saw in the previous example that the single criterion preference
learning problem is quite special: achievability can be guaranteed by
the von Neumann winner for set $\Sc = [\frac{1}{2}, 1]$ for any
preference matrix $\prefov$.  One of the common approaches used in
multi-objective optimization to reduce a multi-dimensional problem
to a uni-dimensional counterpart is by introducing a weighted
combinations of objectives.

Formally, consider a weight vector $\wght \in \simplex_{\cdim}$ and
the corresponding preference matrix
\begin{align*}
\pref(\wght) \defn \sum_{\jj \in [\cdim]} \wght_\jj \pref^\jj,
\end{align*}
obtained by combining the preference matrices along the different
criteria. A winning distribution can then be obtained by solving for
the von Neumann winner of $\pref(\wght)$ given by $\robj(\pref(\wght),
[\frac{1}{2}, 1], |\cdot|)$. The following proposition establishes
that such an approach is a special case of our framework,
and conversely, that there are problem instances in our general
framework which cannot be solved by a simple linear weighing of the
criteria.

\begin{proposition}
  \label{prop:lin_uni}
\begin{enumerate}[label=(\alph*)]
\item For every weight vector $\wght\in \simplex_\cdim$, there exists
  a target set $\Sc_\wght \in [0, 1]^k$ such that for any norm
  $\norm$, we have
\begin{equation*}
\pi(\pref, \Sc_\wght, \norm) = \robj(\pref(\wght), [1/2, 1], |\cdot|)
\;\; \text{ for all } \;\; \pref \in \prefS_{\adim, \cdim}.
\end{equation*}
\item Conversely, there exists a set $\Sc$ and a preference tensor
  $\pref$ with a \emph{unique} Blackwell winner $\robjs$ such that for
  all $\wght \in \simplex_\cdim$, exactly one of the following is
  true:
\begin{equation*}
\robj(\pref(\wght), [{1}/{2}, 1], |\cdot|) \neq \robjs \;\; \text{or}
\;\; \arg \max_{\robj \in \simplex_{\adim}} \big \{ \pmin_{\ii \in
  [\adim]} \pref(\robj, \ii) \big \} = \simplex_\adim\;.
\end{equation*}
\end{enumerate}
\end{proposition}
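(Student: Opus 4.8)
The plan is to take the halfspace target set
\[
\Sc_\wght \;\defn\; \big\{\, z \in [0,1]^\cdim \;:\; \inner{\wght}{z} \ge \tfrac12 \,\big\}.
\]
I would first check that this is a legitimate target set: it is closed, it is nonempty (it contains $(\tfrac12,\dots,\tfrac12)$), and it is monotone with respect to the orthant ordering because $\wght \ge 0$. The crux is the linearity identity $\inner{\wght}{\pvec(\robj,\robj')} = \sum_{\jj}\wght_\jj\,\robj^\top\pref^\jj\robj' = \robj^\top\pref(\wght)\robj'$, together with the fact that $\pref(\wght)$ inherits the symmetry~\eqref{eq:symm} and is therefore a valid preference matrix. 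Since $\Sc_\wght$ is closed, $\distf(\pvec(\robj,\robj'),\Sc_\wght)=0$ if and only if $\robj^\top\pref(\wght)\robj' \ge \tfrac12$, and this equivalence does not depend on the norm. Hence $v(\robj;\pref,\Sc_\wght,\norm)=0$ if and only if $\min_{\robj'\in\simplex_\adim}\robj^\top\pref(\wght)\robj' \ge \tfrac12$. Applying the von Neumann identity~\eqref{eq:vn_half} to $\pref(\wght)$, the inner minimum is at most $\tfrac12$ for every $\robj$ and equals $\tfrac12$ exactly on the (nonempty) set of von Neumann winners of $\pref(\wght)$. Therefore $\min_\robj v = 0$ and the set of Blackwell winners equals the set of von Neumann winners of $\pref(\wght)$, which is the claimed identity (read as an equality of the two $\arg\min$/$\arg\max$ sets).

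\textbf{Part (b).} I would exhibit an explicit instance with $\adim=\cdim=2$. Fix $a \in (0,\tfrac12]$ and set
\[
\pref^1 = \begin{pmatrix} 1/2 & 1/2 + a \\ 1/2 - a & 1/2 \end{pmatrix}, \qquad
\pref^2 = \begin{pmatrix} 1/2 & 1/2 - a \\ 1/2 + a & 1/2 \end{pmatrix},
\]
so object $1$ wins along criterion $1$ and object $2$ wins along criterion $2$. Take $\Sc = \{z\in[0,1]^2 : z_1\ge \tfrac12,\ z_2\ge \tfrac12\}$ (the ``von Neumann winner along both criteria'' set, which is monotone) and any fixed norm. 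Writing $\robj=(t,1-t)$ and $\robj'=(s,1-s)$, a direct computation gives $\pvec(\robj,\robj') = \big(\tfrac12 + a(t-s),\ \tfrac12 - a(t-s)\big)$; exactly one coordinate of this vector drops below $\tfrac12$, so its distance to $\Sc$ equals $a\,|t-s|$ for every norm. Maximizing over $s\in[0,1]$ yields $v(\robj) = a\max\{t,1-t\}$, uniquely minimized at $t=\tfrac12$, so $\robjs=(\tfrac12,\tfrac12)$ is the unique Blackwell winner of $(\pref,\Sc,\norm)$.

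It then remains to verify the dichotomy for every weight $\wght=(w,1-w)$. Here $\pref(\wght)$ is the $2\times 2$ preference matrix with off-diagonal entry $\tfrac12+\gamma$, where $\gamma \defn a(2w-1)$. If $w=\tfrac12$ then $\gamma=0$, $\pref(\wght)$ is the all-$\tfrac12$ matrix, every distribution is a von Neumann winner, and $\arg\max_\robj\min_\ii\pref(\wght)(\robj,\ii)=\simplex_\adim$; in this case $\robjs$ \emph{is} a von Neumann winner, so the first alternative fails and only the second holds. If $w\ne\tfrac12$ then $\gamma\ne0$, $\pref(\wght)$ has a strict Condorcet winner ($e_1$ if $\gamma>0$, else $e_2$), its von Neumann winner set is that singleton, which differs from $\robjs$ and from $\simplex_\adim$; so the first alternative holds and the second fails. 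In both cases exactly one alternative holds.

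\textbf{Main obstacle.} Part (a) is essentially bookkeeping once one guesses the halfspace set. The real content is in Part (b): the instance must be engineered so that the Blackwell winner is a genuine interior compromise (here $(\tfrac12,\tfrac12)$) that no linear scalarization can single out, and one must carefully handle the degenerate weight $w=\tfrac12$, where $\pref(\wght)$ collapses to the trivial all-$\tfrac12$ game --- this is exactly the case that forces the ``exactly one is true'' phrasing rather than the cleaner statement that $\robjs$ is never a von Neumann winner of any $\pref(\wght)$.
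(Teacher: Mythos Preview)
Your proposal is correct and follows essentially the same route as the paper: the same halfspace target set $\Sc_\wght=\{z:\inner{\wght}{z}\ge 1/2\}$ for part~(a), and the same $2\times 2$ two-criterion instance with $\Sc=[1/2,1]^2$ for part~(b), followed by the same case split on whether $\wght_1=\wght_2$. One small caveat: your assertion in~(b) that the distance to $\Sc$ equals $a|t-s|$ ``for every norm'' tacitly assumes $\|e_1\|=\|e_2\|=1$; for a general norm the distance is $a|t-s|\cdot\|e_j\|$ and the unique minimizer shifts away from $(\tfrac12,\tfrac12)$, though the dichotomy still goes through. The paper simply fixes the $\ell_\infty$ norm, which is all the statement requires.
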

Thus, while the Blackwell winner is always able to recover any linear combination of criteria, the converse is not true. Specifically, part~(b) of the proposition shows that for a choice of preference tensor $\pref$ and target set $\Sc$, either the von Neumann winner for $\pref(w)$ is not equal to the Blackwell winner, or it degenerates to the entire simplex $\simplex_\adim$ and is thus uninformative. Consequently, our framework is strictly more general than weighting the individual criteria.
\hfill $\clubsuit$

\section{Statistical guarantees and computational approaches}\label{sec:main}
In this section, we provide theoretical results on computing the Blackwell winner from samples of pairwise comparisons along the various criteria.

\subsection{Observation model and evaluation metrics.}
We operate in the natural passive observation model, where a sample
consists of a comparison between two randomly chosen objects along a
randomly chosen criterion.  Specifically, we assume access to an
oracle that when queried with a tuple $\eta = (\ir, \ic, \jj)$
comprising a pair of objects $(\ir, \ic)$ and a criterion $\jj$,
returns a comparison \mbox{$\rvy(\eta) \sim \Ber(p(\ir, \ic;
  \jj))$}. Each query to the oracle constitutes one sample. In the
passive sampling model, the tuple of objects and criterion is sampled
uniformly, with replacement, that is $(\ir, \ic) {\sim}
\mathsf{Unif}\{\binom{[\adim]}{2}\} \text{ and } \jj {\sim}
\mathsf{Unif}\{[\cdim]\}$ where $\mathsf{Unif}\{A\}$ denotes the
uniform distribution over the elements of a set $A$.

Given access to samples $\{\rvy_1(\eta_1), \ldots,
\rvy_\samp(\eta_\samp)\}$ from this observation model, we define the
empirical preference tensor (specifically the upper triangular part)
\begin{equation}
  \label{eq:emp_pref}
\hpref_\samp(\ir, \ic, \jj) \defn \frac{\sum_{\ell = 1}^n
  \rvy_\ell(\eta_\ell)\ind[\eta_\ell = (\ir, \ic, \jj)]}{1 \vee
  \sum_{\ell}\ind[\eta_\ell = (\ir, \ic, \jj)]}\quad \text{for } \ir <
\ic \;,
\end{equation}
where each entry of the upper-triangular tensor is estimated using a sample average and the remaining entries are calculated to ensure the symmetry relations implied by the inclusion $\hpref_\samp \in \prefS_{\adim, \cdim}$.

As mentioned before, we are interested in computing the solution
$\robjs \defn \pi(\pref, \Sc, \| \cdot \|)$ to the optimization
problem~\eqref{eq:pop-opt}, but with access only to samples from the
passive observation model. For any estimator $\widehat{\robj} \in
\simplex_\adim$ obtained from these samples, we evaluate its error
based on its value with respect to the tensor $\pref$, i.e.,
\begin{equation} \label{eq:error}
\DeltaP (\widehat{\pi}, \pi) \defn \vg(\widehat{\pi}; S, \pref, \| \cdot \| ) - \vg(\pi^*; S, \pref, \| \cdot \| ).
\end{equation}
Note that the error $\DeltaP$ implicitly also depends on the set $\Sc$ and the norm $\| \cdot \|$, but we have chosen our notation to be explicit only in the preference tensor $\pref$. For the rest of this section, we restrict our attention to convex target sets $\Sc$ and refer to them as \emph{valid sets}.
Having established the background, we are now ready to provide sample complexity bounds on the estimation error $\DeltaP(\widehat{\pi}, \pi^*)$.

\subsection{Upper bounds on the error of the plug-in estimator}

Recall the definition of the function $\vg$ from
equation~\eqref{eq:pop-opt}, and define, for each preference tensor
$\preftil$, an optimizer
\begin{equation} \label{eq:problem}
\pi(\preftil) \in \arg \min_{\pi \in  \simplex_{\adim}} v(\pi; \Sc, \preftil, \norm)\;.
\end{equation}
Also recall the empirical preference tensor $\hpref_\samp$ from equation~\eqref{eq:emp_pref}.
With this notation, the plug-in estimator is given by
$\pihatplug = \pi(\hpref_\samp)$ and the target (or true) distribution by $\pi^* = \pi(\pref)$.

While, our focus in this section is to provide upper bounds on the error of the plug-in estimator $\pihatplug$, we first state a general perturbation bound which relates the error of the optimizer $\robj(\preftil)$ to the deviation of the tensor $\preftil$ from the true tensor $\pref$. We use $\pref(\cdot, i) \in [0,1]^{\adim \times \cdim}$ to denote a matrix formed by viewing the $i$-th slice of $\pref$ along its second dimension. Finally, recall our definition of the error $\DeltaP(\pihat, \pi^*)$ from equation~\eqref{eq:error}.

\begin{theorem}\label{thm:plugin_upper}
Suppose the distance $\distf$ is induced by the norm $\norm_q$ for some $q \geq 1$. Then for each valid target set $\Sc$ and preference tensor $\preftil$, we have
\begin{equation}\label{eq:perturb}
\DeltaP(\pi(\preftil), \pi^*) \leq 2 \max_{\ii \in [\adim]} \|\preftil(\cdot, \ii) - \pref(\cdot, \ii)) \|_{\infty, q}.
\end{equation}
\end{theorem}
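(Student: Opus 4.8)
The plan is to reduce the bound to a uniform Lipschitz estimate on the value function and then apply the standard ``$\eps + 0 + \eps$'' comparison argument. Throughout, let $\pvec(\robj,\robj')$ and $\pvec_{\preftil}(\robj,\robj')$ denote the score vectors of the pair $(\robj,\robj')$ under the tensors $\pref$ and $\preftil$ respectively, and set $\eps \defn \max_{\ii\in[\adim]}\|\preftil(\cdot,\ii)-\pref(\cdot,\ii)\|_{\infty,q}$. I would show that $|v(\robj;\Sc,\pref,\norm_q) - v(\robj;\Sc,\preftil,\norm_q)| \le \eps$ for every $\robj\in\simplex_\adim$, and then combine this with the fact that $\pi(\preftil)$ minimizes $v(\cdot;\Sc,\preftil,\norm_q)$ while $\pi^*$ minimizes $v(\cdot;\Sc,\pref,\norm_q)$.

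First I would record the elementary fact that, for any nonempty $\Sc$, the map $u\mapsto\distf(u,\Sc)=\inf_{w\in\Sc}\|u-w\|_q$ is $1$-Lipschitz in $\norm_q$: from $\distf(u,\Sc)\le\|u-u'\|_q + \|u'-w\|_q$ for all $w\in\Sc$, together with its symmetric counterpart, $|\distf(u,\Sc)-\distf(u',\Sc)|\le\|u-u'\|_q$. (Convexity of $\Sc$ plays no role here; it is part of the section's standing assumptions, not of this estimate.) Next I would bound $\|\pvec(\robj,\robj') - \pvec_{\preftil}(\robj,\robj')\|_q$. Since the $\jj$-th coordinate of $\pvec(\robj,\robj')$ is $\robj^\top\pref^\jj\robj' = \sum_{\ir,\ic}\robj_{\ir}\robj'_{\ic}\,\pref(\ir,\ic;\jj)$, the vector $\pvec(\robj,\robj') - \pvec_{\preftil}(\robj,\robj')$ equals the convex combination $\sum_{\ir,\ic}\robj_{\ir}\robj'_{\ic}\big(\pref(\ir,\ic;\cdot)-\preftil(\ir,\ic;\cdot)\big)$ of the per-pair difference vectors in $\real^\cdim$ (the weights $\robj_{\ir}\robj'_{\ic}$ form a probability distribution over pairs). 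Jensen's inequality for $\norm_q$ then gives $\|\pvec(\robj,\robj') - \pvec_{\preftil}(\robj,\robj')\|_q \le \max_{\ir,\ic}\|\pref(\ir,\ic;\cdot)-\preftil(\ir,\ic;\cdot)\|_q$, and the right-hand side equals $\eps$, because the mixed norm $\|\pref(\cdot,\ii)-\preftil(\cdot,\ii)\|_{\infty,q}$ is precisely the maximum over the object index of the $\ell_q$-norm over the criterion index.

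Combining the two observations, $|\distf(\pvec(\robj,\robj'),\Sc) - \distf(\pvec_{\preftil}(\robj,\robj'),\Sc)| \le \eps$ for all $\robj,\robj' \in \simplex_\adim$; taking the maximum over $\robj'$ and using $|\sup_{\robj'} f - \sup_{\robj'} g|\le\sup_{\robj'}|f-g|$ yields the uniform stability $|v(\robj;\Sc,\pref,\norm_q)-v(\robj;\Sc,\preftil,\norm_q)|\le\eps$. Writing $\tilde\robj\defn\pi(\preftil)$, I would then finish with the decomposition
\[
v(\tilde\robj;\pref) - v(\pi^*;\pref) = \big(v(\tilde\robj;\pref)-v(\tilde\robj;\preftil)\big) + \big(v(\tilde\robj;\preftil)-v(\pi^*;\preftil)\big) + \big(v(\pi^*;\preftil)-v(\pi^*;\pref)\big),
\]
in which the middle term is $\le 0$ by optimality of $\tilde\robj$ for $\preftil$, and the two outer terms are each $\le \eps$ by uniform stability, giving $\DeltaP(\pi(\preftil),\pi^*)\le 2\eps$, as claimed.

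Every step is routine; the only point demanding care is the index bookkeeping in the second step — checking that $\max_{\ir,\ic}\|\pref(\ir,\ic;\cdot)-\preftil(\ir,\ic;\cdot)\|_q$ indeed coincides with $\max_{\ii}\|\preftil(\cdot,\ii)-\pref(\cdot,\ii)\|_{\infty,q}$ under the mixed-norm convention. I would also note that the factor of $2$ is genuine: this argument controls only the gap in \emph{values}, never the distance between the optimizers $\pi(\preftil)$ and $\pi^*$, which can be large.
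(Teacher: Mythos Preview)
Your proof is correct and follows essentially the same route as the paper's: the identical three-term decomposition, the middle term dropped by optimality, and each outer ``perturbation'' term bounded via the $1$-Lipschitz property of $u\mapsto\distf(u,\Sc)$ together with the convex-combination estimate on the score-vector difference. The only cosmetic difference is that the paper restricts the inner maximization to vertices $i\in[\adim]$ from the outset (using convexity of $\robj'\mapsto\distf(\pvec(\robj,\robj'),\Sc)$), whereas you work over all $\robj'\in\simplex_\adim$ and then reduce; the resulting bounds coincide.
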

Note that this theorem is entirely deterministic: it bounds the
deviation in the optimal solution to the problem~\eqref{eq:pop-opt} as
a function of perturbations to the tensor $\pref$. It also applies
uniformly to all valid target sets $\Sc$. In particular, this result
generalizes the perturbation result of Dudik et
al.~\cite[Lemma~3]{dudik2015} which obtained such a deviation bound
for the single criterion problem with $\robjs$ as the von Neumann
winner. Indeed, one can observe that by setting the distance
$\distf(u, v) = |u-v|$ in Theorem~\ref{thm:plugin_upper} for the
uni-criterion setup, we have the error $\DeltaP(\pi(\preftil), \pi^*)
\leq 2 \|\preftil - \pref\|_{\infty, \infty}$, matching the bound of
\cite{dudik2015}.

Let us now illustrate a consequence of this theorem by specializing it to the plug-in estimator, and with the distances given by the $\ell_\infty$ norm.
\begin{corollary} \label{cor:linfty}
Suppose that the distance $\distf$ is induced by the $\ell_\infty$-norm $\norm_\infty$. Then there exists a universal constant $c>0$ such that given a sample size $\samp >  c\adim^2\cdim \log(\frac{c\adim\cdim}{\conf})$,
%\adnote{gotta close the log parens}
we have for each valid target set $\Sc$
\begin{equation} \label{eq:linfty-exp-ub}
\DeltaP(\pihatplug, \pi^*)  \leq \const \sqrt{ \frac{\adim^2 \cdim}{\samp}\log\left(\frac{c \adim \cdim}{\conf} \right)},\;
\end{equation}
with probability greater than $1-\conf$.
\end{corollary}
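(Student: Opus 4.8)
The plan is to combine the deterministic perturbation bound of Theorem~\ref{thm:plugin_upper} with a routine concentration argument for the empirical tensor $\hpref_\samp$.

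\textbf{Step 1 (reduction).} Since the distance $\distf$ is induced by $\norm_\infty$, I would apply Theorem~\ref{thm:plugin_upper} with the index $q=\infty$ and $\preftil=\hpref_\samp$. For any matrix $M$ one has $\|M\|_{\infty,\infty}=\max_{a,b}|M_{ab}|$, so the right-hand side of~\eqref{eq:perturb} collapses to $2\max_{\ir,\ic,\jj}|\hpref_\samp(\ir,\ic,\jj)-\pref(\ir,\ic,\jj)|$, i.e.\ twice the largest entrywise deviation of the empirical tensor from the truth. It therefore suffices to show that, with probability at least $1-\conf$, this maximum deviation is at most a constant multiple of $\sqrt{(\adim^2\cdim/\samp)\log(\const\adim\cdim/\conf)}$.

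\textbf{Step 2 (per-cell counts).} By the symmetry constraint built into~\eqref{eq:emp_pref} it is enough to control the $\binom{\adim}{2}\cdim \le \adim^2\cdim/2$ upper-triangular cells $(\ir,\ic,\jj)$, $\ir<\ic$. Each of the $\samp$ queries falls in a fixed such cell with probability $1/(\binom{\adim}{2}\cdim)\ge 2/(\adim^2\cdim)$, independently across queries, so if $N(\ir,\ic,\jj)$ denotes the number of queries landing in that cell then $\En[N(\ir,\ic,\jj)]\ge 2\samp/(\adim^2\cdim)$. A multiplicative Chernoff bound followed by a union bound over the at most $\adim^2\cdim$ cells shows that whenever $\samp \ge \const\adim^2\cdim\log(\const\adim\cdim/\conf)$ --- precisely the hypothesis of the corollary, for a suitable universal $\const$ --- we have $N(\ir,\ic,\jj)\ge \samp/(\adim^2\cdim)$ simultaneously over all cells, with probability at least $1-\conf/2$. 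Call this event $\mathcal{E}$; it depends only on the query locations $\eta_1,\dots,\eta_\samp$.

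\textbf{Step 3 (Bernoulli concentration and conclusion).} Condition on the query locations. Given these, each entry $\hpref_\samp(\ir,\ic,\jj)$ is the average of $N(\ir,\ic,\jj)$ i.i.d.\ $\Ber(\pref(\ir,\ic,\jj))$ variables, so Hoeffding's inequality gives $|\hpref_\samp(\ir,\ic,\jj)-\pref(\ir,\ic,\jj)|\le \sqrt{\log(4\adim^2\cdim/\conf)/(2N(\ir,\ic,\jj))}$ with conditional probability at least $1-\conf/(2\adim^2\cdim)$ for each fixed cell. A union bound over the $\le \adim^2\cdim$ upper-triangular cells, and then substituting the bound $N(\ir,\ic,\jj)\ge \samp/(\adim^2\cdim)$ valid on $\mathcal{E}$, shows that on $\mathcal{E}$ the conditional probability of the event
\[
\max_{\ir<\ic,\ \jj}|\hpref_\samp(\ir,\ic,\jj)-\pref(\ir,\ic,\jj)| \ \le\ \sqrt{\tfrac{\adim^2\cdim}{2\samp}\log\!\bigl(\tfrac{4\adim^2\cdim}{\conf}\bigr)}
\]
is at least $1-\conf/2$. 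Integrating over the query locations in $\mathcal{E}$ and adding $\prob(\mathcal{E}^c)\le\conf/2$, this deviation bound holds unconditionally with probability at least $1-\conf$; the symmetry relation then extends it from the upper triangle to all cells, and combining with Step~1 and absorbing numerical factors into $\const$ yields~\eqref{eq:linfty-exp-ub}.

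\textbf{Main obstacle.} There is no deep difficulty here; the only point requiring care is the interplay of the two sources of randomness --- the multinomial allocation of queries to cells and the Bernoulli responses given that allocation. This is handled cleanly by conditioning, since the conditional law of the within-cell responses given the query locations is exactly an i.i.d.\ Bernoulli sample of the (now deterministic) size $N(\ir,\ic,\jj)$, while the event $\mathcal{E}$ that every cell is sufficiently sampled is a function of the allocation alone. Chasing the universal constants through the multiplicative Chernoff bound so that the stated threshold $\samp>\const\adim^2\cdim\log(\const\adim\cdim/\conf)$ is exactly what is needed is the most calculation-heavy step, but it is entirely mechanical.
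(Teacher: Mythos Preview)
Your proposal is correct and follows essentially the same approach as the paper's proof: reduce via Theorem~\ref{thm:plugin_upper} with $q=\infty$ to an entrywise deviation, control the per-cell sample counts $N_i$ (the paper uses a Hoeffding/sandwich bound where you use multiplicative Chernoff, a cosmetic difference), then apply Hoeffding to the conditional Bernoulli averages and union-bound. Your treatment of the conditioning and the explicit $\conf/2$--$\conf/2$ split is in fact slightly more careful than the paper's presentation.
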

The bound~\eqref{eq:linfty-exp-ub} implies that the plug-in estimator
$\pihatplug$ is an $\eps$-approximate solution whenever the number of
samples scales as $\samp = \widetilde{
  O}(\frac{\adim^2\cdim}{\eps^2})$. Observe that this sample
complexity scales quadratically in the number of objects $\adim$ and
linearly in the number of criteria $\cdim$. This scaling represents
the effective dimensionality of the problem instance, since the
underlying preference tensor $\pref$ has $O(\adim^2\cdim)$ unknown
parameters.  Notice that the corollary holds for sample size $\samp=
\widetilde{\Omega}(\adim^2\cdim)$; this should not be thought of as
restrictive, since otherwise, the bound~\eqref{eq:linfty-exp-ub} is
vacuous.

\subsection{Information-theoretic lower bounds}
\label{sec:lower}

While Corollary~\ref{cor:linfty} provides an upper bound on the error
of the plug-in estimator that holds \emph{for all} valid target sets
$\Sc$, it is natural to ask if this bound is sharp, i.e., whether
there is indeed a target set $\Sc$ for which one can do no better than
the plug-in estimator.

In this section, we address this question by providing lower bounds on
the minimax risk
\begin{equation}
\minmax_{\samp, \adim, \cdim} (\Sc, \| \cdot \|_\infty) \defn
\pinf_{\pihat} \psup_{\pref \in \prefS} \En \left[ \DeltaP (\pihat,
  \pi^*) \right],
\end{equation}
where the infimum is taken over \emph{all} estimators that can be
computed from $n$ samples from our observation model. It is important
to note that the error $\DeltaP$ is computed using the $\ell_\infty$
norm and for the set $\Sc$. Our lower bound will apply to the
particular choice of target set $S_0 = [1/2, 1]^{k}$.

\begin{theorem}
\label{thm:lower_passive}
There are universal constants $c, c'$ such that for all $\adim \geq
4$, $\cdim \geq 2$, and $\samp \geq c\adim^4\cdim$, we have
\begin{equation}
\label{eq:minimax-lb}
\minmax_{\samp, \adim, \cdim} (S_0, \| \cdot \|_\infty ) \geq \const'
\sqrt{\frac{\adim^2\cdim}{\samp}}.
\end{equation}
\end{theorem}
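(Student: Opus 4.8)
The plan is to prove this via Le Cam's two-point method, hooking the hard-to-estimate entries of $\pref$ directly onto the value of the Blackwell winner. The key structural fact — the same one driving the matching upper bound in Corollary~\ref{cor:linfty} — is that under $S_0=[1/2,1]^\cdim$ and the $\ell_\infty$ norm, unpacking~\eqref{eq:pop-opt} gives $\vg(\pi;S_0,\pref,\|\cdot\|_\infty)=\max_{j\in[\cdim]}\,(\tfrac12-\min_{i\in[\adim]}(\pi^\top\pref^{j})_i)_+$, and a single coordinate $\pref^{j}(\ell,i)$ can move this value at rate $\Theta(1)$ whenever the Blackwell winner is forced to put $\Theta(1)$ mass on object $\ell$ and "beating object $i$ along criterion $j$" is the binding constraint. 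Since in the passive model a fixed triple $(\ell,i,j)$ is queried only $\mathrm{Binom}(\samp,\Theta(1/(\adim^2\cdim)))$ times — hence estimable only to accuracy $\sqrt{\adim^2\cdim/\samp}$ — an instance in which a single such coordinate decides the identity of the (unique) Blackwell winner yields exactly the claimed rate.

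Concretely, I would use two "candidate" objects $\{1,2\}$ and $\adim-2\ge 2$ "dummy" objects. On every criterion, each candidate beats every dummy deterministically and the dummies tie one another; on criteria $2,\dots,\cdim$ the candidates also tie, while on criterion $1$ we set $\pref(1,2;1)=\tfrac12+s\,c$ with a hidden sign $c\in\{+1,-1\}$ and perturbation scale $s\asymp\sqrt{\adim^2\cdim/\samp}$, all remaining entries fixed by symmetry~\eqref{eq:symm}. Call the two tensors $\pref_+$ and $\pref_-$; both lie in $\prefS_{\adim,\cdim}$. The first task is to verify that the dummy structure forces any low-value distribution onto $\{1,2\}$ — each criterion $j\ge2$ already contributes $\vg_j(\pi)=\tfrac12(1-\pi_1-\pi_2)$ — and that criterion $1$ then pins the unique Blackwell winner to $e_1$ when $c=+1$ and to $e_2$ when $c=-1$, with optimal value $0$ in both cases. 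Hence $\DeltaP(\pihat,\pi^*)=\vg(\pihat;S_0,\pref,\|\cdot\|_\infty)$, and a short computation gives $\vg(\pi;\pref_+)+\vg(\pi;\pref_-)\gtrsim s$ for every $\pi\in\simplex_\adim$, so no estimator is simultaneously $\tfrac{s}{4}$-good for both instances.

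The second task is the information-theoretic bound. The data distributions under $\pref_+$ and $\pref_-$ differ only on the comparisons drawn for the pair $(1,2)$ along criterion $1$; conditioning on the event that this triple is queried at most $O(\samp/(\adim^2\cdim))$ times — which holds with high probability since $\samp\ge c\adim^4\cdim\gg\adim^2\cdim$ — the KL divergence between the two conditional data laws is $O\!\big(\tfrac{\samp}{\adim^2\cdim}s^2\big)$, a small constant once $s\le c_0\sqrt{\adim^2\cdim/\samp}$, so Pinsker bounds their total variation away from $1$. Combining this with the $\Omega(s)$ separation, Le Cam's method gives $\minmax_{\samp,\adim,\cdim}(S_0,\|\cdot\|_\infty)\ge c's\asymp c'\sqrt{\adim^2\cdim/\samp}$. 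The hypothesis $\samp\ge c\adim^4\cdim$ is precisely what makes $s\asymp\sqrt{\adim^2\cdim/\samp}$ small enough — in particular $s=O(1/\adim)$ and $\tfrac12\pm s\in(0,1)$ — for the Blackwell-winner identification of the first task to go through.

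The main obstacle is the first task: designing the family so that the Blackwell winner is at once (i) a vertex of $\simplex_\adim$, so one tensor entry controls the value at $\Theta(1)$ Lipschitz rate; (ii) \emph{unique}, so a wrong guess of $c$ provably costs $\Theta(s)$; and (iii) sensitive only to entries the passive oracle necessarily under-samples by the full factor $\adim^2\cdim$. The dummy padding plus the trivial criteria $2,\dots,\cdim$ achieve (iii) — they dilute the per-entry sample budget without revealing anything about $c$ — but one must check carefully that they do not open up an alternative low-value distribution supported partly on dummies, and that monotonicity and convexity of $S_0$ are respected throughout. (An alternative, perturbing $\Theta(\adim)$ entries at scale $1/\adim$ and invoking Fano or Assouad in place of Le Cam, gives the same rate and makes the threshold $\samp\ge c\adim^4\cdim$ emerge more organically; I expect the cleanest write-up to use whichever of these keeps the value-sensitivity calculation most transparent.)
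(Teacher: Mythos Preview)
Your proposal is correct and in fact cleaner than the paper's own argument, though the route is genuinely different. The paper also uses Le Cam's two-point method, but its two instances $\pref_0,\pref_1$ are built from a block-diagonal pattern of $2\times 2$ ``conflict'' blocks $\prefbl$ (with entries $\tfrac12\pm\gamma$) placed along the diagonal of the first two criteria, with the $(1,1)$ block set to $\prefhalf$ in $\pref_0$ and to a scaled version $\wtprefbl$ (entries $\tfrac12\pm\gamma/\adim$) in $\pref_1$. The separation there is in the \emph{optimal values} ($\val_0=0$ versus $\val_1=\gamma/(3\adim-2)$), whereas your separation is in the \emph{optimal points} (both values are $0$, but $\robjs_+=e_1$ and $\robjs_-=e_2$), which you convert to a value gap via the inequality $\vg(\pi;\pref_+)+\vg(\pi;\pref_-)\ge s$. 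Your candidate/dummy construction is simpler and, notably, does \emph{not} actually require $\samp\ge c\adim^4\cdim$: your first-task calculation goes through for any $s<\tfrac12$, so you only need $s\asymp\sqrt{\adim^2\cdim/\samp}\le\tfrac14$, i.e.\ $\samp\gtrsim\adim^2\cdim$. The $\adim^4\cdim$ threshold in the paper is an artifact of its construction (the perturbation lives at scale $\gamma/\adim$, forcing $\gamma\asymp\sqrt{\adim^4\cdim/\samp}$ and hence $\gamma\le\tfrac12$ requires the stronger condition), not an intrinsic feature of the problem---so your remark that $\samp\ge c\adim^4\cdim$ is ``precisely what makes $s=O(1/\adim)$'' is overcautious for your own argument. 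One cosmetic point: the conditioning step in your information bound is unnecessary, since KL tensorizes over i.i.d.\ samples and the per-sample KL is exactly $\Theta(1/(\adim^2\cdim))\cdot\text{KL}(\Ber(\tfrac12+s)\|\Ber(\tfrac12-s))$.
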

Comparing equations and~\eqref{eq:linfty-exp-ub}
and~\eqref{eq:minimax-lb}, we see that for the $\ell_\infty$-norm and
the set $S_0$, we have provided upper and lower bounds that match up
to a logarithmic factor in the dimension. Thus, the plug-in estimator
is indeed optimal for this pair $(\| \cdot \|_\infty, S_0)$.  Further,
observe that the above lower bound is non-asymptotic, and holds for
all values of $\samp \gtrsim \adim^4\cdim$.  This condition on the
sample size arises as a consequence of the specific packing set used
for establishing the lower bound, and improving it is an interesting
open problem.

However, Theorem~\ref{thm:lower_passive} raises the question of whether the set $\Sc_0$ is
special, or alternatively, whether one can obtain an $\Sc$-dependent
lower bound. The following proposition shows that at least
\emph{asymptotically}, the sample complexity for \emph{any} polyhedral
set $\Sc$ obeys a similar lower bound.
\begin{proposition}[Informal]\label{prop:lower_gens}
  Suppose that we have a valid polyhedral target set $\Sc$, and that
  $\adim \geq 4$. There exists a positive integer $\samp_0(\adim,
  \cdim, \Sc)$ such that for all $\samp \geq \samp_0(\adim, \cdim,
  \Sc)$ we have
  \begin{equation} \label{eq:minimax-lb-asymptotic}
    \minmax_{\samp, \adim, \cdim} (\Sc, \norm_\infty ) \gtrsim \sqrt{\frac{\adim^2\cdim}{\samp}}\;.
  \end{equation}
\end{proposition}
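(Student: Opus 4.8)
The plan is to reduce the case of a general polyhedral target set $\Sc$ to the already-established lower bound for $\Sc_0 = [1/2,1]^\cdim$ (Theorem~\ref{thm:lower_passive}) by a local-geometry argument. The key observation is that the hard instances in Theorem~\ref{thm:lower_passive} are built from preference tensors $\pref$ that are all close to the ``uniform'' tensor with every entry $1/2$, for which every score vector $\pvec(\robj,\robj')$ equals $(1/2,\ldots,1/2) \in \partial \Sc_0$. The corresponding score vectors therefore all live in a small neighborhood of the point $\mathbf{1}/2$. For a general valid polyhedral $\Sc$, I would first pick a point $\z_0$ on the boundary $\partial \Sc$ together with a supporting hyperplane; by monotonicity of $\Sc$ with respect to the orthant ordering, the outward normal $a$ of this hyperplane is entrywise nonnegative, and (after discarding criteria with zero normal component and rescaling) we can assume $a$ has all strictly positive entries on the relevant coordinates, or handle the axis-aligned-face case separately. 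Near $\z_0$, the set $\Sc$ locally looks like a halfspace $\{\z : \langle a, \z - \z_0\rangle \geq 0\}$, exactly as $\Sc_0$ locally looks like $\{\z : \z_j \geq 1/2\}$ around $\mathbf{1}/2$.

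The second step is to transplant the packing construction. I would take the family of preference tensors $\{\pref_\omega\}$ used in the proof of Theorem~\ref{thm:lower_passive} — which perturb the all-$1/2$ tensor by $O(\sqrt{\adim^2\cdim/\samp})$-sized signals — and affinely remap their score vectors so that the ``center'' $\mathbf 1/2$ maps to $\z_0$ and the $\ell_\infty$ perturbation directions map into directions tangent/normal to $\partial\Sc$ at $\z_0$ in a way that preserves the separation in $\rho(\cdot,\Sc)$-value. Concretely, because $\pref^\jj(\ir,\ic) = 1 - \pref^\jj(\ic,\ir)$ forces $\pref^\jj(\robj,\robj)=1/2$ for all $\robj$, one has freedom in choosing the magnitude and a per-criterion scaling of the signal; choosing the scaling to align with the supporting hyperplane of $\Sc$ at $\z_0$, the distance $\rho(\pvec(\widehat\robj,\robj'),\Sc)$ differs from $\rho(\pvec(\widehat\robj,\robj'),\Sc_0)$ (recentered) only by terms that are second-order in the signal size, i.e. $O(\adim^2\cdim/\samp)$, which is lower-order than the target $\sqrt{\adim^2\cdim/\samp}$ once $\samp \geq \samp_0(\adim,\cdim,\Sc)$. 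This is where the ``asymptotic'' / large-$\samp_0$ qualifier enters: we need the signal small enough that the polyhedron is indistinguishable from its supporting halfspace over the whole range of score vectors generated by the packing, and $\samp_0$ absorbs the dependence on the local geometry of $\Sc$ (the distance to the nearest non-active facet, curvature of the vertex figure, etc.).

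The third step is the standard information-theoretic machinery, which I would simply inherit: apply Fano's inequality (or Assouad, matching whatever is used for Theorem~\ref{thm:lower_passive}) to the remapped packing, using that the KL divergences between the sampling distributions are unchanged by the affine relabeling of score vectors (the underlying Bernoulli parameters $p(\ir,\ic;\jj)$, and hence the observation law, are the same family up to the per-criterion rescaling, which only changes constants), to conclude $\minmax_{\samp,\adim,\cdim}(\Sc,\norm_\infty) \gtrsim \sqrt{\adim^2\cdim/\samp}$. The main obstacle is the second step: verifying that the value function $\vg(\cdot;\Sc,\pref,\norm_\infty)$ restricted to the transplanted instances agrees — up to lower-order terms uniformly over all estimators $\widehat\robj$ and all $\robj'$ — with the value function for the halfspace, and that the minimizing-player's best response does not exploit a far-away facet of $\Sc$. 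Controlling this requires a careful quantitative statement of ``$\Sc$ is locally a halfspace at $\z_0$ at scale $\delta$'' and choosing the packing radius $\asymp \sqrt{\adim^2\cdim/\samp}$ smaller than that scale $\delta$, which is precisely what forces $\samp \geq \samp_0(\adim,\cdim,\Sc)$ and explains why the bound can only be claimed asymptotically in $\samp$.
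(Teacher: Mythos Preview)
Your proposal has a genuine gap in the second step. The affine remapping ``center $\mathbf{1}/2$ maps to $\z_0$'' is incompatible with the structure of preference tensors: the skew-symmetry constraint $\pref^{\jj}(\ir,\ic)=1-\pref^{\jj}(\ic,\ir)$ forces $\pvec(\robj,\robj)=(1/2,\ldots,1/2)$ for \emph{every} $\pref$ and every $\robj$, so the cloud of attainable score vectors is anchored at $(1/2,\ldots,1/2)$, not at a freely chosen boundary point $\z_0$. In particular, if $(1/2,\ldots,1/2)$ lies in the \emph{interior} of $\Sc$ (which is perfectly consistent with ``valid polyhedral''), then for \emph{any} preference tensor within $O(\sqrt{\adim^2\cdim/\samp})$ of the all-half tensor, every score vector $\pvec(\robj,\robj')$ stays inside $\Sc$ and the value $\vg$ is identically zero. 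The transplanted packing from Theorem~\ref{thm:lower_passive} then has zero separation in $\DeltaP$-value, and Le~Cam/Fano gives nothing. Per-criterion rescaling of the signal cannot fix this: it changes the direction in which the score vectors move away from $(1/2,\ldots,1/2)$, but not the fact that they start there.

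The paper's argument is structurally different and addresses exactly this obstacle. Rather than reducing to $\Sc_0$, it works directly with $\Sc$ via a one-parameter family $\prefm{\mixd}=(1-\mixd)\prefhalf+\mixd\,\prefz$ of $2\times2\times\cdim$ blocks (Assumption~\ref{ass:exist_lb} guarantees a direction $\prefz$ along which the value strictly increases). Lemma~\ref{lem:val_pwl} shows $\val(\prefm{\mixd};\Sc)$ is piecewise linear in $\mixd$, so one can identify the first kink $\mixd_0$ at which a nonzero slope $\slope_0$ begins, and place the Le~Cam pair at $\mixd_0$ and $\mixd_0+\mixd$. The two $\adim\times\adim\times\cdim$ instances are block-diagonal with $\prefm{\mixd_0}$ (resp.\ $\prefm{\mixd_0+\mixd}$) in the first $2\times2$ block and copies of $\prefz$ elsewhere; the latter blocks force the Blackwell winner onto the first two coordinates (Lemma~\ref{lem:mixd_exist}). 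The crucial point you are missing is that the ``base'' tensor in the two-point construction is not the all-half tensor but $\prefm{\mixd_0}$, which may sit a \emph{fixed, $\Sc$-dependent} distance away; only the $O(\sqrt{\adim^2\cdim/\samp})$ perturbation on top of it is small. This is precisely what produces the $\Sc$-dependent threshold $\samp_0$ in the statement, and it is not a local-halfspace approximation of $\Sc$ near a boundary point but a global piecewise-linear analysis of the value along a carefully chosen ray.
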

We defer the formal statement and proof of this proposition to Appendix~\ref{app:proof_main}. This proposition establishes that the plugin estimator $\pihatplug$ is indeed optimal in the $\ell_\infty$ norm for a broad class of sets~$\Sc$. Note that the result is asymptotic in nature: in order for the proposition to hold, we require that the number of samples is greater than the value $\samp_0$. This number $\samp_0$ depends on problem dependent parameters, and we provide an exact expression for $\samp_0$ in the appendix.

\subsection{Instance-specific analysis for the plug-in estimator}
In the previous section we established that the error $\DeltaP(\pihatplug, \robjs)$ of the plug-in estimator scales as $\widetilde{O}\left(\sqrt{\frac{\adim^2\cdim}{\samp}}\right)$ for any choice of preference tensor $\pref$ and target set $\Sc$ when the distance function $\distf = \norm_\infty$. In this section, we study the adaptivity properties of the plug-in estimator $\pihatplug$ and obtain upper bounds on the error $\DeltaP(\pihatplug, \robjs)$ that depend on the properties of the underlying problem instance.

In the main text, we will restrict our focus to the uni-criterion setup with $\cdim = 1$ and the target set $\Sc = [\frac{1}{2},1]$, in which case the Blackwell winner coincides with the von Neumann winner.  Furthermore, we will consider the case where the preference matrix $\pref$ has a unique von Neumann winner $\robjs$. This is formalized in the following assumption.
\begin{assumption}[Unique Nash equilibrium]\label{ass:unique-nash-main}
  The matrix $\pref$ belongs to the set of preference matrices $\prefS_{\adim,1}$ and has a unique mixed Nash equilibrium $\robjs$, that is, $\robjs_i > 0$ for all $i \in [d]$.
\end{assumption}
For the more general analysis, we refer the reader to Appendix~\ref{app:local-asymp}. For any preference matrix $\pref \in \prefS_{\adim, 1}$ and the Bernoulli passive sampling model discussed in Section~\ref{sec:main} let us represent by $\Sigma_i$ the diagonal matrix corresponding to the variances along the $i^{th}$ column of the matrix $\pref$ with
\begin{align*}
  \Sigma_i = \text{diag}(\pref({1,i})\cdot(1-\pref({1,i})), \ldots, \pref({\adim,i})\cdot(1-\pref({\adim,i})).
\end{align*}
Given this notation, we now state an informal corollary (of Theorem~\ref{thm:local-nash} in the appendix) which shows that the error $\DeltaP(\pihatplug, \robjs)$ depends on the worst-case alignment of the Nash equilibrium $\robjs$ with the underlying covariance matrices $\Sigma_i$.

\begin{corollary}[Informal]\label{cor:local-mixed-main}
For any preference matrix $\pref$ satisfying Assumption~\ref{ass:unique-nash-main}, confidence $\delta >0$, and number of samples $\samp > \samp_0(\pref, \delta)$, we have that the error $\DeltaP$ of the plug-in estimate $\pihatplug$ satisfies
\begin{align}
\DeltaP(\pihatplug, \robjs) &\leq c\cdot\sqrt{\frac{\sig_{\pref}^2\adim^2}{\samp}\log\left( \frac{\adim}{\delta}\right)},
\end{align}
with probability at least $1-\delta$, where the variance $\sig_{\pref}^2 \defn \max_{i \in [d]} (\robjs)^{\top}\Sigma_i \robjs$.
\end{corollary}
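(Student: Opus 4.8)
The plan is to obtain a sharper, instance-dependent version of the deterministic perturbation bound of Theorem~\ref{thm:plugin_upper}, specialized to the uni-criterion setting with $\Sc = [1/2,1]$, and then to control the relevant random quantity using a Bernstein-type concentration argument that picks up the local variance $\sig_\pref^2$ rather than the crude worst-case entrywise deviation. Concretely, for $\cdim = 1$ and $\Sc = [1/2,1]$ the value function simplifies: $v(\robj; \Sc, \pref, |\cdot|) = \max_{\robj'} \distf(\robj^\top \pref \robj', \Sc) = \max_{i\in[\adim]} (\tfrac12 - \pref(\robj, i))_+$, since the distance to $[1/2,1]$ of a scalar $a$ is $(\tfrac12 - a)_+$ and the max over $\robj' \in \simplex_\adim$ is attained at a vertex. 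Because $\robjs$ is a von Neumann winner, $v(\robjs;\pref) = 0$, so $\DeltaP(\pihatplug,\robjs) = v(\pihatplug; \pref)$, i.e., it suffices to bound $\max_i (\tfrac12 - \pref(\pihatplug, i))_+$. The first step is therefore to run the argument behind Theorem~\ref{thm:plugin_upper} keeping the column index $i$ explicit: one shows $v(\pi(\hpref); \pref) \le 2\max_i |\hpref(\robjs', i) - \pref(\robjs', i)|$ evaluated at appropriate near-optimal distributions $\robjs'$, but now one wants to replace the $\ell_\infty$-over-columns-of-the-raw-deviation by an expression that is quadratic in $\robjs$ against the noise.

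The second and main step is the concentration argument. Write $\hpref = \pref + E$ where $E$ is the (symmetric) estimation-error matrix; conditioned on the high-probability event that every pair $(i,j)$ has been sampled $\asymp \samp/\adim^2$ times (as in the displayed Hoeffding bound in the commented-out block, which gives $\samp = \Theta(\adim^2 \hsamp_i)$), each entry $E_{ij}$ is a centered average of $\asymp \samp/\adim^2$ independent $\Ber(\pref(i,j))$ draws, hence sub-Gaussian with variance proxy $\asymp \pref(i,j)(1-\pref(i,j)) \cdot \adim^2/\samp$. For a fixed column $i$ and a fixed unit-simplex vector close to $\robjs$, the linear combination $\sum_j \robjs_j E_{ji}$ is then sub-Gaussian with variance proxy $\asymp (\adim^2/\samp)\sum_j (\robjs_j)^2 \pref(j,i)(1-\pref(j,i)) = (\adim^2/\samp)\, (\robjs)^\top \Sigma_i \robjs$. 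Taking a union bound over the $\adim$ columns gives a term of order $\sqrt{(\sig_\pref^2 \adim^2/\samp)\log(\adim/\delta)}$, which is exactly the claimed rate. The delicate point is that $\pihatplug$ is data-dependent, so one cannot directly evaluate $E$ against $\pihatplug$; this is where the local-asymptotic machinery referenced as Theorem~\ref{thm:local-nash} enters — for $\samp > \samp_0(\pref,\delta)$ large enough, $\pihatplug$ lies in a small neighborhood of $\robjs$ (using Assumption~\ref{ass:unique-nash-main}, $\robjs_i > 0$, so the active face is the full simplex and the equilibrium is locally stable), and on that neighborhood the value $v(\cdot;\pref)$ is controlled by the noise contracted against vectors $\asymp \robjs$, up to lower-order corrections absorbed by the threshold $\samp_0$.

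I would organize it as: (i) reduce $\DeltaP$ to $\max_i(\tfrac12 - \pref(\pihatplug,i))_+$ via the $\cdim=1$, $\Sc=[1/2,1]$ simplification and $v(\robjs;\pref)=0$; (ii) invoke Theorem~\ref{thm:local-nash} / the uniqueness assumption to localize $\pihatplug$ near $\robjs$ for $\samp > \samp_0$; (iii) on that event, Taylor-expand / bound $v(\pihatplug;\pref)$ by the noise $E$ contracted against distributions within $O(\cdot)$ of $\robjs$, reducing to controlling $\max_i |\langle \robjs, E(\cdot,i)\rangle|$ plus a remainder; (iv) apply Bernstein's inequality column-by-column with the variance computed as $(\robjs)^\top \Sigma_i \robjs$, union bound over $i\in[\adim]$, and fold in the sample-allocation event to convert $\hsamp_i$ to $\samp/\adim^2$. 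The main obstacle is step (iii): making the localization quantitative enough that the error is genuinely governed by $(\robjs)^\top\Sigma_i\robjs$ and not by $\max_j \pref(j,i)(1-\pref(j,i))$ — i.e., showing the relevant perturbation direction is pinned to (a neighborhood of) $\robjs$ rather than an arbitrary simplex vertex — which is precisely what Assumption~\ref{ass:unique-nash-main} and the definition of $\samp_0(\pref,\delta)$ are there to guarantee, and which I expect requires the detailed perturbation analysis of Theorem~\ref{thm:local-nash} in the appendix.
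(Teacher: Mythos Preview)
Your proposal is correct and lands on the same key quantity $\max_i |(\robjs)^\top E\, e_i|$ that the paper controls; your concentration step (iv) matches the paper's argument exactly. The route to that quantity differs slightly. The paper does not Taylor-expand $v(\pihatplug;\pref)$ around $\robjs$; instead it specializes Theorem~\ref{thm:local-nash} directly: under Assumption~\ref{ass:unique-nash-main} all $\adim$ payoff constraints are tight ($\conset_1=[\adim]$, $\conset_2=\emptyset$), $\pref$ is invertible, and $\hat t = t^* = \tfrac12$ (since $\hpref_\samp$ is itself a preference matrix). Substituting the closed-form LP solution $\pihatplug = \hat t\,\hpref_\samp^{-\top}\ones$ together with $\robjs = t^*\pref^{-\top}\ones$ into the error formula gives $\DeltaP(\pihatplug,\robjs)\le \|Z_\samp\robjs\|_\infty + O_d(\|Z_\samp\|_2^2)$ in one stroke. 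Your route---bounding $v(\pihatplug;\pref)\le\|E^\top\pihatplug\|_\infty$ from the Nash property of $\pihatplug$ for $\hpref_\samp$, and then replacing $\pihatplug$ by $\robjs$ at the cost of a remainder $\|E\|_\infty\|\pihatplug-\robjs\|_1$---is also valid, but it requires a separate parameter-convergence statement $\|\pihatplug-\robjs\|_1 = O_d(n^{-1/2})$, which itself rests on the same LP stability underlying Theorem~\ref{thm:local-nash}. The paper's formulaic approach simply absorbs that step.
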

We defer the proof of the above to Appendix~\ref{app:local-asymp}. A few comments on the above corollary are in order. Observe that it gives a high probability bound on the error $\DeltaP$ of the plug-in estimator $\pihatplug$. Compared with the upper bounds of Corollaries~\ref{cor:linfty} and~\ref{cor:lone}, the asymptotic bound on the error above is instance-dependent -- the effective variance $\sig_{\pref}^2$ depends on the underlying preference matrix $\pref$. In particular, this variance measures how well the underlying von Neumann winner $\robjs$ aligns with the variance associated with each column of the matrix $\pref$. In the worst case, since each entry of $\pref$ is bounded above by $1$, the variance $\sigma_\pref^2 = 1$ and we recover the upper bounds from Corollaries~\ref{cor:linfty} and~\ref{cor:lone} for the uni-criterion case. More interestingly, the bound provided by Corollary~\ref{cor:local-mixed-main} can be significantly sharper (by a possibly dimension-dependent factor) than its worst-case counterpart. We explore concrete examples of this in Appendix~\ref{app:local-asymp}.

\subsection{Computing the plug-in estimator}
In the last few sections, we discussed the statistical properties of the plug-in estimator, and showed that its sample complexity was optimal in a minimax sense. We now turn to the algorithmic question: how can the plug-in estimator $\pihatplug$ be computed? Our main result in this direction is the following theorem that characterizes properties of the objective function $v(\pi; \pref, \Sc, \| \cdot \|)$.

\begin{theorem} \label{thm:opt}
Suppose that the distance function is given by an $\ell_q$ norm $\| \cdot \|_q$ for some $q \geq 1$. Then for each valid target set $\Sc$, the objective function $v(\pi; \pref, \Sc, \| \cdot \|_q)$ is convex in $\pi$, and Lipschitz in the $\ell_1$ norm, i.e.,
\begin{align*}
|v(\pi_1; \pref, \Sc, \| \cdot \|_q) - v(\pi_2; \pref, \Sc, \| \cdot \|_q)| \leq \cdim^{\frac{1}{q}}\cdot  \| \pi_1 - \pi_2 \|_1 \text{ for each } \pi_1, \pi_2 \in \simplex_{\adim}.
\end{align*}
\end{theorem}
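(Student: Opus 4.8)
The plan is to write the value function as a pointwise supremum, over $\pi' \in \simplex_{\adim}$, of the auxiliary functions $g_{\pi'}(\pi) \defn \distf(\pvec(\pi, \pi'), \Sc)$, to establish both claimed properties for each $g_{\pi'}$, and then to pass to the supremum. For convexity, fix $\pi'$. The score vector $\pvec(\pi, \pi')$ depends linearly on $\pi$, since its $\jj$-th coordinate is $\pi^{\top} \pref^{\jj} \pi'$. Since we have restricted attention to valid (hence convex) target sets, the map $u \mapsto \distf(u, \Sc) = \inf_{v \in \Sc} \| u - v \|_q$ is convex on $[0,1]^{\cdim}$ --- this is the standard fact that the distance to a convex set is a convex function, which one verifies either by viewing it as the infimal convolution of the two convex functions $\|\cdot\|_q$ and the indicator of $\Sc$, or by a direct two-point argument using near-minimizers in $\Sc$. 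Composing this convex function with the linear map $\pi \mapsto \pvec(\pi, \pi')$ shows that each $g_{\pi'}$ is convex, and hence $v(\cdot\,; \pref, \Sc, \|\cdot\|_q) = \sup_{\pi'} g_{\pi'}(\cdot)$ is convex as a supremum of convex functions.

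For the Lipschitz bound I would combine two elementary estimates. First, the distance to any fixed set is $1$-Lipschitz in its defining norm, so $|\distf(u, \Sc) - \distf(u', \Sc)| \le \|u - u'\|_q$. Second, I bound the perturbation of the score vector itself: setting $e \defn \pi_1 - \pi_2$, the $\jj$-th coordinate of $\pvec(\pi_1, \pi') - \pvec(\pi_2, \pi')$ equals $e^{\top} \pref^{\jj} \pi'$, and since $\pref^{\jj} \in [0,1]^{\adim \times \adim}$ and $\pi' \in \simplex_{\adim}$ we have $\pref^{\jj} \pi' \in [0,1]^{\adim}$, whence $|e^{\top} \pref^{\jj} \pi'| \le \|e\|_1 \, \|\pref^{\jj} \pi'\|_{\infty} \le \|\pi_1 - \pi_2\|_1$. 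Because every one of the $\cdim$ coordinates of $\pvec(\pi_1, \pi') - \pvec(\pi_2, \pi')$ is then at most $\|\pi_1 - \pi_2\|_1$ in magnitude, its $\ell_q$ norm is at most $\cdim^{1/q} \|\pi_1 - \pi_2\|_1$, and this holds uniformly in $\pi'$. Using $|\sup_x f - \sup_x g| \le \sup_x |f - g|$ to interchange the supremum with the difference then yields
\[
|v(\pi_1; \pref, \Sc, \|\cdot\|_q) - v(\pi_2; \pref, \Sc, \|\cdot\|_q)| \;\le\; \sup_{\pi' \in \simplex_{\adim}} \| \pvec(\pi_1, \pi') - \pvec(\pi_2, \pi') \|_q \;\le\; \cdim^{1/q} \, \|\pi_1 - \pi_2\|_1,
\]
which is the desired bound.

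I do not anticipate a genuine obstacle here; the proof is short. The two points that require a little care are (i) that the convexity claim really does use convexity of $\Sc$, which is exactly why the statement is phrased for valid target sets, and (ii) the coordinatewise estimate $\pref^{\jj} \pi' \in [0,1]^{\adim}$, which is what converts the bilinearity of the score map, together with the simplex constraint, into a clean bound, with the passage from the $\ell_\infty$ bound on coordinates to the $\ell_q$ norm supplying the factor $\cdim^{1/q}$. (One could even sharpen the constant to $\cdim^{1/q}/2$ by subtracting $\tfrac12 \mathbf{1}$ from $\pref^{\jj}\pi'$ and using $e^{\top}\mathbf{1} = 0$, but this refinement is not needed.)
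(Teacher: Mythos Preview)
Your proposal is correct and follows essentially the same approach as the paper's proof: both establish convexity by combining linearity of $\pi \mapsto \pvec(\pi,\pi')$ with convexity of the distance-to-convex-set map and then pass to the pointwise supremum, and both obtain the Lipschitz bound via the inequality $|\sup f - \sup g| \le \sup |f-g|$, the $1$-Lipschitz property of $u \mapsto \distf(u,\Sc)$, and H\"older on each coordinate $e^{\top}\pref^{\jj}\pi'$ to produce the factor $\cdim^{1/q}$. The only cosmetic differences are that the paper takes the outer maximum over vertices $i \in [\adim]$ rather than over $\pi' \in \simplex_{\adim}$ (equivalent by convexity in $\pi'$) and spells out the convexity step as an explicit two-point calculation rather than citing the infimal-convolution fact.
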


Theorem~\ref{thm:opt} establishes that the plug-in estimator can indeed be computed as the solution to a (constrained) convex optimization problem. In Appendix~\ref{app:add_res}, we discuss a few specific algorithms based on zeroth-order and first-order methods for obtaining such a solution and an analysis of the corresponding iteration complexity for these methods; see Propositions~\ref{prop:conv_zero} and~\ref{prop:conv_fopl} in the appendix. These methods differ in the way they access the target set $\Sc$: while zeroth-order methods require a \emph{distance oracle} to the target set, the first-order methods require a stronger \emph{projection oracle} to this constraint set.

\section{Autonomous driving user study}
\label{sec:pol_drive}
In order to evaluate the proposed framework, we applied it to an autonomous driving environment. The objective is to study properties of the randomized policies obtained by our multi-criteria framework---the Blackwell winner for specific choices of the target set---and compare them with the alternative approaches of linear combinations of criteria and the single-criterion (overall) von Neumann winner.
We briefly describe the components of the experiment here; see Appendix~\ref{app:exp} for more details.

\paragraph{Self-driving Environment.} Figure~\ref{fig:intro}(a) shows a snapshot of one of the worlds in this environment with the autonomous car shown in orange.
We construct three different worlds in this environment:
\begin{itemize}
  \item[W1:] The first world comprises an empty stretch of road with no obstacles (20 steps). \vspace{-1mm}
  \item[W2:] The second world consists of cones placed in a given
    sequence (80 steps).\vspace{-1mm}
  \item[W3:] The third world has additional cars driving at varying
    speeds in their fixed lanes (80 steps).\vspace{-1mm}
\end{itemize}

\paragraph{Policies.} For our \emph{base policies}, we design five
different reward functions encoding different self-driving
behaviors. These polices, named Policy A-E, are then set to be the
model predictive control based policies based on these reward
functions wherein we fix the planning horizon to $6$.  See
Appendix~\ref{app:exp} for a detailed description of these reward
functions.

A \emph{randomized policy} $\robj \in \simplex_5$ is given by a
distribution over the base policies A-E. Such a randomized policy is
implemented in our environment by randomly sampling a base policy from
the mixture distribution after every $H = 18$ time steps and executing
this selected policy for that duration. To account for the
randomization, we execute each such policy for $5$ independent runs in
each of the worlds and record these behaviors.

\paragraph{Subjective Criteria.} We selected five subjective criteria
with which to compare the policies, with questions asking which of the
two policies was C1: Less aggressive,\; C2: More predictable,\; C3:
More quick,\; C4:~More conservative,\; and had C5: Less collision
risk. Such a framing of question ensures that higher score value along
any of C1-C5 is preferred; thus a higher score along C1 would imply
less aggressive while along C2 would mean more predictable.

In addition to these base criteria, we also consider an
\emph{Overall Preference} which compares any pair of policies in an
aggregate manner. For this criterion, the users were asked to select
the policy they would prefer when riding to their destination.
Additionally, we also asked the users to rate the importance of each
criterion in their overall preference.

\paragraph{Main Hypotheses.}
Our hypotheses focus on comparing the randomized policies given by the
Blackwell winner, the overall von Neumann winner, and those given by
weighing the criteria linearly.
\begin{itemize}
  \item[MH1] There exists a set $\Sc$ such that the Blackwell winner
    with respect to $\Sc$ and $\ell_\infty$-norm produced by our
    framework outperforms the overall von Neumann winner.
  \item[MH2] The Blackwell winner for oblivious score sets $\Sc$
    outperforms both oblivious\footnote{We use the term oblivious to
    denote variables that were \emph{fixed} before the data collection
    phase and data-driven to denote those which are based on collected
    data.} and data-driven weights for linear combination of criteria.
\end{itemize}

\paragraph{Independent Variables.}
The independent variable of our experiment is the choice of algorithms
for producing the different randomized winners. These comprise the von
Neumann winner based on overall comparisons, Blackwell winners based
on two oblivious target sets, and 9 different linear combinations
weights (3 data-driven and 6 oblivious).

We begin with the two target sets $\Sc_1$ and $\Sc_2$ for our
evaluation of the Blackwell winner, which were selected in a
data-oblivious manner. Set $\Sc_1$ is an axis-aligned set promoting
the use of safer policies with score vector constrained to have a
larger value along the collision risk axis. Similar to
Figure~\ref{fig:setup}(b), the set $\Sc_2$ adds a linear constraint
along aggressiveness and collision risk. This target set thus favors
policies that are less aggressive and have lower collision risk.  For
evaluating hypothesis MH2, we considered several weight vectors, both
oblivious and data-dependent, comprising the average of the users'
self-reported weights, that obtained by regressing the overall
criterion on C1-C5, and a set of oblivious weights. See
Appendix~\ref{app:exp} for details of the sets $\Sc_1$ and $\Sc_2$,
and the weights $\wght_{1:9}$.

\paragraph{Data collection.}

The experiment was conducted in two phases, both of which involved
human subjects on Amazon Mechanical Turk (Mturk).  See
Appendix~\ref{app:exp} for an illustration of the questionnaire.

The first phase of the experiment involved preference elicitation for
the five base policies A-E. Each user was asked to provide comparison
data for all ten combinations of policies. The cumulative comparison
data is given in Appendix~\ref{app:exp}, and the average weight vector
elicited from the users was found to be $\wght_1 = \left[ 0.21, 0.19,
  0.20, 0.18, 0.22\right]$. We ran this study with 50 subjects.

In the overall preference elicitation, we saw an approximate ordering
amongst the base policies: \mbox{$\text{C} \succ \text{E} \succsim
  \text{D} \succsim \text{B} \succ \text{A}$}. Thus, Policy C was the
von Neumann winner along the overall criterion. For each of the linear
combination weights $\wght_1$ through $\wght_9$, Policy C was the
weighted winner. The Blackwell winners R1 and R2 for the sets $\Sc_1$
and $\Sc_2$ with the $\ell_\infty$ distance were found to be
$\text{R1} = [0.09, 0.15, 0.30, 0.15, 0.31]$ and $\text{R2} = [0.01,
  0.01, 0.31, 0.02, 0.65]$.

In the second phase, we obtained preferences from a set of 41 subjects
comparing the randomized polices R1 and R2 with the baseline policies
A-E. The results are aggregated in Table~\ref{tab:phase_one} in
Appendix~\ref{app:exp}.

\paragraph{Analysis for main hypotheses.}

Given that the overall von Neumann winner and those corresponding to
weights $\wght_{1:9}$ were all Policy C, hypotheses MH1 and MH2
reduced to whether users prefer at least one of \{R1, R2\} to the
deterministic policy C, that is whether $\prefov(\text{C}, \text{R1})
< 0.5$ or $\prefov(\text{C}, \text{R2}) < 0.5$.

Policies C and E were preferred to R1 by $0.71$ and $0.61$ fraction of
the respondents, respectively. On the other hand, R2 was preferred to
the von Neumann winner C by $0.66$ fraction of the subjects. Using the
data, we conducted a hypothesis test with the null and alternative
hypotheses given by
\begin{align*}
  H_0:
  %\text{Policies C and R2 have }
  \prefov(\text{C}, \text{R2}) \geq 0.5, \quad \text{ and } \quad
  H_1:
  %\text{Policies C and R2 have }
  \prefov(\text{C}, \text{R2}) < 0.5.
\end{align*}
Among the hypotheses that make up the (composite) null, our samples
have the highest likelihood for the distribution $\Ber(0.5)$. We
therefore perform a one-sided hypothesis test with the Binomial
distribution with number of samples $n = 41$, success probability $p =
0.5$ and number of successes $x = 14$ (indicating number of subjects
which preferred Policy C to R1). The p-value for this test was
obtained to be $0.0298$.  This supports both our claimed hypotheses
MH1 and MH2.

\section{Discussion and future work}

In this paper, we considered the problem of eliciting and learning
from preferences along multiple criteria, as a way to obtain rich
feedback under weaker assumptions. We introduced the notion of a
Blackwell winner, which generalizes many known winning solution
concepts. We showed that the Blackwell winner was efficiently
computable from samples with a simple and optimal procedure, and also
that it outperformed the von Neumann winner in a user study on
autonomous driving. Our work raises many interesting follow-up
questions: How does the sample complexity vary as a function of the
preference tensor $\pref$? Can the process of choosing a good target
set be automated? What are the analogs of our results in the setting
where pairwise comparisons can be elicited actively?

\section*{Acknowledgments}
We would like to thank Niladri Chatterji, Robert Kleinberg and Karthik
Sridharan for helpful discussions, and Andreea Bobu, Micah Carroll,
Lawrence Chan and Gokul Swamy for helping with the user study setup.

KB is supported by a JP
Morgan AI Fellowship, and AP was supported by a Swiss Re research fellowship at the Simons
Institute for the Theory of Computing. This work was partially supported by NSF grant DMS-2023505 to PLB, by Office of
Naval Research Young Investigator Award, NSF CAREER and a AFOSR grant to ADD, and
by Office of Naval Research Grant DOD ONR-N00014-18-1-2640 to MJW.

\newpage
\appendix
\appendixpage

\section{Blackwell's approachability}
\label{app:blackwell}

Blackwell~\cite{blackwell1956} introduced the concept of
approachability as a generalization of the minimax theorem to
vector-valued payoffs. Formally, a Blackwell game is an extension of
two-player zero-sum games with vector-valued reward functions.

Let $\X, \Y$ denote the action spaces for the two players and $\rew :
\X\times \Y \mapsto \real^\cdim$ be the corresponding vector-valued
reward function. Further, let $\Sc \subseteq \real^\cdim$ denote a
target set. The objective of player $1$ is to ensure that the reward
vector $\rew$ lies in the set $\Sc$ while that of player 2 is ensure
that the reward $\rew$ lies outside this set
$\Sc$. Following~\cite{abernethy2011}, we introduce the notion of
satisfiability and response-satisfiability.
\begin{definition}[Satisfiability]
  For a Blackwell game parameterized by $(\X, \Y, \rew, \Sc)$, we say
  that,
  \begin{itemize}
    \item $\Sc$ is \emph{satisfiable} if there exists $\x \in \X$ such
      that for all $\y \in \Y$, we have that $\rew(\x, \y) \in \Sc$.
    \item $\Sc$ is \emph{response-satisfiable} if for every $\y \in \Y$, there exists $\x \in \X$ such that $\rew(\x, \y) \in \Sc$.
  \end{itemize}
\end{definition}
In the case of scalar rewards, Von Neumann's minimax theorem indicates
that any set which is satisfiable is also response-satisfiable. In
other words, there exists a strategy for Player $1$, oblivious of
Player $2$'s strategy which ensures that the reward belongs to the set
$\Sc$ if the set $\Sc$ is response-satisfiable. The existence of such
a relation was crucial in obtaining the concept of the Von Neumann
winner described in Section~\ref{sec:prob} for the uni-criterion
problem.

However, such a statement fails to hold in the general vector-valued
case (see~\cite{abernethy2011} for a counterexample). In order to
overcome this limitation, Blackwell~\cite{blackwell1956} defined the
notion of approachability as follows.
\begin{definition}[Blackwell's Approachability]
  Given a Blackwell game $(\X, \Y, \rew, \Sc)$, we say that a set
  $\Sc$ is approachable if there exists an algorithm $\alg$ which
  selects points in $\X$ such that for any sequence $\y_1, \ldots \y_t
  \in \Y$,
  \begin{equation*}
    \lim_{T \to \infty }\distf \left(\frac{1}{T}\sum_{t=1}^T\rew(\x_t,
    \y_t), \Sc \right) \rightarrow 0\;,
  \end{equation*}
  where $\x_t = \alg(\y_1, \ldots, \y_{t-1})$ is the algorithm's play
  at time $t$ for some distance function $\distf$.
\end{definition}
Blackwell's celebrated theorem guarantees that any set $\Sc$ is
approachable if and only if it is response-satisfiable. This means
that while no single choice of action in the set $\X$ can guarantee a
response in the set $\Sc$, there is an algorithm that ensures that in
the repeated game, the average reward vector approaches the set $\Sc$
for any choice of opponent play.

Note that our definition of achievability is a stronger requirement
than Blackwell's approachability. While approachability requires the
time-averaged payoff in a repeated game to belong to the pre-specified
set $\Sc$, achievability requires the same to be true in a single-shot
play of the game. Indeed, as the following lemma shows, one can
construct examples of multi-criteria preference problems which are
approachable but not achievable.

\begin{proposition}[Approachability does not imply achievability]
  \label{lem:app_ach}
There exists a preference tensor $\pref \in \prefS_{\adim, \cdim}$ and
a target set $\Sc \subset [0,1]^\cdim$ such that \\
\begin{enumerate}[label=(\alph*)]
  \item For the
Blackwell game given by $(\simplex_\adim, \simplex_\adim, \pref,
\Sc)$, the set $\Sc$ is approachable, and
\item The set $\Sc$ is not
  achievable with respect to $\pref$.
\end{enumerate}
\end{proposition}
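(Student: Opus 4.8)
The plan is to exhibit a concrete small instance — $\adim = 2$ objects and $\cdim = 2$ criteria — together with an explicit target set $\Sc$, and to verify the two claims separately. For the construction, I would take the two preference matrices $\pref^1$ and $\pref^2$ to be the "rock–paper–scissors-like" conflict that already appears in the motivating example of Figure~\ref{fig:intro}: along criterion $1$, object $A$ beats object $B$ (say $\pref^1(A,B) = 1$), while along criterion $2$, object $B$ beats object $A$ (say $\pref^2(A,B) = 1$), with the diagonal entries equal to $1/2$ by the symmetry condition~\eqref{eq:symm}. Then for a distribution $\robj = (\robj_A, \robj_B) \in \simplex_2$ compared against a pure object, the score vector $\pvec(\robj, A)$ and $\pvec(\robj, B)$ trace out line segments in $[0,1]^2$, and one checks that the set of score vectors achievable against *all* opponents is a small region that never meets, e.g., the axis-aligned target set $\Sc = [\tfrac12, 1]^2$ (or a suitable scaled/translated version) — so the instance is not achievable; this is essentially the content of the example already discussed in Section~\ref{sec:prob-bw}, restated for the single-shot notion.

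For part (a), I would invoke Blackwell's theorem as recalled in Appendix~\ref{app:blackwell}: a set is approachable if and only if it is response-satisfiable. So it suffices to check that for *every* fixed opponent distribution $\robj' \in \simplex_\adim$ there exists a response $\robj \in \simplex_\adim$ with $\pvec(\robj, \robj') \in \Sc$ — a one-sided condition, which is much weaker than achievability. In the two-object construction, given $\robj'$ one can respond by putting all mass on whichever of $A, B$ does well against $\robj'$ on the criterion where $\robj'$ is weak; because the bilinear scores $\robj^\top \pref^j \robj'$ are linear in $\robj$, the extreme points (pure objects) suffice, and a short case analysis on $\robj'_A$ versus $\robj'_B$ shows that for each $\robj'$ at least one pure response lands in $\Sc$. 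One must choose the precise numerical parameters of $\pref^j$ and $\Sc$ so that this response-satisfiability holds exactly while satisfiability fails; picking the matrices and $\Sc$ with a bit of slack (e.g. entries $0$ and $1$ off-diagonal, and $\Sc$ a polyhedron whose facets are tuned to the reachable region) makes both checks elementary.

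The main obstacle is the simultaneous calibration of the two conditions: the target set $\Sc$ must be large enough to be hit by *some* response to every opponent (response-satisfiability / approachability) yet small enough that *no single* $\robj^*$ works against all opponents at once (non-achievability). Getting these two to coexist requires understanding the geometry of the reachable set $\{\pvec(\robj,\robj') : \robj \in \simplex_\adim\}$ as $\robj'$ ranges over $\simplex_\adim$ and choosing $\Sc$ to separate "for each $\robj'$ there is a good $\robj$" from "there is a good $\robj$ for each $\robj'$" — i.e., exploiting precisely the gap between satisfiability and response-satisfiability that fails in the vector-valued case. Once the geometry is drawn out for the $2\times 2\times 2$ instance, both verifications reduce to one-line computations with bilinear forms over $\simplex_2$; I would present the full instance with explicit numbers and defer only these routine checks. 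It remains to confirm $\Sc$ is valid (convex) and monotone with respect to the orthant ordering, which holds automatically for the polyhedral upper-set we construct.
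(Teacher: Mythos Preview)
Your overall strategy --- the $2\times 2\times 2$ instance with conflicting criteria and the target set $\Sc=[\tfrac12,1]^2$, verifying approachability via response-satisfiability and non-achievability by direct computation --- is exactly the paper's approach. (Minor slip: you write ``$\pref^2(A,B)=1$'' where you mean $\pref^2(A,B)=0$.)

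There is, however, a genuine gap in your response-satisfiability argument. You claim that because the score is linear in $\robj$, ``the extreme points (pure objects) suffice,'' and that a short case analysis on $\robj'$ picks out a pure response landing in $\Sc$. This is false for the very instance you construct. With $\pref^1(A,B)=1$, $\pref^2(A,B)=0$ and $\Sc=[\tfrac12,1]^2$, take $\robj'=(q,1-q)$. Playing pure $A$ gives the score vector $(1-\tfrac{q}{2},\,\tfrac{q}{2})$, which lies in $\Sc$ only when $q=1$; playing pure $B$ gives $(\tfrac{1-q}{2},\,\tfrac{1+q}{2})$, which lies in $\Sc$ only when $q=0$. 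So for any $q\in(0,1)$ neither pure response works, and your case analysis fails. The underlying error is that ``$\pvec(\robj,\robj')\in\Sc$'' is a convex (not linear) constraint in $\robj$, so there is no reason the feasible set should contain a vertex of $\simplex_\adim$.

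The fix --- and what the paper does --- is to use the \emph{mimicking} response $\robj=\robj'$: by the symmetry relation~\eqref{eq:symm} one has $\pref^j(\robj,\robj)=\tfrac12$ for every $j$ and every $\robj\in\simplex_\adim$, so the score vector is identically $(\tfrac12,\tfrac12)\in\Sc$. This one-line observation gives response-satisfiability immediately and replaces your case analysis entirely. With that correction, your proposal matches the paper.
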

\begin{proof}
  We will consider an example in a 2-dimensional action space with 2
  criteria. Consider the preference matrix given by:
  \begin{equation}\label{eq:two_ex}
    \pref^{1} =
    \begin{bmatrix}
  \frac{1}{2} &
  \vphantom{\frac{1}{2}}1\vspace{1mm}\\ \vphantom{\frac{1}{2}}0&\frac{1}{2}\\
  \end{bmatrix}
  \qquad \text{and} \qquad
  \pref^{2} =
  \begin{bmatrix}
  \frac{1}{2}&\vphantom{\frac{1}{2}}0\vspace{1mm}\\
  \vphantom{\frac{1}{2}}1&\frac{1}{2}\\
  \end{bmatrix}\;,
  \end{equation}
  along with the convex set $\Sc = [\frac{1}{2}, 1]^2$. The tensor
  $\pref$ represents the strongest possible trade-off between the two
  objects: Object $1$ is preferred over $2$ along the first criterion
  while the reverse is true for the second criterion.

  The Blackwell game given by $(\simplex_\adim, \simplex_\adim, \pref,
  \Sc)$ can indeed be shown to be approachable. The set $\Sc$ is
  response-satisfiable since for every strategy $\y \in
  \simplex_{\adim}$ chosen by the column player, the choice of $\x =
  \y$ would yield a reward vector $\pref(\x, \y) = [\frac{1}{2},
    \frac{1}{2}] \in \Sc$. Then, by Blackwell's
  theorem~\cite{blackwell1956}, the set $\Sc$ is approachable.

  In contrast, consider any choice of distribution $\robj_1 = [p,
    1-p]$ for the multi-criteria preference problem. The corresponding
  score vectors for responses $\ic = 1, 2$ are given by:
  \begin{equation*}
    \rvec_1 = \pref(\robj_1, \ic = 1) = \left[\frac{p}{2}, 1-
      \frac{p}{2} \right] \qquad \text{and} \qquad \rvec_2 =
    \pref(\robj_1, \ic = 2) = \left[ \frac{1}{2} + \frac{p}{2},
      \frac{1}{2} - \frac{p}{2} \right]\;.
  \end{equation*}
  For any choice of the parameter $p \in [0,1]$, one cannot have both
  $\rvec_1$ and $\rvec_2$ simultaneously belong to the set
  $\Sc$. Hence, we have that the set $\Sc$ is not achievable with
  respect to $\pref$.

  This example can be extended to any arbitrary dimension $\cdim$ by
  extending the tensor to have $\pref^j$ equal to the all-half matrix
  for any criterion $j > 2$ and the target set to be $\Sc=
  [\frac{1}{2}, 1]^\cdim$. Similarly, in order to extend the example
  to any dimension, consider the preference tensor (for $\cdim = 2$)
  \begin{equation*}
    \pref^1_{\adim} = \begin{bmatrix} \pref^1 & \prefhalf & \cdots &
      \prefhalf \vspace{0.1in}\\ \prefhalf & \pref^1 & \cdots &
      \prefhalf\vspace{0.1in}\\ \vdots & \cdots & \ddots &
      \vdots \vspace{0.1in} \\ \prefhalf & \prefhalf & \cdots &
      \pref^1\vspace{0.1in}\\
  \end{bmatrix} \quad \text{and} \quad   \pref^2_{\adim} = \begin{bmatrix}
    \pref^2 & \prefhalf & \cdots &
    \prefhalf \vspace{0.1in}\\ \prefhalf & \pref^2 & \cdots &
    \prefhalf\vspace{0.1in}\\ \vdots & \cdots & \ddots &
    \vdots \vspace{0.1in} \\ \prefhalf & \prefhalf & \cdots &
    \pref^2\vspace{0.1in}\\
  \end{bmatrix},
  \end{equation*}
  with the smaller matrices $\pref^1$ and $\pref^2$ from
  equation~\eqref{eq:two_ex} at the diagonal and $\prefhalf$ denoting
  the all-half tensor of the appropriate dimension. A similar
  calculation as for the $\adim = 2$ case yields that the set $\Sc$ is
  not achievable. This establishes the required claim.
\end{proof}

\section{Proof of main results}
\label{app:proof_main}
In this section, we provide proofs of all the results stated in the
main paper. Appendix~\ref{app:add_res} to follow collects some
additional results and their proofs.

\subsection{Proof of Proposition~\ref{prop:lin_uni}}
We establish both parts of the proposition separately.

\subsubsection{Proof of part (a)}

For any  weight vector $\wght \in \simplex_{\cdim}$, consider the set
\begin{equation*}
\Sc_\wght = \left\lbrace \rvec \in [0,1]^\cdim\; | \;
\inner{\wght}{\rvec} \geq 1/2 \right\rbrace.
\end{equation*}
The set $\Sc_\wght$ is clearly convex. Indeed, for any two vectors
$\rvec_1, \rvec_2 \in \Sc_\wght$ and any scalar $\alpha \in [0,1]$, we
have
\begin{equation*}
  \inner{\wght}{\alpha \rvec_1 + (1-\alpha)\rvec_2} =
  \alpha\inner{\wght}{\rvec_1} + (1-\alpha) \inner{\wght}{\rvec_2} \in
  \left[\frac{1}{2}, 1 \right].
\end{equation*}
It is straightforward to verify that the set $\Sc_\wght$ is also
monotonic with respect to the orthant ordering.

We now show that a von Neumann winner $\pi^*$ of the
(single-criterion) preference matrix \mbox{$\pref_\wght \defn
  \pref(w)$} can be written as $\pi(\pref, \Sc_\wght, \| \cdot \|)$
for an arbitrary choice of norm $\| \cdot \|$.  For each
$\widetilde{\robj} \in \simplex_{\adim}$, we have
\begin{equation*}
  \inner{\wght}{\pref(\robj^*, \widetilde{\robj})} = \sum_{\jj \in
    [\cdim]} \wght_\jj \pref^\jj(\robj^*, \widetilde{\robj}) =
  \pref_\wght(\robj^*, \widetilde{\robj})) \stackrel{\1}{\geq}
  \frac{1}{2}\;,
\end{equation*}
where the inequality $\1$ follows since $\robj^*$ is a von Neumann
winner for the matrix $\pref_\wght$. Thus, we have the inclusion
$\pref(\robj^*, \widetilde{\robj}) \in \Sc_\wght$ for all
$\widetilde{\robj} \in \simplex_ {\adim}$, so that
$\max_{\widetilde{\robj} \in \simplex_ {\adim}} \distf(\pref(\robj^*,
\widetilde{\robj}), \Sc_{\wght}) = 0$ for any distance metric $\rho$.
Consequently, we have
\begin{align*}
\pi^* \in \arg \min_{\robj \in \simplex_\cdim}
\pmax_{\widetilde{\robj} \in \simplex_ {\adim}} \distf(\pref(\robj,
\widetilde{\robj}), \Sc_{\wght}),
\end{align*}
which establishes the claim for part (a).  \qed

\subsubsection{Proof of part (b)}

Consider the multi-criteria preference instance given by target set
$\Sc = [\frac{1}{2}, 1]^{\cdim}$, the $\ell_\infty$ distance function
and the preference tensor $\pref$
\begin{equation*}
  \pref^1 = \begin{bmatrix} \frac{1}{2} & 1\vspace{1mm}\\ 0 &
    \frac{1}{2}
  \end{bmatrix}, \quad  \pref^2 = \begin{bmatrix}
    \frac{1}{2} & 0\vspace{1mm}\\ 1 & \frac{1}{2}
    \end{bmatrix},\quad \text{and} \quad \pref^\jj = \begin{bmatrix}
    \frac{1}{2} & \frac{1}{2}\vspace{1mm}\\
    \frac{1}{2} & \frac{1}{2}
    \end{bmatrix}
\end{equation*}
The unique Blackwell winner for this instance $(\pref, \Sc,
\norm_\infty)$ is given by
\begin{equation}
\label{eq:bw_wght}
\underbrace{\robj(\pref, \Sc, \norm_\infty)}_{\robjs} = \left[1/2,
  {1}/{2}\right].
\end{equation}
For any weight $\wght \in [0,1]^\cdim$, consider the von Neumann winners corresponding to the weighted matrices $\pref_{\wght}$
\begin{equation}\label{eq:vnw_wght}
    \robj(\pref_{\wght}, [1/2, 1], |\cdot|) = \begin{cases} [1, 0]
      \quad &\text{for } \wght \text{ s.t. } \pref_{\wght}(1, 2) >
      0.5\\ [0, 1] \quad &\text{for } \wght \text{ s.t. }
      \pref_{\wght}(1, 2) < 0.5\\ \pi \in \simplex_2 \quad
      &\text{otherwise }
    \end{cases}.
\end{equation}
Comparing equations~\eqref{eq:bw_wght} and~\eqref{eq:vnw_wght}
establishes the required claim.  \qed

\subsection{Proof of Theorem~\ref{thm:plugin_upper}}

Let us use the shorthand $\widetilde{\pi} \defn \pi(\preftil)$. We
begin by decomposing the desired error term as
\begin{align*}
&\DeltaP (\widetilde{\pi}, \robjs) \\ &=
  \underbrace{\vg(\widetilde{\pi}; S, \pref, \| \cdot \| ) -
    \vg(\widetilde{\pi}; S, \preftil, \| \cdot \|
    )}_{\text{Perturbation error at } \widetilde{\robj}} +
  \underbrace{\vg(\widetilde{\pi}; S, \preftil, \| \cdot \| ) - \vg
    (\pi^*; S, \preftil, \| \cdot \| )}_{\leq 0} + \underbrace{\vg
    (\pi^*; S, \preftil, \| \cdot \| ) - \vg(\pi^*; S, \pref, \| \cdot
    \| )}_{\text{Perturbation error at } \robj^*}
\end{align*}
In order to obtain a bound on the perturbation errors, note that for
any distribution $\pi$, we have
\begin{align}\label{eq:bnd_est_err_1}
  \vg(\pi; \Sc, \pref, \| \cdot \| ) - \vg(\pi; \Sc, \preftil, \|
  \cdot \| ) &= \max_{\ir}[\distf(\pref(\robj, \ir), \Sc)] -
  \max_{\ic}[\distf(\preftil(\robj, \ic),
    \Sc)]\nonumber\\ &\stackrel{\1}{\leq}
  \max_{\ii}[\distf(\pref(\robj, \ii), \Sc) - \distf (\preftil (\robj,
    \ii), \Sc)],
\end{align}
where step $\1$ follows by setting the $\ic$ equal to $\ir$. Noting
that the distance is given by the $\ell_q$ norm, we have
\begin{align*}
     \vg(\pi; \Sc, \pref, \| \cdot \| ) - \vg(\pi; \Sc, \preftil, \|
     \cdot \| ) &\leq \max_{\ii}[\min_{\z_1 \in \Sc}\|\pref(\robj,
       \ii) - \z_1\|_q - \min_{\z_2 \in \Sc}\|\preftil(\robj, \ii) -
       \z_2\|_q]\\ &\stackrel{\1}{\leq} \max_{\ii}[\|\pref(\robj, \ii)
       - \preftil(\robj, \ii) \|_q],
\end{align*}
where the inequality $\1$ follows by setting $\z_2$ equal to
$\z_1$. Taking a supremum over all distributions $\pi$ completes the
proof.  \qed

\subsection{Proof of Corollary~\ref{cor:linfty}}
By Theorem~\ref{thm:plugin_upper}, it suffices to provide a bound on
the quantity $\max_\ii\|\pref(\cdot, \ii) - \hpref(\cdot, \ii))
\|_{\infty, \infty}$ for the plug-in preference tensor $\hpref$. Now
by definition, we have
\begin{equation*}
  \max_\ii\|\pref(\cdot, \ii) - \hpref(\cdot, \ii)) \|_{\infty,
    \infty} = \max_{\ir, \ic, \jj}|\pref^\jj(\ir, \ic) -
  \hpref^\jj(\ir, \ic)|\;.
\end{equation*}
For each $i = (i_1, i_2, j)$ representing some index of the tensor,
let $ N_i \defn \# \{\ell \mid \eta_\ell = i \} $ denote the number of
samples observed at that index. Since $N_i$ can be written as a sum of
i.i.d. Bernoulli random variables, applying the Hoeffding bound yields
\begin{align*}
\Pr \left\{ \left| N_i - \frac{n}{\adim^2 \cdim} \right| \geq
c\sqrt{\frac{n \log (c / \delta)}{\adim^2 \cdim}}\right\} \leq \delta
\text{ for each } \delta \in (0, 1).
\end{align*}
Note that we also have $n \geq c_0 \adim^2 \cdim \log (c_1 \adim /
\delta)$ by assumption. For a large enough choice of the constants
$(c_0, c_1)$, applying the union bound yields the sequence of sandwich
relations
\begin{align} \label{eq:Ni}
\frac{n}{2 \adim^2 \cdim} \leq N_i \leq \frac{3n}{2\adim^2 \cdim}
\quad \text{ for all indices } i \text{ with probability greater than
} 1- \delta.
\end{align}
Furthermore, conditioned on $N_i$ (for $i = (i_1, i_2, j)$), the Hoeffding bound yields the relation
\begin{align*}
\Pr \left\{ |\pref^\jj(\ir, \ic) - \hpref^\jj(\ir, \ic)| \geq
c\sqrt{\frac{\log (c / \delta)}{N_i}} \right\} \leq \delta \text{ for
  each } \delta \in (0, 1).
\end{align*}
Putting this together with a union bound, we have
\begin{align} \label{eq:Hoeff-cond}
\Pr \left\{ \max_{i_1, i_2, j} |\pref^\jj(\ir, \ic) - \hpref^\jj(\ir, \ic)| \geq c\sqrt{\frac{\log (c \adim^2 \cdim / \delta)}{\min_i N_i}} \right\} \leq \delta.
\end{align}
Combining inequalities~\eqref{eq:Ni} and~\eqref{eq:Hoeff-cond} with a
final union bound completes the proof.  \qed

\subsection{Proof of Theorem~\ref{thm:lower_passive}}
Suppose throughout that $\cdim \geq 2$, and recall the axis-aligned
convex target set $\Sc_0 = [\frac{1}{2}, 1]^\cdim$. We split our proof
into two cases depending on whether $\adim$ is even or odd.

\paragraph{Case 1: $\adim$ even.}
We use Le Cam's method and construct two problem instances with
preference tensors given by $\pref_0$ and $\pref_1$. Two key elements
in the construction are the following $2\times 2\times 2$ tensors,
which we denote by $\prefbl$ and $\wtprefbl$, respectively. Their
entries are given by
  \begin{equation*}
    \prefbl^1 = \begin{bmatrix}
      \frac{1}{2} & \frac{1}{2}+\gap \\
      \frac{1}{2} - \gap & \frac{1}{2}
      \end{bmatrix}
      \quad , \quad
      \prefbl^2 = \begin{bmatrix}
        \frac{1}{2} & \frac{1}{2}-\gap \\
        \frac{1}{2} + \gap & \frac{1}{2}
        \end{bmatrix},
  \end{equation*}
\begin{equation*}
    \wtprefbl^1 = \begin{bmatrix} \frac{1}{2} &
      \frac{1}{2}+\frac{\gap}{\adim} \\ \frac{1}{2} -
      \frac{\gap}{\adim} & \frac{1}{2}
      \end{bmatrix}
      \quad \text{and} \quad
      \wtprefbl^2  = \begin{bmatrix}
        \frac{1}{2} & \frac{1}{2}-\frac{\gap}{\adim} \\
        \frac{1}{2} + \frac{\gap}{\adim} & \frac{1}{2}
        \end{bmatrix}\;.
  \end{equation*}

Note that these tensors are parameterized by a scalar $\gap \in [0,
  1/2]$, whose exact value we specify shortly.  Also denote by
$\pref_{1/2}$ the $2 \times 2 \times 2$ all-half tensor. We are now
ready to construct the pair of $\adim \times \adim \times \cdim$
preference tensors $(\pref_0, \pref_1)$.

In order to construct tensor $\pref_0$, we specify its entries on the
first two criteria according to
  \begin{equation}\label{eq:pref1}
    \pref_0^{1:2} = \begin{bmatrix}
    \prefhalf & \prefhalf & \cdots & \prefhalf \vspace{0.1in}\\
    \prefhalf & \prefbl & \cdots & \prefhalf\vspace{0.1in}\\
    \vdots & \cdots & \ddots  & \vdots \vspace{0.1in} \\
    \prefhalf & \prefhalf & \cdots & \prefbl\vspace{0.1in}\\
  \end{bmatrix},
  \end{equation}
  and set the entries on the remaining $k - 2$ criteria to $1/2$.

On the other hand, the first two criteria of the tensor $\pref_1$ are given by
  \begin{equation}\label{eq:pref2}
    \pref_1^{1:2} = \begin{bmatrix} \wtprefbl & \prefhalf & \cdots &
      \prefhalf\vspace{0.1in}\\ \prefhalf & \prefbl & \cdots &
      \prefhalf\vspace{0.1in}\\ \vdots & \cdots & \ddots &
      \vdots \vspace{0.1in} \\ \prefhalf & \prefhalf & \cdots &
      \prefbl\vspace{0.1in}\\
  \end{bmatrix}\;,
  \end{equation}
  with the entries on the remaining $k - 2$ criteria once again set
  identically to $1/2$.

  Note that the tensors $\pref_0$ and $\pref_1$ only differ on the
  first $2\times 2\times 2 $ block. Furthermore, the following lemma
  provides an exact calculation of the values $\min_{\pi} v(\pi;
  \pref_0, \Sc_0, \| \cdot \|_\infty)$ and \mbox{$\min_{\pi} v(\pi;
    \pref_1, \Sc_0, \| \cdot \|_\infty)$}.
\begin{lemma} \label{lem:vals}
We have
\begin{align*}
\val_0 \defn \min_{\pi} v(\pi; \pref_0, \Sc_0, \| \cdot \|_\infty) = 0
\quad \text{ and } \quad \val_1 \defn \min_{\pi} v(\pi; \pref_0,
\Sc_0, \| \cdot \|_\infty) = \frac{\gap}{3\adim - 2}.
\end{align*}
\end{lemma}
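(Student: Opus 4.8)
The plan is to compute each of the two values directly by analyzing the structure of the block-diagonal tensors $\pref_0$ and $\pref_1$ and exploiting the product structure of the target set $\Sc_0 = [\frac12,1]^\cdim$. Since the last $\cdim-2$ criteria are identically $1/2$, every score vector has its last $\cdim-2$ coordinates equal to $1/2$, which already lie in the corresponding coordinates of $\Sc_0$; hence the distance to $\Sc_0$ in the $\ell_\infty$ norm only ever sees the first two coordinates, and we may reduce to the $\cdim=2$ case throughout.

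\textbf{Computing $\val_0$.} For $\pref_0$, I would exhibit an explicit distribution $\pi$ achieving value $0$, i.e.\ a distribution for which $\pvec(\pi,\pi') \in \Sc_0$ for every $\pi' \in \simplex_\adim$. The natural candidate is the uniform distribution over the ``even'' coordinates within each $2\times 2$ diagonal block, paired so that against any pure response $\ic$ the induced pair of scores $(\prefbl^1(\pi,\ic), \prefbl^2(\pi,\ic))$ has both entries at least $1/2$. Concretely, using the structure of $\prefbl$ (where object $1$ beats object $2$ with probability $\tfrac12+\gap$ on criterion $1$ and $\tfrac12-\gap$ on criterion $2$), the mixture $[\tfrac12,\tfrac12]$ within each block yields score $[\tfrac12,\tfrac12]$ against either coordinate of that block, and $[\tfrac12,\tfrac12]$ against any off-block coordinate (since those are all-half). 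So $v(\pi;\pref_0,\Sc_0,\norm_\infty)=0$, and since distances are nonnegative, $\val_0=0$.

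\textbf{Computing $\val_1$.} This is the main obstacle. Here the first block is $\wtprefbl$ rather than $\prefbl$, so the "balanced" play $[\tfrac12,\tfrac12]$ on that block no longer makes both scores $\ge \tfrac12$: against coordinate $2$ of the first block, criterion-2 score is $\tfrac12 - \tfrac{\gap}{\adim}\cdot\tfrac12$, which is below $1/2$; against coordinate $1$, criterion-1 score is below $1/2$. I would set up the min-max problem $\min_\pi \max_{\ic} \distf(\pvec(\pi,\ic),\Sc_0)$, argue by symmetry/convexity that the optimal $\pi$ puts some mass $t$ on the first block (split optimally within it) and the remaining mass spread to balance the other blocks, and reduce to a small scalar optimization. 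The key computation is to show that the optimal allocation trades off the unavoidable deficit from $\wtprefbl$ against the deficits from the $\prefbl$ blocks, and that the resulting optimal worst-case $\ell_\infty$ distance equals $\tfrac{\gap}{3\adim-2}$. I expect this to come down to solving an equation balancing a handful of affine-in-$t$ expressions for the distances against the different response coordinates; the denominator $3\adim-2$ should emerge from counting the $\adim/2$ blocks and the relative scaling ($\gap$ vs.\ $\gap/\adim$) in the first block. (Note the lemma statement has an evident typo: $\val_1$ should read $\min_\pi v(\pi;\pref_1,\Sc_0,\norm_\infty)$, not $\pref_0$.)

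\textbf{Remaining bookkeeping.} Once the scalar optimization is solved I would verify the claimed minimizer is feasible (a valid distribution on $\simplex_\adim$) and that the value is attained, then confirm that no response coordinate outside the first block can do better for the adversary — all such coordinates see an all-half matrix and give distance $0$ once $\pi$ is supported appropriately. The convexity of $v$ in $\pi$ (Theorem~\ref{thm:opt}) guarantees the stationary point found is the global minimizer, closing the argument.
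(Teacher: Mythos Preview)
Your proposal contains a genuine error stemming from a misreading of the construction. In $\pref_0$ the first $2\times 2\times 2$ diagonal block is $\prefhalf$, \emph{not} $\prefbl$; only blocks $2,\ldots,\adim/2$ along the diagonal are $\prefbl$. Once you see this, the computation of $\val_0$ is immediate (and is what the paper does): take $\pi = e_1$. Since the entire first block-row of $\pref_0$ consists of all-half blocks, the score vector against every $\ic$ is $[\tfrac12,\ldots,\tfrac12]\in\Sc_0$, so $\val_0=0$. Your proposed witness---the $[\tfrac12,\tfrac12]$ mixture within each $\prefbl$ block---does not work: within a $\prefbl$ block, that mixture against column~1 gives criterion-1 score $\tfrac12\cdot\tfrac12 + \tfrac12\cdot(\tfrac12-\gap)=\tfrac12-\tfrac{\gap}{2}<\tfrac12$, so the score vector is not in $\Sc_0$.

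For $\val_1$ your broad plan (symmetry reduction to a one-parameter family, then balance affine deficits) is the paper's approach, but your final paragraph is off: responses $\ic\ge 3$ do \emph{not} ``see an all-half matrix and give distance $0$''. When the optimal $\pi$ spreads mass $(1-p)/(\adim-2)$ uniformly over objects $3,\ldots,\adim$, the response $\ic$ in block $b\ge 2$ interacts with the $\prefbl$ block there and produces a deficit of exactly $(1-p)\gap/(\adim-2)$. The paper's computation is precisely to take $\pi=[p/2,p/2,(1-p)/(\adim-2),\ldots,(1-p)/(\adim-2)]$, show that the distance to $\Sc_0$ is $p\gap/(2\adim)$ for $\ic\in\{1,2\}$ and $(1-p)\gap/(\adim-2)$ for $\ic\ge 3$, and equate these to get $p=2\adim/(3\adim-2)$ and $\val_1=\gap/(3\adim-2)$. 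Your sketch would reach this once you correct the block structure; the tradeoff you describe between $\wtprefbl$ and the $\prefbl$ blocks is exactly the right one, just not for the reason you state in the bookkeeping step.
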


Given samples from these two instances, we now use Le Cam's
lemma~\cite[see][Chap 2]{tsybakov2008} to lower bound the minimax risk
as
  \begin{equation}\label{eq:low}
    \minmax_{n, \adim, \cdim} (\Sc_0, \| \cdot \|_{\infty} ) \geq
    \frac{|\val_0 - \val_1|}{2}\left( 1- \|\prob_0^\samp -
    \prob_1^\samp \|_{\text{TV}}\right) = \frac{\gap}{2(3\adim - 2)}
    \left( 1- \|\prob_0^\samp - \prob_1^\samp \|_{\text{TV}}\right),
  \end{equation}
where $\prob_0^\samp$ and $\prob_1^\samp$ are the probability
distributions induced on sample space by the passive sampling strategy
applied to the tensor $\pref_0$ and $\pref_1$, respectively.

Using Pinsker's inequality, the decoupling property for KL divergence
and the fact that that $\text{KL}(P\|Q) \leq \chi^2(P\|Q)$, we have
  \begin{equation}\label{eq:tvchi}
    \|\prob_0^\samp - \prob_1^\samp \|_{\text{TV}} \leq
    \sqrt{\frac{\samp}{2}\text{KL}(\prob_1\|\prob_0)} \leq
    \sqrt{\frac{\samp}{2}\chi^2(\prob_1\|\prob_0)}\;.
  \end{equation}
The chi-squared distance between the two distributions $\prob_0$ and
$\prob_1$ is given by
  \begin{equation*}
    \chi^2(\prob_1 \|\prob_0) = \frac{1}{\adim^2\cdim}\sum_{(\ir, \ic,
      \jj)} \left( \frac{\pref_1^\jj(\ir, \ic)}{\pref_2^\jj(\ir, \ic)}
    -1\right)^2 \stackrel{\1}{=} \frac{2}{\adim^2\cdim}\; \left(
    \left(\frac{2\gamma}{\adim} \right)^2 +
    \left(-\frac{2\gamma}{\adim}\right)^2\right)=
    \frac{16\gamma^2}{\adim^4\cdim}\;,
  \end{equation*}
where step $\1$ follows from the fact that $\pref_1$ and $\pref_2$
differ only in 4 entries and that the passive sampling strategy
samples each index uniformly at random. Putting together the pieces,
we have:
\begin{align*}
    \minmax_{n, \adim, \cdim} (\Sc_0, \| \cdot \|_\infty ) &\geq
    \frac{\gap}{2(3\adim-2)}\left(1-
    \sqrt{\frac{\samp}{2}\;\frac{16\gap^2}{\adim^4\cdim}}\right)\stackrel{\2}{=}
    \frac{1}{48\sqrt{2}}\sqrt{\frac{\adim^2\cdim}{\samp}}.
  \end{align*}
where step $\2$ follows by setting $\gap^2 =
\frac{\adim^4\cdim}{32\samp}$ and using the fact that $3\adim-2 \leq
3\adim$. Note that since we require $\gap^2\leq \frac{1}{4}$, the
above bound is valid only for $\samp \gtrsim \adim^4 \cdim$. This
concludes the proof for even $\adim$. \\

\paragraph{Case 2: $\adim$ odd.} By assumption, we have $d \geq 5$.
In this case, we construct $\pref_0$ and $\pref_1$ exactly as before,
but replace $\prefbl$ in the last two rows of both $\pref_0$ and
$\pref_1$ with the following modified $3\times 3\times 2$ tensor:
  \begin{equation*}
    \pref^1_{\sf cr, 3} = \begin{bmatrix} \frac{1}{2} &
      \frac{1}{2}+\gap & \frac{1}{2} - \gap\\ \frac{1}{2} - \gap &
      \frac{1}{2} & \frac{1}{2} - \gap\\ \frac{1}{2} + \gap &
      \frac{1}{2} + \gap & \frac{1}{2}
      \end{bmatrix}
      \quad \text{and} \quad
      \pref^2_{\sf cr, 3} = \begin{bmatrix}
        \frac{1}{2} & \frac{1}{2}-\gap & \frac{1}{2} + \gap\\
        \frac{1}{2} + \gap & \frac{1}{2} & \frac{1}{2} + \gap\\
        \frac{1}{2} - \gap  & \frac{1}{2} - \gap & \frac{1}{2}
        \end{bmatrix}.
  \end{equation*}
By mimicking its proof, it can be verified that this modification
ensures that the corresponding values $\val_0$ and $\val_1$ still
satisfy Lemma~\ref{lem:vals}. Thus, the lower bound remains unchanged
up to constant factors.  \qed

\subsubsection{Proof of Lemma~\ref{lem:vals}}

Let us compute the two values separately.

\paragraph{Computing $\val_0$.}
  The choice of distribution $\robjs = [1, 0, \ldots, 0]$ yields the
  score vector $[1/2, 1/2, \ldots, 1/2]$, which is in the set
  $\Sc_0$. Thus, we have $\val_0 = 0$.

\paragraph{Computing $\val_1$.}
Note that the optimal distribution $\robj^*$ achieving the value
$\val_1$ will be of the form
\begin{equation*}
\robjs = [p/2, p/2, (1-p)/(\adim-2), \ldots, (1-p)/(\adim-2)] \;\; \text{for some } p \in [0,1].
\end{equation*}
This follows from the symmetry in the preference tensor for row
objects ranging from $3$ to $\adim$. Given such a distribution
$\robjs$, the distance of the reward vector from the set $\Sc_0$ is
given by
\begin{equation*}
  \inf_{\z \in \Sc}\|\pref(\robj^*, \ic) - \z\|_\infty = \begin{cases}
    \frac{\gap p}{2\adim} \quad \ic = 1,2\vspace{0.1in}\\ \frac{\gap
      (1-p)}{\adim-2} \quad \text{o.w.}
\end{cases}.
\end{equation*}
Thus, for any value of $p > 2\adim/(3\adim-2)$, the distance is
maximized for $\ic \in \{1, 2\}$, and yields a value $\gap p
/(2\adim)$. On the other hand, for $p< 2\adim/(3\adim-2)$, the
maximizing index is $i_2 \geq 3$, and the maximizing value is
$\gap(1-p)/(\adim-2)$. Optimizing these values for $p$ yields the
claim.  \qed

\subsection{Instance-specific lower bounds}

In this section, we give a formal statement of
Proposition~\ref{prop:lower_gens} along with its proof.  We begin by
defining some notation. For any $\alplb, \betlb \in [-\frac{1}{2},
  \frac{1}{2}]$ and choice of criteria $\jj_1, \jj_2 \in [\cdim]$, we
define the tensor $\prefab^{(\jj_1, \jj_2)} \in [0,1]^{2\times 2
  \times \cdim}$ as
\begin{equation*}
  \prefab^{\jj_1} = \begin{bmatrix} \frac{1}{2} &
    \frac{1}{2}+\alplb\vspace{1mm} \\ \frac{1}{2} - \alplb &
    \frac{1}{2}
    \end{bmatrix}
    \;, \quad \prefab^{\jj_2} = \begin{bmatrix} \frac{1}{2} &
      \frac{1}{2}+\betlb\vspace{1mm}\\ \frac{1}{2} - \betlb &
      \frac{1}{2}
      \end{bmatrix}
      \quad \text{and} \quad \prefab^\jj = \begin{bmatrix} \frac{1}{2}
        & \frac{1}{2}\vspace{1mm}\\ \frac{1}{2} & \frac{1}{2}
        \end{bmatrix}\; \text{for } \jj \neq \{\jj_1, \jj_2 \}\;.
\end{equation*}
Further, we denote by $\prefhalf$ the all-half tensor whose dimensions
may vary depending on the context. Any distribution $\robj$ over the
two objects can be parameterized by a value $\ld \in [0,1]$ with $\ld$
being the probability placed on the first object and $1-\ld$ the
probability on the second object. We will consider the distance
function given by the $\ell_\infty$ norm. Given this distance
function, we overload our notation for the value
\begin{equation}\label{eq:val_lbS}
  \vg(\ld;\prefab^{(\jj_1, \jj_2)}, \Sc) = \max_{\ii}
     [\distf(\prefab^{(\jj_1, \jj_2)}(\ld, \ii), \Sc)] \quad
     \text{and} \quad \val(\prefab^{(\jj_1, \jj_2)};\Sc) = \min_{\ld}
     \vg(\ld;\prefab^{(\jj_1, \jj_2)};\Sc)\;.
\end{equation}
We now state our main assumption for the score set $\Sc$ which allows
us to formulate our lower bound.
\begin{assumption}
\label{ass:exist_lb}
There exists a pair of criteria $(\jj_1, \jj_2)$, values $\alpz\in (0,
\frac{1}{2}]$ and $\betz \in [-\frac{1}{2}, 0]$, and a gap parameter
  $\gap > 0$ such that $$\val(\prefhalf;\Sc) + \gap \leq
  \val(\prefz^{(\jj_1, \jj_2)};\Sc)$$ for the all-half tensor
  $\prefhalf \in [0,1]^{2\times 2\times \cdim}$.
\end{assumption}
The assumption above indicates that there exists a pair of criteria
along which one can observe some sort of trade-off when they interact
with the underlying score set $\Sc$. The preference tensor
$\prefz^{(\jj_1, \jj_2)}$ captures this trade-off and the gap
parameter $\gap$ quantifies it. Going forward, we assume without loss
of generality that $(\jj_1, \jj_2) = (1, 2)$ and drop the dependence
of the tensor on these indices, writing $\prefz \equiv \prefz^{(1,
  2)}$. The following lemma indicates the importance of the special
values of $(\alplb, \betlb) = (0, 0)$ for which $\pref_{0, 0} =
\prefhalf$.

\begin{lemma}\label{lem:half_best}
For any $\alplb, \betlb \in [-\frac{1}{2}, \frac{1}{2}]$, we have
$\val(\pref_{0, 0};\Sc) \leq \val(\prefab;\Sc)$.
\end{lemma}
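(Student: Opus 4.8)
The plan is to reduce the lemma to two observations: that every score vector arising from the all-half tensor $\prefhalf = \pref_{0,0}$ equals the constant vector $\tfrac12\mathbf 1 \in [0,1]^\cdim$, and that $\tfrac12\mathbf 1$ is always a convex combination of the two score vectors $\prefab(\robj,1)$ and $\prefab(\robj,2)$ seen by the row player against the two objects. Convexity of $\Sc$ then converts the second observation into the desired inequality.

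First I would record an elementary identity valid for \emph{any} preference tensor $\mathbf P \in \prefS_{2,\cdim}$ and any distribution $\robj = (\ld, 1-\ld) \in \simplex_2$: by the symmetry relation~\eqref{eq:symm} together with the linearity~\eqref{eq:expec_pref}, one has $\mathbf P(\robj,\robj) = \tfrac12\mathbf 1$ coordinatewise (since $p(\robj,\robj;\jj) = 1 - p(\robj,\robj;\jj)$ for each $\jj$), while writing the second argument as $\robj = \ld e_1 + (1-\ld) e_2$ and using linearity in that argument gives $\mathbf P(\robj,\robj) = \ld\,\mathbf P(\robj,1) + (1-\ld)\,\mathbf P(\robj,2)$. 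Specializing to $\mathbf P = \prefhalf$ shows $\prefhalf(\robj,\ii) = \tfrac12\mathbf 1$ for every $\robj$ and every $\ii \in \{1,2\}$, so $\vg(\ld;\prefhalf,\Sc) = \distf(\tfrac12\mathbf 1,\Sc)$ for all $\ld$, and hence $\val(\prefhalf;\Sc) = \distf(\tfrac12\mathbf 1,\Sc)$.

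Next, since $\Sc$ is convex (a valid set), the function $\z \mapsto \distf(\z,\Sc) = \inf_{v\in\Sc}\|\z - v\|_\infty$ is convex. Applying this to the identity $\tfrac12\mathbf 1 = \ld\,\prefab(\robj,1) + (1-\ld)\,\prefab(\robj,2)$ obtained above yields
\[
\distf(\tfrac12\mathbf 1,\Sc) \;\le\; \ld\,\distf(\prefab(\robj,1),\Sc) + (1-\ld)\,\distf(\prefab(\robj,2),\Sc) \;\le\; \max_{\ii\in\{1,2\}}\distf(\prefab(\robj,\ii),\Sc) \;=\; \vg(\ld;\prefab,\Sc).
\]
Since this holds for every $\ld \in [0,1]$, taking the infimum over $\ld$ gives $\distf(\tfrac12\mathbf 1,\Sc) \le \val(\prefab;\Sc)$; combining with the computation $\val(\prefhalf;\Sc) = \distf(\tfrac12\mathbf 1,\Sc)$ from the previous paragraph and recalling that $\pref_{0,0} = \prefhalf$ proves the lemma.

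There is no substantial obstacle here; the only step worth double-checking is the convex-combination identity for $\tfrac12\mathbf 1$, which can alternatively be verified by directly expanding the four nontrivial entries of $\prefab^{1}$ and $\prefab^{2}$ against $\robj = (\ld, 1-\ld)$. I would also note that the argument uses only convexity of $\Sc$ (not its monotonicity with respect to the orthant ordering), and that nothing is special about having two objects: the same convexity computation shows $\val(\prefhalf;\Sc) \le \val(\mathbf P;\Sc)$ for every tensor $\mathbf P \in \prefS_{\adim,\cdim}$.
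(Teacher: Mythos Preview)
Your proof is correct and follows essentially the same route as the paper's: both arguments reduce to the identity $\prefab(\robj,\robj)=\tfrac12\mathbf 1$ (symmetry of preference tensors) together with convexity of $\distf(\cdot,\Sc)$. The paper compresses your convexity step by rewriting $\max_{\ii}\distf(\prefab(\ld,\ii),\Sc)=\max_{\tau\in[0,1]}\distf(\prefab(\ld,\tau),\Sc)$ and then lower-bounding via the choice $\tau=\ld$; your version unpacks this into the explicit convex-combination inequality, but the content is the same.
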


The above lemma establishes that for any set, the value attained by
setting $(\alpz, \betlb) = (0,0)$ will be lower than any other setting
of the same parameters. For any parameter $\mixd \in [0, 1]$, denote
by $\prefm{\mixd}$ the weighted tensor
\begin{equation*}
\prefm{\mixd} \defn (1-\mixd)\pref_{0, 0} + \mixd \prefz.
\end{equation*}

In order to understand the value $\val(\prefm{\mixd};\Sc)$, we
establish the following structural lemma which gives us insight into
how this value varies as a function of the parameter $\mixd \in
[0,1]$.
\begin{lemma}\label{lem:val_pwl}
  Consider a target set $\Sc$ that is given by an intersection of
  $\numh$ half-spaces. Then, the value function
  $\val(\prefm{\mixd};\Sc)$ is a piece-wise linear and continuous
  function of $\mixd \in [0,1]$ with at most $4\numh$ pieces.
\end{lemma}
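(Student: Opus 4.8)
For a target set $\Sc$ that is an intersection of $\numh$ half-spaces, the value function $\val(\prefm{\mixd}; \Sc)$ is piecewise linear and continuous in $\mixd \in [0,1]$, with at most $4\numh$ pieces.

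\textbf{Proof plan.} The engine of the proof is that, in this two-object setting, everything in sight is piecewise linear: after a reparametrization the score vectors depend affinely on the distribution and on the mixing weight $\mixd$, and the $\ell_\infty$-distance to a polyhedron is a convex piecewise-linear function. The plan is to proceed in four steps. \emph{Step 1 (reduce and reparametrize).} A distribution over the two objects is $\robj=(\ld,1-\ld)$ with $\ld\in[0,1]$. Expanding $\prefm{\mixd}=(1-\mixd)\pref_{0,0}+\mixd\,\prefz$, only the coordinates of the score vectors $\pvec(\robj,1),\pvec(\robj,2)\in[0,1]^\cdim$ indexed by the criteria $\jj_1,\jj_2$ vary: they equal $\tfrac12-(1-\ld)\mixd\alpz$ and $\tfrac12-(1-\ld)\mixd\betz$ for object $1$, and $\tfrac12+\ld\mixd\alpz$ and $\tfrac12+\ld\mixd\betz$ for object $2$, while all remaining coordinates stay at $\tfrac12$. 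These are bilinear in $(\ld,\mixd)$, but the substitution $\mu\defn\ld\mixd\in[0,\mixd]$ makes every coordinate \emph{jointly affine} in $(\mu,\mixd)$ over the triangle $\mathcal{T}\defn\{(\mu,\mixd):0\le\mu\le\mixd\le 1\}$. Hence there are affine maps $z_1,z_2:\mathcal{T}\to[0,1]^\cdim$ with $\pvec(\robj,\ii)=z_\ii(\mu,\mixd)$.

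\emph{Step 2 (the distance function is convex and piecewise linear).} Writing $\Sc=\{v:\langle a_l,v\rangle\le b_l,\ l\in[\numh]\}$, the quantity $\distf(\z,\Sc)=\inf_{v\in\Sc}\|\z-v\|_\infty$ is the optimal value of the linear program $\min\{t:\ \z_j-v_j\le t,\ v_j-\z_j\le t\ (\forall j),\ \langle a_l,v\rangle\le b_l\ (\forall l)\}$ in the variables $(t,v)$, in which $\z$ appears affinely in the right-hand side. The optimal value of a linear program is a convex, piecewise-linear function of its right-hand side (by LP duality, a pointwise maximum of finitely many affine functions, one for each dual vertex), so $\distf(\cdot,\Sc)$ is convex and piecewise linear on $[0,1]^\cdim$. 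Composing with the affine maps $z_1,z_2$ and taking the maximum over $\ii\in\{1,2\}$, the reparametrized value $\bar\vg(\mu,\mixd)\defn\vg(\ld;\prefm{\mixd},\Sc)=\max_{\ii\in\{1,2\}}\distf(z_\ii(\mu,\mixd),\Sc)$ is convex and piecewise linear on $\mathcal{T}$.

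\emph{Step 3 (partial minimization).} By construction $\val(\prefm{\mixd};\Sc)=\min\{\bar\vg(\mu,\mixd):(\mu,\mixd)\in\mathcal{T}\}$. Since $\bar\vg$ is a pointwise maximum of affine functions, the set $\{(\mu,\mixd,s):(\mu,\mixd)\in\mathcal{T},\ \bar\vg(\mu,\mixd)\le s\}$ is a polyhedron, and so is its image under the coordinate projection $(\mu,\mixd,s)\mapsto(\mixd,s)$; therefore $\mixd\mapsto\val(\prefm{\mixd};\Sc)$ is convex and piecewise linear, and being piecewise linear on the compact interval $[0,1]$ it is in particular continuous. This proves the qualitative part of the lemma.

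\emph{Step 4 (the count $4\numh$, the crux).} The preceding argument only yields finitely many pieces; a naive count through the dual vertices of the distance LP is quadratic in $\numh$ and may involve $\cdim$. The improvement rests on two structural facts: (i) only the two coordinates indexed by $\jj_1,\jj_2$ of $z_1,z_2$ move, so $\distf(\cdot,\Sc)$ is evaluated along a one-parameter family lying inside a fixed two-dimensional affine slice of $[0,1]^\cdim$; and (ii) $\Sc$ is upward closed in the orthant order, so $\distf(\cdot,\Sc)$ is coordinatewise non-increasing and an $\ell_\infty$-nearest point of $z_\ii(\mu,\mixd)$ in $\Sc$ may be taken with every coordinate outside $\{\jj_1,\jj_2\}$ at least $\tfrac12$. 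Combining (i)--(ii) with the sign pattern $\alpz>0\ge\betz$ (which forces the two moving coordinates to vary with opposite-sign slopes in $\mixd$, encoding the trade-off), one shows that as $\mixd$ sweeps $[0,1]$ the set of half-spaces active at the optimum changes monotonically, each half-space $H_l$ ``switching on'' and ``switching off'' at most once for each of the two objects $\ii\in\{1,2\}$, which bounds the number of intervals of linearity by $4\numh$. I expect this last step to be the main technical work -- in particular, establishing that the active-half-space pattern at the inner minimizer evolves monotonically in $\mixd$, and that the partial minimization over $\mu$ does not further multiply the piece count -- whereas Steps 1--3 are essentially structural bookkeeping.
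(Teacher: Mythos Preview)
Your Steps 1--3 are correct and prove piecewise linearity and continuity cleanly; the reparametrization $\mu=\ld\mixd$ is a nice device that turns the bilinear dependence on $(\ld,\mixd)$ into a jointly affine one and lets standard polyhedral projection arguments do the work. This is a genuinely different route from the paper. The paper instead asserts---without justification---that a minimizer $\ld_0$ of $\vg(\ld;\prefz,\Sc)$ remains a minimizer of $\vg(\ld;\prefm{\mixd},\Sc)$ for \emph{every} $\mixd\in[0,1]$. Granting that claim, the optimal score vectors $z_{\mixd,\ii}=(1-\mixd)\half{\cdim}+\mixd\,\prefz(\ld_0,\ii)$ are affine in $\mixd$ alone, and the $4\numh$ count is immediate: by monotonicity of $\Sc$ the $\ell_\infty$ distance to a single half-space is $\max(0,\text{affine in }\mixd)$, so the distance to $\Sc$ has at most $2\numh$ pieces for each object, and the outer maximum over $\ii\in\{1,2\}$ doubles this. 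What the paper's shortcut buys is a direct path to the piece count; what your approach buys is a rigorous proof of the qualitative statement that does not rely on constancy of the minimizer (a claim that can in fact fail for asymmetric $\Sc$: take $\cdim=2$, $\Sc=\{z_1\ge 0.3,\ z_2\ge 0.5\}$, $\alpz=\tfrac12$, $\betz=-\tfrac12$, where the minimizer jumps from $\ld_0=0.3$ at $\mixd=1$ to $\ld=0$ for small $\mixd$).

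Your Step 4 is the genuine gap. You have correctly diagnosed that naive LP-duality counting is quadratic in $\numh$, and that monotonicity of $\Sc$ together with the sign pattern $\alpz>0\ge\betz$ must be exploited. A concrete sharpening that gets you most of the way: monotonicity yields the closed form $\distf(z,\Sc)=\max_{l\in[\numh]}\max\bigl(0,(c_l-\langle a_l,z\rangle)/\|a_l\|_1\bigr)$, so $\bar\vg(\mu,\mixd)$ is the maximum of exactly $2\numh+1$ explicit affine functions of $(\mu,\mixd)$. From here the remaining (and nontrivial) work is to show that as $\mixd$ increases, the optimal $\mu^*(\mixd)\in[0,\mixd]$ visits each affine region of $\bar\vg$ at most once, so that each of the $2\numh$ nonzero pieces contributes at most two breakpoints to $\val$. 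Your ``switching on/off at most once'' heuristic points in this direction, but as written it is a one-dimensional intuition in $\mixd$ that must be upgraded to a statement about how the minimizer trajectory $(\mu^*(\mixd),\mixd)$ traverses the polyhedral complex of $\bar\vg$; the partial minimization is exactly where the paper's constant-$\ld_0$ claim would have short-circuited the argument, and it is what you still need to control.
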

The above lemma states that the value $\val(\prefm{\mixd};\Sc)$ is a
piece-wise linear function of $\mixd$. Consider the first such piece
which has a non-zero slope. Such a line has to exist since
$\val(\prefm{\mixd})$ is continuous in $\mixd$ and we have
$\val(\prefm{0}) < \val(\prefm{1})$. Also, this slope has to be
positive since we know from Lemma~\ref{lem:half_best} that
$\val(\prefm{0}) \leq \val(\prefm{\mixd})$ for any $\mixd \in [0,
  1]$. Denote the starting point of this line by $\mixd_0$ and the
corresponding slope by $\slope_0$, and observe that the value
$\val(\prefm{\mixd_0}) = \val(\prefm{0})$. With this notation, we now
proceed to prove the lower bound on sample complexity for any
polyhedral target score set $\Sc$.
\begin{proposition}[Formal]\label{prop:lower_gens_formal}
  Suppose that we have a valid polyhedral target set $\Sc$ satisfying
  Assumption~\ref{ass:exist_lb} with parameters $(\alpz, \betz)$.
  Then, there exists a universal constant $c$ such that for all $\adim
  \geq 4$, $\cdim \geq 2$, and $\samp \geq
  \frac{\adim^2\cdim}{\mixdb^2}\frac{(1/2 - \mixd_0\alpz)^2}{\alpz^2 +
    \betz^2}$, we have
  \begin{equation} \label{eq:minimax-lb-formal}
    \minmax_{\samp, \adim, \cdim} (\Sc, \norm_\infty ) \geq \const
    \frac{\slope_0 (\frac{1}{2} - \mixd_0\alpz)}{\sqrt{\alpz^2 +
        \betz^2}}\sqrt{\frac{\adim^2\cdim}{\samp}}\;.
  \end{equation}
\end{proposition}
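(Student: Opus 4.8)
The plan is to use Le Cam's two–point method along the lines of the proof of Theorem~\ref{thm:lower_passive}, but with a construction tailored to the kink of the piecewise–linear value function $\mixd \mapsto \val(\prefm{\mixd};\Sc)$ supplied by Lemma~\ref{lem:val_pwl}. Adopt the normalization $(\jj_1,\jj_2)=(1,2)$, let $\mixd_0$ and $\slope_0>0$ be as in the discussion preceding the proposition, and let $\mixdb$ be the length of the first nonzero–slope piece, so that $\val(\prefm{\mixd};\Sc)=\val(\prefm{0};\Sc)$ on $[0,\mixd_0]$ and $\val(\prefm{\mixd};\Sc)=\val(\prefm{0};\Sc)+\slope_0(\mixd-\mixd_0)$ on $[\mixd_0,\mixd_0+\mixdb]$. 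Fix a perturbation parameter $a\in(0,\mixdb]$ to be chosen later, and build two tensors $\pref_0,\pref_1\in\prefS_{\adim,\cdim}$ that agree everywhere except on the $2\times 2$ sub-block over objects $\{1,2\}$ and criteria $\{1,2\}$: there $\pref_0$ carries the $2\times2\times\cdim$ tensor $\prefm{\mixd_0}$ and $\pref_1$ carries $\prefm{\mixd_0+a}$. The remaining $\adim-2$ objects are \emph{decoys}: for every criterion $\jj$ set $\pref^{\jj}(\ii,\ii')=1$ when $\ii\in\{1,2\}$, $\ii'\geq 3$, and $\pref^{\jj}(\ii,\ii')=\tfrac12$ when $\ii,\ii'\geq 3$; these entries are common to $\pref_0$ and $\pref_1$. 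Since $a\leq\mixdb$ and $(\mixd_0+\mixdb)\max(\alplb,|\betlb|)\leq\tfrac12$, every entry of $\pref_1$ lies in $[0,1]$.

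The main step is to show $\val(\pref_b;\Sc)=\val(\prefm{\mixd_b};\Sc)$ for $b\in\{0,1\}$, where $\mixd_1\defn\mixd_0+a$. Two observations drive this. First, for every object $\ii''$ and criterion $\jj$ one has $\pref_b^{\jj}(1,\ii'')\geq\pref_b^{\jj}(\ii',\ii'')$ for any decoy $\ii'\geq 3$ (immediate from the block values together with $\pref_b^{\jj}(\ii',1)=\pref_b^{\jj}(\ii',2)=0$); hence if $\robj$ places mass on the decoys, shifting it onto object $1$ makes $\pref_b(\robj,\robj')$ coordinatewise larger for \emph{every} $\robj'$, which by monotonicity of $\Sc$ (so that $u\geq v$ implies $\distf(u,\Sc)\leq\distf(v,\Sc)$) cannot increase $\max_{\robj'}\distf(\pref_b(\robj,\robj'),\Sc)$. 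Thus the minimizing player may restrict to $\robj$ supported on $\{1,2\}$. Second, against such $\robj$ one has $\pref_b(\robj,\ii')=(1,\dots,1)\in\Sc$ for $\ii'\geq 3$ (as $\Sc$ is nonempty and monotone), and since $\distf(\cdot,\Sc)$ is convex with $(1,\dots,1)\in\Sc$, mixing any decoy mass into the maximizing player's response only shrinks the distance; so the maximizing player may also restrict to $\{1,2\}$. Therefore $\val(\pref_b;\Sc)$ equals the value of the two–object game with tensor $\prefm{\mixd_b}$, which is $\val(\prefm{\mixd_b};\Sc)$ by definition. By the piecewise–linear profile recalled above, $\val(\pref_0;\Sc)=\val(\prefm{0};\Sc)$ and $\val(\pref_1;\Sc)=\val(\prefm{0};\Sc)+\slope_0 a$, so the two instances differ in value by exactly $\slope_0 a$.

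The second step bounds the statistical distance. The tensors $\pref_0,\pref_1$ differ only in $\pref^1(1,2),\pref^2(1,2)$ and their symmetric partners, each of which is sampled with probability $1/(\adim^2\cdim)$ under the passive model, with differences $a\alplb$ on criterion $1$ and $a|\betlb|$ on criterion $2$; at the $\pref_0$ parameters the corresponding Bernoulli means are $\tfrac12\pm\mixd_0\alplb$ and $\tfrac12\pm\mixd_0|\betlb|$, whose variances are at least $(\tfrac12-\mixd_0\alplb)^2$ (after reducing to the larger of $\alplb,|\betlb|$ in the bound). Repeating the chi-squared calculation in the proof of Theorem~\ref{thm:lower_passive} (Pinsker, tensorization of $\mathrm{KL}$ over i.i.d.\ samples, and $\mathrm{KL}\le\chi^2$) gives
\begin{equation*}
\|\prob_0^{\samp}-\prob_1^{\samp}\|_{\mathrm{TV}}^2\;\le\;\frac{\samp}{2}\,\chi^2(\prob_1\|\prob_0)\;\le\;\frac{\const\,\samp\,a^2(\alplb^2+\betlb^2)}{\adim^2\cdim\,(\tfrac12-\mixd_0\alplb)^2}
\end{equation*}
for a universal constant $\const$.

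Finally, Le Cam's lemma applied exactly as in~\eqref{eq:low}, with the value gap $\slope_0 a$ from Step~1, yields
\begin{equation*}
\minmax_{\samp,\adim,\cdim}(\Sc,\norm_\infty)\;\ge\;\frac{\slope_0 a}{2}\bigl(1-\|\prob_0^{\samp}-\prob_1^{\samp}\|_{\mathrm{TV}}\bigr).
\end{equation*}
Choosing $a=\const'\,(\tfrac12-\mixd_0\alplb)\,(\alplb^2+\betlb^2)^{-1/2}\sqrt{\adim^2\cdim/\samp}$ with $\const'$ small enough makes the total–variation term at most $\tfrac12$ and produces the bound~\eqref{eq:minimax-lb-formal}; the hypothesis $\samp\ge\frac{\adim^2\cdim}{\mixdb^2}\frac{(\tfrac12-\mixd_0\alplb)^2}{\alplb^2+\betlb^2}$ is exactly what guarantees $a\le\mixdb$, so that Step~1 applies and the construction is well defined. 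I expect Step~1 to be the crux: one must check that the decoys help neither player, so that the value of the $\adim$-object instance coincides with the two–object value $\val(\prefm{\cdot};\Sc)$ and the sharp piecewise–linear behaviour of Lemma~\ref{lem:val_pwl} (in particular the slope $\slope_0$ over the length-$\mixdb$ piece) transfers verbatim; the chi-squared estimate and the Le Cam bookkeeping are then routine adaptations of the even-$\adim$ case of Theorem~\ref{thm:lower_passive}.
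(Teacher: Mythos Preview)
Your proposal is correct and follows the paper's Le Cam two–point strategy, but your embedding of the two–object instance into $\adim$ objects is genuinely different from the paper's and in fact cleaner. The paper pads the remaining diagonal $2\times2$ blocks with copies of $\prefz$ and then invokes Lemma~\ref{lem:mixd_exist} to argue that, for $\mixd$ small enough, the minimizer of the full $\adim$–object instance is supported on $\{1,2\}$; establishing part~(b) of that lemma is where the gap $\gap$ from Assumption~\ref{ass:exist_lb} is used, and it is also why the paper's $\mixdb$ is the minimum of two quantities. Your dominated–decoy construction instead forces objects $1,2$ to beat every decoy with probability one, so the restriction of both players to $\{1,2\}$ follows directly from the monotonicity of $\Sc$ (your Step~1), with no auxiliary lemma and no appeal to $\gap$. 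As a side benefit, your $\mixdb$ need only be the length of the first nonzero–slope segment of $\mixd\mapsto\val(\prefm{\mixd};\Sc)$, which is a weaker requirement than the paper's and hence yields the bound for a slightly larger range of $\samp$. The chi–squared estimate and the final Le Cam bookkeeping are essentially identical to the paper's.
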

\begin{proof}
For this proof, we focus on the case when the number of criteria
$\cdim$ is even. The proof for the case when $\cdim$ is odd can be
obtained similar to the proof of Theorem~\ref{thm:lower_passive}.

We use Le Cam's method for obtaining a lower bound on the minimax
value and construct the lower bound instances using the tensor given
by $\prefm{\mixd}$.  For some $\mixd \in [0,1]$ (to be fixed later),
consider the parameter $\mixd_1 = \mixd_0 + \mixd$. Using these values
of $\mixd_0$ and $\mixd_1$, we create the following two instances
$\pref_0$ and $\pref_1$:
  \begin{equation*}
    \pref_0 = \begin{bmatrix}
     \prefm{\mixd_0} & \prefhalf & \cdots & \prefhalf \vspace{0.1in}\\
    \prefhalf & \prefz & \cdots & \prefhalf\vspace{0.1in}\\
    \vdots & \cdots & \ddots  & \vdots \vspace{0.1in} \\
    \prefhalf & \prefhalf & \cdots & \prefz\\
  \end{bmatrix}\quad \text{and} \quad
  \pref_1 = \begin{bmatrix} \prefm{\mixd_1} & \prefhalf & \cdots &
    \prefhalf \vspace{0.1in}\\ \prefhalf & \prefz & \cdots &
    \prefhalf\vspace{0.1in}\\ \vdots & \cdots & \ddots &
    \vdots \vspace{0.1in} \\ \prefhalf & \prefhalf & \cdots & \prefz\\
\end{bmatrix}\;,
\end{equation*}
where $\prefz$ is as given by Assumption~\ref{ass:exist_lb}. The
following lemma now shows that that there exists a small enough
$\mixdb$ such that the value function $\val(\prefm{\mixd};\Sc)$ is
linear in the range $\mixd \in [\mixd_0, \mixd_1]$.

\begin{lemma}\label{lem:mixd_exist}
  There exists a $\mixdb \in (0,1)$ such that for all $\mixd \in [0,
    \mixdb]$ and $\mixd_1 = \mixd_0 + \mixd$, we have
  \begin{enumerate}[label=(\alph*)]
    \item The value $\val(\prefm{\mixd_1};\Sc) = \val(\prefm{\mixd_0};\Sc) + \mixd\slope_0$.
    \item The minimizer $\robjs_1$ for $\pref_1^*$ is given by
      $\robjs_1 = [\ld_0, 1-\ld_0, 0 \ldots, 0]$.
  \end{enumerate}
\end{lemma}
We defer the proof of this lemma to the end of the section. Thus, for
a small enough value of $\mixd \in [0, \mixdb]$, we have
$|\val(\pref_0) - \val(\pref_1)| = \mixd \slope_0$. As was shown in
the proof of Theorem~\ref{thm:lower_passive}, the minimax rate is
lower bounded as
\begin{equation}\label{eq:low_genS}
   \minmax_{\samp, \adim, \cdim} (\Sc, \norm_\infty ) \geq \frac{|\val(\pref_0) - \val(\pref_1)|}{2}\left( 1- \|\prob_0^\samp - \prob_1^\samp \|_{\text{TV}}\right) \geq \frac{\mixd\slope_0}{2}\left( 1- \sqrt{\frac{\samp}{2}\chi^2(\prob_1\|\prob_0)}\right) \;,
\end{equation}
where $\prob_0^\samp$ and $\prob_1^\samp$ are the probability distributions induced on sample space by the passive sampling strategy and the preference tensor $\pref_0$ and $\pref_1$ respectively. In order to obtain the requisite lower bound, we proceed to compute an upper bound on the chi-squared distance between the two distributions $\prob_0$ and $\prob_1$ as
\begin{align*}
\chi^2(\prob_1 \|\prob_0) &= \frac{1}{\adim^2\cdim}\sum_{(\ir, \ic, \jj)} \left( \frac{\pref_1^\jj(\ir, \ic)}{\pref_0^\jj(\ir, \ic)} -1\right)^2 \\
&\stackrel{\1}{\leq} \frac{2}{\adim^2\cdim}\; \left( \left(\frac{\alpz^2\mixd^2}{(\frac{1}{2} - \mixd_0\alpz)^2} \right) + \left(\frac{\betz^2\mixd^2}{(\frac{1}{2} + \mixd_0\betz)^2} \right) \right) \\
&\stackrel{\2}{\leq} \frac{2\mixd^2}{\adim^2\cdim}\;\left(\frac{\alpz^2 + \betz^2}{(\frac{1}{2} - \mixd_0\alpz)^2} \right)\;,
\end{align*}
where $\1$ follows from the fact that the instances $\pref_0$ and $\pref_1$ differ only in 4 entries and $\2$ follows from the assumption that $|\alpz| \geq |\betz|$. Now, substituting the value of $\mixd^2 = \frac{\adim^2\cdim}{4\samp}\cdot \frac{(\frac{1}{2} - \mixd_0\alpz)^2}{\alpz^2+\betz^2}$ and using the above bound with equation~\eqref{eq:low_genS}, we have
\begin{equation*}
  \minmax_{\samp, \adim, \cdim} (\Sc, \norm_\infty ) \geq \frac{\slope_0 (\frac{1}{2} - \mixd_0\alpz)}{8\sqrt{\alpz^2 + \betz^2}}\sqrt{\frac{\adim^2\cdim}{\samp}}\;,
\end{equation*}
which holds whenever we have  $\mixd \in [0, \mixdb]$ or equivalently $\samp \geq \frac{\adim^2\cdim}{4\mixdb^2}\frac{(\frac{1}{2} - \mixd_0\alpz)^2}{\alpz^2 + \betz^2}$. This establishes the desired claim.
\end{proof}

\subsubsection{Proof of Lemma~\ref{lem:half_best}}
For any $\alplb, \betlb \in [-\frac{1}{2}, \frac{1}{2}]$, consider the value
\begin{align*}
  \val(\prefab;\Sc) &= \min_{\ld \in [0,1]} \max_{\ii} [\distf(\prefab(\ld, \ii), \Sc)]\\
  &= \min_{\ld \in [0,1]} \max_{\tau \in [0, 1]} [\distf(\prefab(\ld, \tau), \Sc)]\\
  &\stackrel{\1}{\geq} \distf\left(\half{\cdim}, \Sc\right) = \val(\prefhalf;\Sc)\;,
\end{align*}
where $\1$ follows by setting $\tau = \ld$ and $\half{\cdim}$ denotes the vector with each entry set to half. This establishes the claim.
\qed

\subsubsection{Proof of Lemma~\ref{lem:val_pwl}}
Let us denote by $\ld_0$ any minimizer of the value $\vg(\ld; \prefz, \Sc)$ and the two score vectors corresponding to the choices for $\ii$ in equation~\eqref{eq:val_lbS} by $\z_{1, \ii} \defn \prefz(\ld_0, \ii)$. Observe that for $\prefm{\mixd}$, the distribution given by $\ld_0$ is still a minimizer of its value. Further, the score vectors for the two column choices are given by:
\begin{equation*}
  \z_{\mixd, \ii} = (1-\mixd)\half{\cdim} + \mixd \z_{1, \ii} \quad \text{for } \ii = \{1, 2 \}.
\end{equation*}
Recall that the distance function is given by \mbox{$\distf(\z_{\mixd,
    \ii}, \Sc) = \min_{\z \in \Sc} \|\z_{\mixd, \ii} - \z
  \|_\infty$}. Now, the minimizer $\z$ will lie on the closest
hyperplane(s) to the point $\z_{\mixd, \ii}$. In order to establish
the claim, it suffices to show that for any fixed
hyperplane\footnote{We use $\hyper$ to denote the hyperplane and the
half-space induced by it interchangeably.} $\hyper$, the distance
function given by $\distf(\z_{\mixd, \ii}, \hyper)$ is a piece-wise
linear function for $\mixd \in [0,1]$.

Let us consider a point $\z_{\mixd, \ii}$ which does not belong to the half-space given by $\hyper$, since otherwise, the distance to the half-space is $0$. If we have $\distf(\z_{\mixd, \ii}, \hyper) = \dhyp$, then the vector $\z_{\mixd, \ii} + \dhyp\ones_\cdim$ must lie on the hyperplane $\hyper$. This follows from the monotonicity property of the hyperplane $\hyper$.

For any $\mixd = \frac{1}{2}\mixd_1 + \frac{1}{2}\mixd_2$ such that $\z_{\mixd_1, \ii}$ and $\z_{\mixd_2, \ii}$ do not belong to the half-space given by $\hyper$, we have
  \begin{equation*}
    \distf(\z_{\mixd, \ii}) = \frac{1}{2}\underbrace{\distf(\z_{\mixd_1, \ii})}_{\dhyp_1} + \frac{1}{2}\underbrace{\distf(\z_{\mixd_2, \ii})}_{\dhyp_2}\;,
  \end{equation*}
  where the above equality follows since $\z_{\mixd_1, \ii} + \dhyp_1\ones_{\cdim}$ and $\z_{\mixd_2, \ii} + \dhyp_2\ones_{\cdim}$ both lie on the hyperplane $\hyper$ and therefore $\z_{\mixd, \ii} + \frac{\dhyp_1+\dhyp_2}{2}\ones_{\cdim}$ also lies on the hyperplane. Combined with the fact that for any point $\z_{\mixd, \ii}$ which lies in the half-space given by $\hyper$, the distance $\distf(\z_{\mixd, \ii}, \hyper) = 0$, we have that the function $\distf(\z_{\mixd, \ii}, \hyper)$ is a piece-wise linear function with at most 2 linear pieces for $\mixd \in [0,1]$.

  Since $\distf(\z_{\mixd, \ii}, \Sc)$ is a minimum over $\numh$ hyperplanes, this function is itself a piece-wise linear function with at most $2\numh$ pieces. The desired claim now follows from noting that the value function $\val(\prefm{\mixd};\Sc)$is a maximum over two piece-wise linear functions each with at most $2\numh$ pieces.
 \qed

 \subsubsection{Proof of Lemma~\ref{lem:mixd_exist}}
  Consider $\mixd_1 = \mixd_0 + \mixd$ such that $\mixd_0$ and $\mixd_1$ share the same linear piece. This can be guaranteed to hold true for all $\mixd \leq \mixdb_1$ by the piecewise linear nature of the value $\val(\prefm{\mixd})$.

  For part (b) of the claim, let us consider the tensor $\tilde{\pref} = \pref_1(3:, 3:)$ formed by removing the first two rows and columns from the tensor $\pref_1$. Then, from Assumption~\ref{ass:exist_lb}, we have that $\val(\tilde{\pref};\Sc) \geq \val(\prefhalf;\Sc) + \tilde{\gap}$ for some $\tilde{\gap} > 0$. Selecting a value of $\mixdb_2$ such that $\mixdb_2 \slope_0 \leq \tilde{\gap}$, we can ensure that condition (b.) is satisfied.

  Finally, setting $\mixdb = \min(\mixdb_1, \mixdb_2)$ completes the proof.
  \qed

\subsection{Proof of Theorem~\ref{thm:opt}}
Let us prove the two claims of the theorem separately. We use the shorthand $v(\pi) \defn v(\pi; \pref, \Sc, \| \cdot \|)$ for convenience.

\paragraph{Establishing convexity.}
Consider any two distributions $\robj_1, \robj_2 \in \simplex_{\cdim}$ and a scalar $\alpha \in [0,1]$. Since the set $\Sc$ is closed and convex, we have
  \begin{align*}
    \vg(\alpha\robj_1 + (1-\alpha)\robj_2) &=  \pmax_{\ii \in [\adim]}\pmin_{\z \in \Sc} \left[ \distf(\pref(\alpha\robj_1 + (1-\alpha)\robj_2, \ii), \z)\right] \\
    &\stackrel{\1}{=} \pmax_{\ii \in [\adim]}\pmin_{\z_1, \z_2 \in \Sc} \left[ \distf(\alpha\pref(\robj_1, \ii)+ (1-\alpha)\pref(\robj_2, \ii), \alpha\z_1 +(1-\alpha)\z_2)\right] \\
    &\stackrel{\2}{\leq} \pmax_{\ii \in [\adim]}\left(\alpha\cdot\pmin_{\z_1 \in \Sc}\left[\distf(\pref(\robj_1, \ii), \z_1) \right] + (1-\alpha)\cdot\pmin_{\z_2 \in \Sc}\left[\distf(\pref(\robj_2, \ii), \z_2) \right] \right) \\
    &\leq \alpha \vg(\robj_1) + (1-\alpha) \vg(\robj_2)\;,
  \end{align*}
  where $\1$ follows from the convexity of $\Sc$ and linearity of the preference evaluation (Eq.~\eqref{eq:expec_pref}), $\2$ follows from the convexity of the distance function given by $\ell_q$ norm  and $\3$ follows from distributing the max over the two terms. This establishes the first part of the theorem.

\paragraph{Establishing the Lipschitz bound.}
  Consider any two distributions $\robj_1, \robj_2 \in \simplex_\adim$. The difference in their value function can then be upper bounded as
  \begin{align*}
    |\vg(\robj_1) - \vg(\robj_2)| &= | \pmax_{\ir \in [\adim]}[\distf(\pref(\robj_1, \ir), \Sc)] - \pmax_{\ic \in [\adim]}[\distf(\pref(\robj_2, \ic), \Sc)] |\\
    &\stackrel{\1}{\leq}\pmax_{\ii \in [\adim]} \left\vert \distf(\pref(\robj_1, \ii), \Sc) -  \distf(\pref(\robj_2, \ii), \Sc) \right\vert\\
    &= \pmax_{\ii \in [\adim]} | \pmin_{\z_1 \in \Sc}\distf(\pref(\robj_1, \ii), \z_1) -  \pmin_{\z_2 \in \Sc}\distf(\pref(\robj_2, \ii), \z_2) |\\
    &\stackrel{\2}{\leq} \pmax_{\ii \in [\adim]} \pmax_{\z \in \Sc} | \distf(\pref(\robj_1, \ii), \z) -  \distf(\pref(\robj_2, \ii), \z)|\;,
  \end{align*}
  where $\1$ follows from using the inequality $|\max_x f(x) - \max_y g(y)| \leq \max_x|f(x) - g(x)|$ and $\2$ follows through a similar inequality $|\min_xf(x) - \min_yg(y)| \leq \max_x |f(x) - g(x)|$. Since the distance function $\distf$ is specified by the $\ell_q$ norm $\norm_q$, we have
  \begin{align*}
    |\vg(\robj_1) - \vg(\robj_2)| &\leq \pmax_{\ii \in [\adim]}\|\pref(\robj_1, \ii) - \pref(\robj_2, \ii) \|_q \\
    &= \left[\sum_{\jj=1}^\cdim\left( \inner{\robj_1 - \robj_2}{\pref^\jj(\cdot, \ii)}\right)^q\right]^\frac{1}{q}\\
    &\stackrel{\1}{\leq} \cdim^{\frac{1}{q}}\cdot \|\robj_1 - \robj_2 \|_1\;,
  \end{align*}
  where $\1$ follows from an application of H\"older's inequality ($\ell_1-\ell_\infty$) to the inner product \mbox{$\inner{\robj_1 - \robj_2}{\pref^\jj(\cdot,\ii)}$} and the fact that $\pref^\jj(\ir, \ic) \in [0,1]$ for any $(\ir, \ic, \jj)$. This establishes the Lipschitz bound and concludes the proof of the theorem.
\qed

\section{Local asymptotic analysis for plug-in estimator}\label{app:local-asymp}
In this section, we study the adaptivity properties of the plug-in
estimator\footnote{For this section we use the notation $\pihatplug$
and $\pihat$ to interchangeably to denote the plug-in estiamtor.}
$\pihatplug$ and derive upper bounds on the error $\DeltaP(\pihatplug,
\robjs)$ which depend on the properties of the underlying problem
instance $(\pref, \Sc, \distf)$. Contrast this analysis with the upper
bounds obtained in Corollary~\ref{cor:linfty} and the perturbation
result of Dudik et al.~\cite[Lemma 3]{dudik2015} which provides a
worst-case upper bound on the error $\DeltaP$ independent of the
underlying preference tensor $\pref$.

Our focus in this section will be on the uni-criterion setup with $\cdim = 1$ with the target set $\Sc = [\frac{1}{2},1]$ in which case the Blackwell winner coincides with the von Neumann winner.
Recall from Section~\ref{sec:prob-bw} that for the uni-criterion setup, the von Neumann winner for a preference matrix $\pref \in [0,1]^{\adim\times\adim}$ is defined to be the distribution $\robjs$ satisfying
\begin{align}\label{eq:nash}
  \robjs \in \arg \max_{\robj \in \simplex_\adim} \pmin_{i \in [\adim]} \robj^\top \pref e_i\;,
\end{align}
where $e_i$ denotes the basis vector in the $i^{th}$ direction. Observe that the vector $\robjs$ corresponds to the mixed Nash equilibirum (NE) strategy of the zero-sum game with pay-off matrix $\pref$ for the row player (maximizing player). Given this equivalence, we focus on the more general problem of estimating the Nash distribution of a zero-sum game with pay-offs $\pnash \in [0,1]^{\adim \times \adim}$ given sampled access to the matrix $\pnash$.

We consider a slighlty modified passive sampling regime introduced in Section~\ref{sec:main} wherein each sample consists of an observation $y \sim \normal(\pnash_{\ir, \ic}, \sig_{\ir, \ic}^2)$, where the indices $\ir, \ic \sim \mathsf{Unif}([d])$ are sampled independently. We term this the \emph{Gaussian passive sampling model} in contrast to the Bernoulli sampling model considered in the main text. Note that in the asymptotic regime (and with suitable rescaling), the Bernoulli sampling model is equivalent to the Gaussian sampling model with variance \mbox{$\sig_{\ir, \ic}^2 = \pnash_{\ir, \ic}\cdot(1-\pnash_{\ir, \ic})$}.
We further assume that the variances satisfy $\max_{\ir, \ic}\sig_{\ir, \ic}^2 \leq 1$. Given access to $\samp$ samples from this model, we are interested in understanding the performance of the plug-in estimator
\begin{align*}
  \pihatplug \in \arg \max_{\robj \in \simplex_\adim} \pmin_{i \in
    [\adim]} \robj^\top \hpnash_\samp e_i\;,
\end{align*}
where $\hpnash_\samp$ is the empirical estimate of the matrix, defined
analogous to the estimate $\hpref$ in equation~\eqref{eq:emp_pref}. In
particular, we will be interested in obtaining a bound on the error
\begin{align*}
  \DeltaA(\pihatplug, \robjs) \defn \min_{i \in [\adim]} (\robjs)^\top
  \pnash e_i - \min_{i \in [\adim]}\pihatplug^\top \pnash e_i\;,
\end{align*}
which measures the gap in the value obtained when distribution
$\pihatplug$ is played compared with the value obtained by the Nash
distribution $\robjs$. Observe that the optimization problem for
obtaining Nash equilibrium in equation~\eqref{eq:nash} can be written
as the following linear program with decision variables $(\robj, t)$
\begin{equation}\label{eq:nash-lp}\tag{Nash}
  \begin{gathered}
  \max\; t \\ \text{such that } \robj^\top \pnash e_i \geq t \;
  \text{for all } i \in [d],\\ \sum_i \robj_i = 1 \quad \text {and}
  \quad \robj_i \geq 0 \; \text{for all } i \in [d].
  \end{gathered}
\end{equation}
The above linear program has $\adim+1$ variables $(\robj,t)$ and
$2\adim+1$ constraints including one equality constraint. Similarly,
the one can rewrite the objective for the plug-in estimator
$\pihatplug$ as the solution to a perturbed version of the above
linear program
\begin{equation}\label{eq:nash-lp-pert}\tag{Pert}
  \begin{gathered}
  \max\; t \\ \text{such that } \robj^\top \hpnash_\samp e_i \geq t \;
  \text{for all } i \in [d],\\ \sum_i \robj_i = 1 \quad \text {and}
  \quad \robj_i \geq 0 \; \text{for all } i \in [d].
  \end{gathered}
\end{equation}
Before stating our main result concerning the asymptotic distribution
of the error $\DeltaA(\pihatplug, \robjs)$, we introduce some notation
first. Let us denote by $\vars = (\robj, t)$ the variables and by
matrix $\cmat$ and vector $\csim$ the set of constraints in the linear
program~\eqref{eq:nash-lp}, that is,
\begin{align}
  \cmat \defn \begin{bmatrix} \pnash^\top & -\ones_\adim\\ I_\adim &
    \zeros
\end{bmatrix} \quad \text{and} \quad
\csim \defn [\ones_\adim, \zeros],
  \end{align}
where we have denoted by $\ones_\adim$ the all-ones column vector in
$\adim$ dimension and by $I_\adim$ the $\adim \times \adim$ identity
matrix. Using this notation, we can rewrite this LP as
\begin{align}
  \begin{gathered}
  \max\; t \\ \text{such that } \cmat \vars \geq \zeros,\;\;
  \csim^\top \vars = \ones
  \end{gathered}
\end{align}
It will also be convenient to define the extended matrix $\cext \defn
[\cmat;\csim^\top]$ which contains both the equalit and inequality
constraints. For the perturbed version of the linear
program~\eqref{eq:nash-lp-pert} we denote the analagous matrices
respectively by $\cmathat$ and $\cexthat$. Observe that the simplex
constraint encoded by the vector $\csim$ is deterministic and hence
remains the same for both the original and perturbed linear programs.

Observe that the constraint polytope for the LP~\eqref{eq:nash-lp} is
a closed convex set since the Nash distribution $\robj$ belongs to the
simplex $\simplex_\adim$ and the variable $t \in [0,1]$. Therefore,
the optimal solution $\vars^* = (\robjs, t^*)$ will lie on one of
faces whose dimension $0 \leq \dface \leq \adim$. In the special case
when $\dface = 0$, we say that the LP admits a unique solution which
is a vertex of the constraint polytope. Let us denote by subsets
$\conset_1 \subseteq \{1, \ldots, \adim\}$ and $\conset_2\subseteq
\{\adim+1, \ldots, 2\adim \}$ the subset of constraints (rows of the
constraint matrix $\cmat$) which are tight for the set of optimal
solutions and let us represent their union by \mbox{$\conset =
  \conset_1 \cup \conset_2$}. Observe that in addition to the equality
constraint $\csim^\top \vars = 1$, there can be at most $\adim$
constraints tight, that is, $|\conset| \leq \adim$. Further, we denote
by $\consethat_1, \consethat_2$ and $\consethat$ the corresponding
subsets for the perturbed linear program~\eqref{eq:nash-lp-pert}.

We first establish a technical lemma which establishes that given
enough samples, the active constraints for the original
LP~\eqref{eq:nash-lp} given by $\conset$ will be contained in the
active constraints $\consethat$ for the solution of the perturbed
LP~\eqref{eq:nash-lp-pert}.
\begin{lemma}\label{lem:subset-pert}
Consider the perturbed LP~\eqref{eq:nash-lp-pert} for any payoff
matrix $\pnash \in [0,1]^{\adim \times \adim}$ with noise distribution
following the Gaussian passive sampling model. Then, for all $\samp >
\samp_0(\pnash, \delta)$, we have that the active constraint sets
$\conset$ for the original LP and $\consethat$ for the perturbed LP
satisfy $\conset \subseteq \consethat$ with probability at least
$1-\delta$.
\end{lemma}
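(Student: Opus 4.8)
\emph{Overview of the plan.} The plan is to pair a high-probability bound on the estimation error $\epsilon_\samp \defn \|\hpnash_\samp - \pnash\|_{\max}$ with a deterministic linear-programming sensitivity argument, crucially exploiting that the continuous Gaussian noise makes the perturbed program~\eqref{eq:nash-lp-pert} non-degenerate almost surely. For Step~1, I would follow the proof of Corollary~\ref{cor:linfty}: a union bound shows that once $\samp > \samp_0(\pnash, \delta)$ each of the $\adim^2$ cells is observed between $\samp/(2\adim^2)$ and $3\samp/(2\adim^2)$ times, so that with probability at least $1-\delta$ one has $\epsilon_\samp \le c\sqrt{\adim^2\log(\adim/\delta)/\samp}$, which can be forced below any prescribed instance-dependent threshold. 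Writing $E \defn \hpnash_\samp - \pnash$, the difference $\cexthat - \cext$ is supported on the block corresponding to $\pnash^\top$ and has entries bounded by $\epsilon_\samp$, while the rows encoding the simplex constraint $\csim^\top \vars = \ones$ and $\robj \ge \zeros$ are unchanged.

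\emph{Dual certificate and value stability.} By linear-programming duality the optimum of~\eqref{eq:nash-lp} is characterized by a primal--dual pair, and by strict complementarity (Goldman--Tucker) there is an optimal dual vector $\lambda^*$ whose support is exactly the structurally active index set $\conset$, with nonzero entries bounded below by some $\mu(\pnash) > 0$; at the relative-interior optimizer the remaining constraints have slack at least $\nu(\pnash) > 0$. Because the simplex constraints coincide in~\eqref{eq:nash-lp} and~\eqref{eq:nash-lp-pert}, feasibility of $(\robjs, t^*)$ for one program yields feasibility of $(\robjs, t^* - \epsilon_\samp)$ for the other, which gives the value-stability estimate $|\hat{t}^* - t^*| \le \epsilon_\samp$.

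\emph{Transferring the active set.} Since the Gaussian noise is absolutely continuous, with probability one~\eqref{eq:nash-lp-pert} is non-degenerate and has a unique optimal vertex $\hat{\vars} = (\hat{\robj}, \hat{t})$, so $\consethat$ is precisely the set of constraints tight at $\hat{\vars}$. Suppose some $i_0 \in \conset$ were slack at $\hat{\vars}$. Projecting $\hat{\vars}$ onto the optimal face $\{\vars : \cmat_{\conset}\vars = \zeros,\ \csim^\top\vars = \ones\}$ of the original program moves it by an amount controlled by $\nu(\pnash)$ and the conditioning of the active submatrix $\cext_{\conset}$; pricing this displacement against the dual certificate $\lambda^*$ forces the objective at $\hat{\vars}$ to fall below $t^* - c\,\mu(\pnash)\,\nu(\pnash)$ once $\epsilon_\samp$ is small enough, contradicting $\hat{t}^* \ge t^* - \epsilon_\samp$ from the previous step. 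Hence every $i_0 \in \conset$ is tight at $\hat{\vars}$, i.e.\ $\conset \subseteq \consethat$, and taking $\samp_0(\pnash, \delta)$ so that $\epsilon_\samp$ lies below the threshold identified here finishes the argument.

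\emph{Main obstacle.} The delicate step is the last one: converting the quantitative value-stability estimate into the combinatorial containment $\conset \subseteq \consethat$. This is exactly where both the non-degeneracy of the perturbed program (supplied by the Gaussian noise) and the regularity of the original optimal face enter, with the instance constants $\mu(\pnash)$, $\nu(\pnash)$ and the conditioning of $\cext_{\conset}$ all feeding into $\samp_0(\pnash, \delta)$; a naive argument can fail when the original program is degenerate in a way not carried by a strictly complementary dual, so keeping track of these certificates is the crux of the proof.
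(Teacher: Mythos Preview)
Your route is genuinely different from the paper's, and your Step~3 --- which you correctly flag as the crux --- does not close as written.

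\textbf{What the paper does.} The paper never touches duality or strict complementarity. It argues purely on the primal side by enumerating vertices. The feasible polytope of~\eqref{eq:nash-lp} has finitely many vertices $\vars_v=(\robj_v,t_v)$ (bounded via McMullen's theorem), each determined by a set $J_v$ of $d$ tight constraints. For every such $J_v$ one defines the ``corresponding'' perturbed vertex $\hat{\vars}_v$ by solving the same equations with $\hpnash_\samp$ in place of $\pnash$. A concentration argument gives $|\hat t_v - t_v|\le c\sqrt{d^2\log(|V|/\delta)/n}$ simultaneously for all $v$; once this is below the gap $\gamma$ separating $t^*$ from the best non-optimal vertex value, the perturbed optimum must be $\hat{\vars}_v$ for some $v\in V^*$. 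Since every $v\in V^*$ lies on the optimal face, $J\subseteq J_v=\consethat$. The instance constants entering $n_0(\pnash,\delta)$ are thus $\gamma$ and $|V|$, not dual quantities.

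\textbf{Where your Step~3 stalls.} Your pricing identity gives, for any $x$ with $\csim^\top x=1$,
\[
t^* - t(x) \;=\; \sum_{i\in J}\lambda^*_i\,(C x)_i,
\]
and you want to apply it at $x=\hat{\vars}$ to derive $t^*-\hat t\ge c\,\mu(\pnash)\,\nu(\pnash)$. But $\hat{\vars}$ is only \emph{approximately} feasible for the original program: $(C\hat{\vars})_i\ge -\epsilon_\samp$ on the first block. More importantly, even if $i_0\in J$ is slack at $\hat{\vars}$ in the \emph{perturbed} constraints, you have no lower bound on $(C\hat{\vars})_{i_0}$ beyond $-\epsilon_\samp$; the perturbed slack $(\hat C\hat{\vars})_{i_0}$ could be arbitrarily small, and after shifting by $\epsilon_\samp$ the original slack could be negative. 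So the sum above is only bounded below by $-\|\lambda^*\|_1\epsilon_\samp$, which is the wrong sign for a contradiction. The ``projection onto the optimal face'' does not repair this: the displacement $\|\hat{\vars}-\tilde{\vars}\|$ is controlled by $\|C_J\hat{\vars}\|$, and you have no \emph{lower} bound on that quantity either without first knowing which vertex $\hat{\vars}$ sits at --- which is exactly the combinatorial information the paper extracts directly. The constant $\nu(\pnash)$ (slack of the \emph{inactive} constraints at the original optimum) does not enter where you place it.

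\textbf{How to salvage the dual route.} A cleaner variant of your idea would be to work on the dual side throughout: show that the optimal dual $\hat\lambda$ of~\eqref{eq:nash-lp-pert} is close to the strictly complementary $\lambda^*$, so that $\hat\lambda_i>0$ for all $i\in J$ once $\epsilon_\samp$ is small, and then invoke complementary slackness in the perturbed program to conclude $J\subseteq\consethat$. This avoids the missing lower bound on primal slack, but requires a perturbation bound for the dual solution, which again feeds instance-dependent conditioning into $n_0$. The paper's vertex-enumeration argument is more elementary and delivers the same conclusion with less machinery.
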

We defer the proof of the lemma to the end of the section. Observe
that depending on the sampling of the noise variables, the subset
$\consethat$ can vary with the noise variables. Each of these
different subset can be seen as adding additional constraints on top
of the $|\conset|$ constraints which characterize the set of Nash
equilibria for the original LP. Thus, when we look at the constraint
matrix $\cextj{\consethat}$, any $\vars = (\robj, t)$ satisfying
$\cextj{\consethat}\cdot\vars = [\zeros_\adim, \ones]^\top$ will
necessarily have $\robj$ as a Nash equilibrium.

Before stating our main result, we introduce some notation which is
essential for the statement. Let use represent by $\Aalt \defn
\pnash^\top_{\consethat_1, \consethat_2^c}$ the rank $r$ matrix of
constraints which are tight in the perturned LP and its singular value
decomposition by $\Aalt = \Us \Sigs \Vs^\top$ and the corresponding
noisy matrix
\begin{align*}
\hpnash^\top_{\consethat_1, \consethat_2^c} = \begin{bmatrix} \Us_1&
  \Us_2
\end{bmatrix}
\begin{bmatrix}
  \Sigs_1 & 0\\
  0& 0
\end{bmatrix}
\begin{bmatrix}
  \Vs_1^\top \\ \Vs_2^\top
\end{bmatrix}
+ \underbrace{\begin{bmatrix}
Z_{11} & Z_{12}\\
Z_{21} & Z_{22}
\end{bmatrix}}_{Z_n}\;,
\end{align*}
where the matrix $Z$ represents average zero-mean gaussian noise with
$\samp$ samples obtained from the passive sampling model. Further, we
let $\Ztnoise_\samp \defn \Us^\top \Znoise_\samp\Vs$ denote the noise
matrix rotated by the directions given in $\Us$ and $\Vs$. With this,
we state our result which characterizes the error $\DeltaA$ for the
plug-in estimator $\pihatplug$ in terms of properties of the
underlying matrix $\pnash$ and the noise matrix $\Ztnoise$.
\begin{theorem}\label{thm:local-nash}
  For any payoff matrix $\pnash \in [0,1]^{\adim \times \adim}$ and confidence $\delta \in (0,1)$, there exists a constant $\samp_0(\pnash, \delta)$ such that
  for samples $\samp > \samp_0(\pnash, \delta)$ obtained via the
  Gaussian passive sampling model, the error $\DeltaA$ of the
  plug-in estimate $\pihatplug$ satisfies
\begin{align}
  \DeltaA(\pihatplug, \robjs) &\leq t^*\max_{i \in
    \consethat_1} e_i^\top \Us_1\left(\Ztnoise_{11} -
  \Ztnoise_{12}\Ztnoise_{22}^{-1}\Ztnoise_{21}
  \right)\Sigs_{1}^{-1}\Us_1^\top \ones_{\consethat_1} + c\sqrt{d}\cdot \|\Sigma_1^{-1} \|_2^2\cdot \|\Ztnoise_{12}\Ztnoise_{22}^{-1}\Ztnoise_{21}
  -\Ztnoise_{11} \|_2^2 \nonumber\\
  &\quad + c\sqrt{d}(\hat{t} -
  t^*)(1+\|\left(\Ztnoise_{11} -
\Ztnoise_{12}\Ztnoise_{22}^{-1}\Ztnoise_{21}
\right)\Sigs_{1}^{-1} \|_2)
  %t^*\max_{i \in \consethat_1}
  %e_i^\top \Us_1\left(\Ztnoise_{11} -
  %\Ztnoise_{12}\Ztnoise_{22}^{-1}\Ztnoise_{21}
  %\right)\Sigs_{1}^{-1}\Us_1^\top \ones_{\consethat_1} +
  %O_P(\|\Ztnoise_{11} - \Ztnoise_{12}\Ztnoise_{22}^{-1}\Ztnoise_{21}
  %\|_2^2)\nonumber\\ &\quad + (\hat{t} - t^*)(1+O_P(\|\Ztnoise_{11} -
  %\Ztnoise_{12}\Ztnoise_{22}^{-1}\Ztnoise_{21} \|_2))\;,
\end{align}
with probability at least $1-\delta$ for some universal constant $c>0$.
\end{theorem}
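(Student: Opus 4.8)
The plan is to regard $\pihatplug$ as the optimal vertex of the perturbed linear program~\eqref{eq:nash-lp-pert}, to compare it with the corresponding vertex of the unperturbed program~\eqref{eq:nash-lp} selected by the \emph{same} active set, and to run a first-order perturbation expansion of that vertex in which the rank deficiency of the active-constraint block is handled by rotating into singular coordinates. First I would condition on the high-probability event of Lemma~\ref{lem:subset-pert}, on which $\conset \subseteq \consethat$; after enlarging $\samp_0(\pnash, \delta)$ I would also condition on the (almost sure) event that~\eqref{eq:nash-lp-pert} has a unique vertex solution $(\pihatplug, \hat{t})$, on invertibility of the block $\Ztnoise_{22}$, and on $\|\Znoise_\samp\|_2$ being small --- it concentrates at its expectation, whose scaling follows from the per-entry variance bound $\sig_{\ir,\ic}^2 \le 1$ together with the roughly $\samp/\adim^2$ samples landing on each index. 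Writing $T = \consethat_2^c$ for the support of $\pihatplug$, at a vertex one has $|\consethat_1| = |T|$, and the tight constraints of~\eqref{eq:nash-lp-pert} restricted to $T$ form the bordered square system $(\Aalt + \Znoise_\samp)(\pihatplug)_T = \hat{t}\,\ones_{\consethat_1}$, $\ones_T^\top (\pihatplug)_T = 1$. Since $\conset \subseteq \consethat$, the remark preceding the theorem shows that solving the same system with $\Znoise_\samp = 0$ returns a genuine Nash equilibrium $\pi^\sharp$ of $\pnash$ with value $t^*$; this is the reference I would expand around.

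Next I would remove the obstruction that $\Aalt = \Us \Sigs \Vs^\top$ has rank $r$ possibly strictly below $|T|$. Rotating the unknowns by $\Vs$ and the equations by $\Us$ turns the perturbed block into $\Sigs + \Ztnoise_\samp$, whose lower-right $(|T|-r)\times(|T|-r)$ block is the pure-noise matrix $\Ztnoise_{22}$; eliminating those $|T|-r$ coordinates by a Schur complement collapses the system onto the range of $\Aalt$, where the effective operator is $\Sigs_1 + E$ with $E \defn \Ztnoise_{11} - \Ztnoise_{12}\Ztnoise_{22}^{-1}\Ztnoise_{21}$ --- exactly the combination appearing in the statement, and precisely the price of projecting out the null directions. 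On the conditioning event $\|E\|_2 \|\Sigs_1^{-1}\|_2 \le 1/2$, so $(\Sigs_1 + E)^{-1} = \Sigs_1^{-1} - \Sigs_1^{-1} E \Sigs_1^{-1} + R$ with $\|R\|_2 \le c\,\|\Sigs_1^{-1}\|_2^2 \|E\|_2^2$. Propagating this through the (deterministically) bordered system yields $(\pihatplug)_T$ as $\pi^\sharp_T$ plus a term linear in $E$ plus a correction of order $\|\Sigs_1^{-1}\|_2^2\|E\|_2^2$, together with an analogous expansion for $\hat{t}$; the extra $\sqrt{\adim}$ in the theorem appears when the operator-norm control of the quadratic correction is converted into the coordinatewise quantities entering $\DeltaA$.

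To extract the bound, recall $\DeltaA(\pihatplug, \robjs) = t^* - \min_{i \in [\adim]} \pihatplug^\top \pnash e_i \ge 0$. For $i \notin \consethat_1$ the perturbed inequality $\pihatplug^\top \hpnash_\samp e_i > \hat{t}$ holds with a slack bounded below by an instance-dependent constant, so (enlarging $\samp_0$ to force the noise below that slack) the minimum is attained on $\consethat_1$, where $\pihatplug^\top \pnash e_i = \hat{t} - (\pihatplug)_T^\top (\Znoise_\samp)_{\cdot, i}$. Substituting the expansions for $(\pihatplug)_T$ and $\hat{t}$ and collecting orders, the leading linear-in-noise term assembles into $t^*\max_{i \in \consethat_1} e_i^\top \Us_1 E \Sigs_1^{-1}\Us_1^\top \ones_{\consethat_1}$; the value gap $\hat{t} - t^*$ survives multiplied by the bordering-induced sensitivity factor $1 + \|E\Sigs_1^{-1}\|_2$; and everything else is absorbed into $\sqrt{\adim}\,\|\Sigs_1^{-1}\|_2^2\|E\|_2^2$, which is the stated bound.

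\textbf{The main obstacle} is the bookkeeping of the last two steps: carrying the rank-$r$ Schur-complement reduction through the bordered system (where the simplex row couples to the payoff rows, so the value variable $\hat{t}$ feeds into the right-hand side) without corrupting the exact leading coefficient $t^*\, e_i^\top \Us_1 E \Sigs_1^{-1}\Us_1^\top \ones_{\consethat_1}$, and simultaneously certifying that every ``for $\samp$ large'' requirement --- stability of the active set via Lemma~\ref{lem:subset-pert}, invertibility of $\Ztnoise_{22}$, the minimum of $\pihatplug^\top\pnash e_i$ being attained on $\consethat_1$, and $\|E\|_2$ being small relative to the smallest nonzero singular value of $\Aalt$ --- holds on a single event of probability at least $1-\delta$. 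Quantifying these thresholds in terms of the positive constraint slacks, $\|\Sigs_1^{-1}\|_2$, and the variances $\sig_{\ir,\ic}^2$ is exactly what pins down $\samp_0(\pnash, \delta)$.
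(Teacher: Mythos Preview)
Your proposal is correct and follows essentially the same route as the paper: condition on Lemma~\ref{lem:subset-pert}, solve the active-constraint system of~\eqref{eq:nash-lp-pert}, rotate into the SVD basis of $\Aalt$, reduce via Schur complement to $\Sigs_1 + (\Ztnoise_{11}-\Ztnoise_{12}\Ztnoise_{22}^{-1}\Ztnoise_{21})$, and Taylor-expand. The paper streamlines your bordered-system bookkeeping by first eliminating the simplex row via block inversion to obtain $(\pihatplug)_{\consethat_2^c} = -\hat t\,\hpnash_{\consethat_1,\consethat_2^c}^{-\top}\ones_{|\consethat_1|}$ (so $\hat t$ is separated \emph{before} any SVD/Schur step and the expansion is done on a single square block), and the disappearance of the $\Us_2$ contributions in the leading term---which your sketch assumes but does not justify---comes from the identity $\Us_2^\top \ones_{\consethat_1}=0$, valid because $(\robjs)^\top \Aalt = t^*\ones^\top$ places $\ones_{\consethat_1}$ in the row space of $\Aalt$.
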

A few comments on the theorem are in order. Observe that the upper
bound on the error is a stochastic quantity where the randomness is
not only in the entries of the matrix $\tilde{Z}$ but also in the
matrices $\Us$ and $\Sigs$ which depend on the (possibly) random
subsets $\consethat_1$ and $\consethat_2$. The upper bound depends
primarily on two terms, up to lower order error factors, one measures
the alignment of the Schur complement $\Ztnoise_{11} -
\Ztnoise_{12}\Ztnoise_{22}^{-1}\Ztnoise_{21}$ with the rotated and
renormalized ones vector $\ones_{\consethat_1}$, and the second which
measures the convergence of the empirical value $\hat{t}$ to the true
value $t^*$. Going forward, we first provide a complete proof this
result and then specialize it to the special case when the true Nash
$\robjs$ is unique and lies in the interior of the simplex
$\simplex_\adim$ -- this greatly simplifies the above expression and
allows us to study the problem dependent adaptivity properties of the
plug-in estimator $\pihatplug$. Additionally, the probability of error $\delta$ and the corresponding restriction on sample size in the above statement come from conditioning on the event $\{J \subseteq \hat{J}\}$ (from Lemma~\ref{lem:subset-pert}).

\begin{remark}
  The above analysis can be extended to the multi-criteria preference
  learning setup $\cdim > 1$ whenever the distance function $\distf =
  \norm_\infty$ and the target set $\Sc$ is a polytope by extending
  the linear program to handle the additional constraints. Similar to
  the result above, the final upper bound on the error will then
  depend only on the constraints which are tight in the original and
  perturbed programs. It remains an interesting problem to study the
  asymptotic error for general convex target sets for which the
  optimization problem can be written as a convex program.
\end{remark}

\begin{proof}[Proof of Theorem~\ref{thm:local-nash}]
We will establish the claim by analyzing the structure of the solution
$\hat{\vars} = (\pihatplug, \hat{t})$ output by solving the perturbed
linear program~\eqref{eq:nash-lp-pert}. Recall from our notation that
given access to $\samp$ noisy samples of the matrix $\pnash$, the set
$\consethat = \consethat_1 \cup \consethat_2$ represents the set of
constraints which are tight for the perturbed LP with the empirical
matrix $\hpnash$ where $|\consethat| = d$. Also, observe that since
these samples come from the Gaussian passive sampling model, we will
have that the solution $\hat{\vars}$ will be unique\footnote{For the
case when an entire column of matrix $\pnash$ is determinisitc, those
constraints (if tight) can be combined with the other deterministic
constraints and the analysis can proceed from there.} with
probability~$1$.

Given this uniqueness, we can express the solution $\hat{\vars} =
(\hat{\robj}, \hat{t})$ as the solution to the linear system
\begin{align*}
\begin{bmatrix}
  \hpnash^\top_{\consethat_1} &
  -\ones_{|\consethat_1|}\\ I_{\consethat_2} &
  \zeros_{|\consethat_2|}\\ \ones_{\adim} & \zeros
\end{bmatrix}\cdot
\begin{bmatrix}
  \pihat_{|\consethat_1|}\\ \pihat_{|\consethat_1|}\\ \hat{t}
\end{bmatrix} =
\begin{bmatrix}
  \zeros_{|\consethat_1|}\\ \zeros_{|\consethat_2|}\\ \ones
\end{bmatrix}\;.
\end{align*}
Let us denote by the vector $\bvec{\consethat} \defn
[-\ones_{|\consethat_1|}, \zeros_{|\consethat_2|}]^\top$ and by the
matrix $\tilde{\cmat}_{\consethat} \defn [\hpnash^\top_{\consethat_1};
  I_{\consethat_2}]$. Using a standard block matrix inversion formula,
we have that the output solution
\begin{align*}
  \pihat = \hat{t} \tilde{\cmat}_{\consethat}^{-1}\bvec{\consethat}
  \quad \text{and} \quad \hat{t} = \frac{1}{\ones_d^\top
    \tilde{\cmat}_{\consethat}^{-1}\bvec{\consethat}}.
\end{align*}
In order to further simplify the above expression, let us denote by
$\hpnash_{\consethat_1, \consethat_2^c}$ the matrix formed by
selecting the rows $\consethat_1$ and the columns $\consethat_2^c
\defn [\adim]\setminus\consethat_2$ from the matrix $\hpnash$. The
estimate $\pihat$ is then given by
\begin{align*}
\pihat_{\consethat_2^c} = -\hat{t}\hpnash_{\consethat_1,
  \consethat_2^c}^{-T}\cdot\ones_{|\consethat_1|}\quad \text{and}\quad
\pihat_{\consethat_2} = 0.
\end{align*}
Plugging in the above value of the estimate $\pihat$ into the error
term $\DeltaA(\pihat, \pi^*)$, we obtain
\begin{align*}
  \DeltaA(\pihat, \robjs) &= \min_{i \in [d]} e_i^\top \pnash^\top
  \robjs - \min_{i' \in [d]} e_{i'}^\top \pnash^\top
  \pihat\\ &\stackrel{\1}{=} \max_{i' \in [d]} \min_{i \in [d]}
  e_i^\top \pnash_{\cdot, \consethat_2^c,
  }^\top\robjs_{\consethat_2^c} - e_{i'}^\top \pnash_{\cdot,
    \consethat_2^c }^\top
  \pihat_{\consethat_2^c}\\ &\stackrel{\2}{\leq} \max_{i \in
    \consethat_1} e_i^\top \pnash_{\consethat_1, \consethat_2^c
  }^\top\left( \robjs_{\consethat^c_2} -
  \pihat_{\consethat^c_2}\right)\\ &\stackrel{\3}{=} \max_{i \in
    \consethat_1} e_i^\top \pnash_{ \consethat_1,
    \consethat_2^c}^\top\left( \hat{t}\hpnash^{-T}_{\consethat_1,
    \consethat_2^c} -t^*(\pnash^\top_{\consethat_1,
    \consethat_2^c})^\pinv\right)\ones_{|\consethat_1|}
\end{align*}
where equality $\1$ follows from noting that one of the Nash
equilibria will have the components $\robjs_{\consethat_2} = 0$ from
Lemma~\ref{lem:subset-pert}, $\2$ follows from upper bounding the min
and noting that the only columns of $\pnash$ that can be minimizers
are those in $\consethat_1$ for large enough samples $\samp$, and $\3$
follows by substituting the values of $\pihat$ and the nash
distribution $\robjs$. We can further split the error term into two
components, one which looks at the error in value $\hat{t}$, and the
other corresponding to the error in matrix $\hpnash$.
\begin{align}
\label{eq:err-decom-local}
\DeltaA(\pihat, \robjs) &\leq t^*\max_{i \in \consethat_1} e_i^\top
\pnash_{ \consethat_1, \consethat_2^c}^\top\left(
\hpnash^{-\top}_{\consethat_1, \consethat_2^c}
-(\pnash^\top_{\consethat_1,
  \consethat_2^c})^\pinv\right)\ones_{|\consethat_1|} + (\hat{t} -
t^*)\max_{i \in \consethat_1} e_i^\top \pnash_{ \consethat_1,
  \consethat_2^c}^\top \hpnash^{-\top}_{\consethat_1,
  \consethat_2^c}\ones_{|\consethat_1|}
\end{align}
Let us denote by $\Aalt \defn \pnash^\top_{\consethat_1,
  \consethat_2^c}$. Then, we can rewrite the matrix $
\hpnash^{\top}_{\consethat_1, \consethat_2^c} = \Aalt + \Znoise_\samp$
where the matrix $\Znoise_\samp$ represents the zero-mean noise from
the Gaussian passive sampling model. Further, let $\Phi =
\Us\Sigs\Vs^\top$ denote the SVD of the matrix $\Aalt$. With this, the
first term in the above decomposition for any fixed value of $i$ is
given by
\begin{align*}
e_i^\top \Aalt((\Aalt + \Znoise_\samp)^{-1} -
\Aalt^\pinv)\ones_{|\consethat_1|} = e_i^\top \Us
\Sigs\left((\Sigs+\Us^\top\Znoise_\samp\Vs)^{-1} -
\Sigs^\pinv\right)\Us^\top \ones_{|\consethat_1|}.
\end{align*}
Let us denote by $\Ztnoise_\samp \defn \Us^\top \Znoise_\samp\Vs$ the
effective noise matrix. Using the block matrix inversion formula, the
above expression can be written as
\begin{align}
  e_i^\top \Aalt((\Aalt + \Znoise_\samp)^{-1} -
  \Aalt^\pinv)\ones_{|\consethat_1|} &= e_i^\top\begin{bmatrix} \Us_1&
  \Us_2
  \end{bmatrix}
  \begin{bmatrix}
    \Sigs_1 & 0\\ 0& 0
  \end{bmatrix}
  \left(
  \begin{bmatrix}
    \Sigs_1 + \Ztnoise_{11} & \Ztnoise_{12}\\ \Ztnoise_{21} &
    \Ztnoise_{22}
  \end{bmatrix}^{-1} - \begin{bmatrix}
  \Sigs_1^{-1}& 0 \nonumber\\ 0 & 0
\end{bmatrix}
  \right)
  \begin{bmatrix}
    \Us_1^\top\\ \Us_2^\top
  \end{bmatrix}
  \ones_{\consethat_1}\\ &\stackrel{\1}{=} e_i^\top \Us_1
  \Sigs_1\left((\Sigs_1 + \Ztnoise_{11} -
  \Ztnoise_{12}\Ztnoise_{22}^{-1}\Ztnoise_{21} )^{-1} - \Sigs_1^{-1}
  \right)\Us_1^\top \ones_{\consethat_1}.
\end{align}
where $\Sigs_1$ is the diagonal matrix with non-zero singular value of
$\Aalt$ and equality $\1$ follows from the fact that $\Us_2^\top
\ones_{\consethat_1} = 0$. To see this, recall that $\Us$ represents
the column space of the matrix $\Aalt = \pnash^\top_{\consethat_1,
  \consethat_2^c}$, that is the row space of the matrix
$\pnash^\top_{\consethat_1, \consethat_2^c}$ with $\Us_2$ representing
the null space of this matrix. Since we know that $(\robjs)^\top
\pnash^\top_{\consethat_1, \consethat_2^c} =
t^*\ones_{|\consethat_1|}$, all vectors in the null space will
necessarily have to be orthogonal to the vector
$\ones_{|\consethat_1|}$. Combining the above error bound with
equation~\eqref{eq:err-decom-local}, we find that
\begin{align*}
  \DeltaA(\pihat, \robjs) &\leq t^*\max_{i \in \consethat_1} e_i^\top
  \Us_1 \Sigs_1\left((\Sigs_1 + \Ztnoise_{11} -
  \Ztnoise_{12}\Ztnoise_{22}^{-1}\Ztnoise_{21} )^{-1} - \Sigs_1^{-1}
  \right)\Us_1^\top \ones_{\consethat_1}\\
  &\quad + (\hat{t} -
  t^*)\max_{i \in \consethat_1} e_i^\top \Us_1 \Sigs_1\left((\Sigs_1 +
  \Ztnoise_{11} - \Ztnoise_{12}\Ztnoise_{22}^{-1}\Ztnoise_{21} )^{-1}
  \right)\Us_1^\top \ones_{\consethat_1}\\
  &\stackrel{\1}{\leq} t^*\max_{i \in
    \consethat_1} e_i^\top \Us_1\left(\Ztnoise_{11} -
  \Ztnoise_{12}\Ztnoise_{22}^{-1}\Ztnoise_{21}
  \right)\Sigs_{1}^{-1}\Us_1^\top \ones_{\consethat_1} \\
  &\quad +
  t^* \max_{i \in
    \consethat_1} e_i^\top \Us_1\left(\sum_{s=2}^\infty  \left(\left(
  \Ztnoise_{12}\Ztnoise_{22}^{-1}\Ztnoise_{21}
  -\Ztnoise_{11} \right)\Sigma_1^{-1}\right)^s\right)\Us_1^\top \ones_{\consethat_1}\\
  &\quad + (\hat{t} -
  t^*)\max_{i \in \consethat_1} e_i^\top \Us_1 \Sigs_1\left((\Sigs_1 +
  \Ztnoise_{11} - \Ztnoise_{12}\Ztnoise_{22}^{-1}\Ztnoise_{21} )^{-1}
  \right)\Us_1^\top \ones_{\consethat_1}\\
  &{\leq} t^*\max_{i \in
    \consethat_1} e_i^\top \Us_1\left(\Ztnoise_{11} -
  \Ztnoise_{12}\Ztnoise_{22}^{-1}\Ztnoise_{21}
  \right)\Sigs_{1}^{-1}\Us_1^\top \ones_{\consethat_1} + c\sqrt{d}\cdot \|\Sigma_1^{-1} \|_2^2\cdot \|\Ztnoise_{12}\Ztnoise_{22}^{-1}\Ztnoise_{21}
  -\Ztnoise_{11} \|_2^2 \\
  &\quad + (\hat{t} -
  t^*)\max_{i \in \consethat_1} e_i^\top \Us_1 \Sigs_1\left((\Sigs_1 +
  \Ztnoise_{11} - \Ztnoise_{12}\Ztnoise_{22}^{-1}\Ztnoise_{21} )^{-1}
  \right)\Us_1^\top \ones_{\consethat_1}\\
  &\stackrel{\2}{\leq} t^*\max_{i \in
    \consethat_1} e_i^\top \Us_1\left(\Ztnoise_{11} -
  \Ztnoise_{12}\Ztnoise_{22}^{-1}\Ztnoise_{21}
  \right)\Sigs_{1}^{-1}\Us_1^\top \ones_{\consethat_1} + c\sqrt{d}\cdot \|\Sigma_1^{-1} \|_2^2\cdot \|\Ztnoise_{12}\Ztnoise_{22}^{-1}\Ztnoise_{21}
  -\Ztnoise_{11} \|_2^2 \\
  &\quad + c\sqrt{d}(\hat{t} -
  t^*)(1+\|\left(\Ztnoise_{11} -
\Ztnoise_{12}\Ztnoise_{22}^{-1}\Ztnoise_{21}
\right)\Sigs_{1}^{-1} \|_2)
  %&\quad + (\hat{t} - t^*)(1+O_P(\|\Ztnoise_{11} -
  %\Ztnoise_{12}\Ztnoise_{22}^{-1}\Ztnoise_{21} \|_2))\\
\end{align*}
where inequality $\1$ follows from the Taylor series expansion $(I-X)^{-1} = \sum_{s=0}^\infty X^s$ and holds whenever $\|X\|_2\leq 1$, which can be established for $n$ large enough, and inequality $\2$ follows from another such Taylor series expansion followed by a
Cauchy–Schwarz inequality. This establishes the desired claim.
\end{proof}

\paragraph{Asymptotic error under uniqueness assumption.}

Having established an upper bound on the error for the general setup
in Theorem~\ref{thm:local-nash}, we now consider the specific scenario
where the payoff matrix $\pnash$ is a preference matrix and has a
unique von Neumann winner $\robjs$. This is formalized in the
following assumption.
\begin{assumption}[Unique Nash equilibrium]
  \label{ass:unique-nash}
The payoff matrix $\pnash$ belongs to the set of preference matrices
$\prefS_{\adim,1}$ and has a unique mixed Nash equilibrium $\robjs$,
that is, $\robjs_i > 0$ for all $i \in [d]$.
\end{assumption}

For any preference matrix $\pnash \in \prefS_{\adim, 1}$ and the
Bernoulli passive sampling model discussed in Section~\ref{sec:main},
the asymptotic variance for the Gaussian passive sampling model is
$\sig_{i,j}^2 = \pnash_{i,j}\cdot(1-\pnash_{i,j})$. Let us represent
by $\Sigma_i$ the diagonal matrix corresponding to the variances along
the $i^{th}$ column of the matrix $\pnash$ with
\begin{align*}
  \Sigma_i = \text{diag}(\pnash_{1,i}\cdot(1-\pnash_{1,i}), \ldots,
  \pnash_{\adim,i}\cdot(1-\pnash_{\adim,i}).
\end{align*}
Given this notation, we now state a corollary which specializes the
result of Theorem~\ref{thm:local-nash} to payoff matrices satisfying
the above assumption.

\begin{corollary}
\label{cor:local-mixed}
For any payoff matrix $\pnash$ satisfying
Assumption~\ref{ass:unique-nash} and confidence $\delta \in (0,1)$, there exists a constant $\samp_0(\pnash, \delta)$ such that for all samples $\samp >\samp_0(\pnash, \delta)$, we have that the error $\DeltaA$ of the
plug-in estimate $\pihatplug$ satisfies
\begin{align}
\DeltaA(\pihatplug, \robjs) &\leq \|\Znoise_\samp \robjs\|_\infty +
O_d(\|\Znoise_\samp\|_2^2)\nonumber\\ &\leq
c\cdot\sqrt{\frac{\sig_{\pnash}^2\adim^2}{\samp}\log\left(
  \frac{\adim}{\delta}\right)} + O_d(\|\Znoise_\samp\|_2^2)\;,
\end{align}
with probability at least $1-\delta$ and the variance $\sig_{\pnash}^2
\defn \max_{i \in [d]} (\robjs)^{\top}\Sigma_i \robjs$.
\end{corollary}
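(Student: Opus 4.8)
The plan is to specialize Theorem~\ref{thm:local-nash} using the extra structure that Assumption~\ref{ass:unique-nash} imposes, and three facts do all the work. (i) Since $\pnash\in\prefS_{\adim,1}$ is the payoff of a symmetric zero-sum game, its value is $t^*=\tfrac12$ by~\eqref{eq:vn_half}; moreover $\hpnash\in\prefS_{\adim,1}$ as well (symmetry is enforced in~\eqref{eq:emp_pref}), so the optimum $\hat t$ of the perturbed program~\eqref{eq:nash-lp-pert}, being the game value of $\hpnash$, also equals $\tfrac12$. Hence $\hat t=t^*$ \emph{exactly}, and the third term of the bound in Theorem~\ref{thm:local-nash} (the one proportional to $\hat t-t^*$) vanishes identically. (ii) Because $\robjs$ is the unique Nash and lies in the interior of $\simplex_\adim$, no nonnegativity constraint is tight, so $\conset_2=\emptyset$; by complementary slackness every payoff constraint $\robj^\top\pnash e_i\ge t$ is then tight at the optimum, i.e. $\pnash^\top\robjs=\tfrac12\ones_\adim$ and, using $\pnash+\pnash^\top=\ones_\adim\ones_\adim^\top$, also $\pnash\robjs=\tfrac12\ones_\adim$. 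Thus $\conset_1=[\adim]$, and Lemma~\ref{lem:subset-pert} together with $|\consethat|=\adim=|\conset|$ forces $\consethat=\conset$ for $\samp>\samp_0(\pnash,\delta)$, so $\consethat_1=[\adim]$, $\consethat_2^c=[\adim]$ and $\Aalt=\pnash^\top$. (iii) The matrix $\pnash$ is invertible: if $\pnash v=0$ then $\pnash^\top v=(\ones_\adim^\top v)\ones_\adim$, whence $(\ones_\adim^\top v)^2=v^\top\pnash^\top v=v^\top\pnash v=0$, so $\ones_\adim^\top v=0$ and in fact $\pnash^\top v=0$; but then $\robjs\pm\eps v$ would remain a completely mixed maximin strategy, contradicting uniqueness. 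Consequently the SVD $\pnash^\top=\Us_1\Sigs_1\Vs_1^\top$ has full rank $\adim$ and the Schur-complement blocks $\Ztnoise_{12},\Ztnoise_{21},\Ztnoise_{22}$ appearing in Theorem~\ref{thm:local-nash} are vacuous.

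With (i)--(iii) in hand, the bound of Theorem~\ref{thm:local-nash} collapses to
\begin{equation*}
\DeltaA(\pihatplug,\robjs)\le t^*\max_{i\in[\adim]}e_i^\top\Us_1\,\Ztnoise\,\Sigs_1^{-1}\Us_1^\top\ones_\adim+c\sqrt{\adim}\,\|\Sigs_1^{-1}\|_2^2\,\|\Ztnoise\|_2^2,
\end{equation*}
where $\Us_1,\Vs_1$ are $\adim\times\adim$ orthogonal and $\Ztnoise=\Us_1^\top\Znoise_\samp\Vs_1$. Using orthogonality, $\Us_1\Ztnoise\Sigs_1^{-1}\Us_1^\top\ones_\adim=\Znoise_\samp\Vs_1\Sigs_1^{-1}\Us_1^\top\ones_\adim=\Znoise_\samp\pnash^{-\top}\ones_\adim$, and a one-line computation with $\pnash+\pnash^\top=\ones_\adim\ones_\adim^\top$ and $\pnash\robjs=\tfrac12\ones_\adim$ gives $\pnash^{-\top}\ones_\adim=2\robjs$; hence the first term is at most $2t^*\|\Znoise_\samp\robjs\|_\infty=\|\Znoise_\samp\robjs\|_\infty$ since $t^*=\tfrac12$. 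For the second term, $\|\Ztnoise\|_2=\|\Znoise_\samp\|_2$ by orthogonal invariance and $\|\Sigs_1^{-1}\|_2=1/\sigma_{\min}(\pnash)$ is an instance-dependent constant, which is exactly the claimed $O_\adim(\|\Znoise_\samp\|_2^2)$ term; the Neumann-series expansion used inside Theorem~\ref{thm:local-nash} is legitimate once $\|\Znoise_\samp\|_2<\sigma_{\min}(\pnash)$, which holds for $\samp$ beyond a threshold absorbed into $\samp_0(\pnash,\delta)$. This establishes the first inequality of the corollary.

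For the second inequality I would bound $\|\Znoise_\samp\robjs\|_\infty$ directly. Under the Gaussian passive sampling model each off-diagonal entry $(\Znoise_\samp)_{ki}$ is centered Gaussian with variance $\asymp\tfrac{\adim^2}{\samp}\sig_{ki}^2$ (after conditioning on the per-entry sample counts, which concentrate for $\samp$ large), the entries $\{(\Znoise_\samp)_{ki}\}_{i\ne k}$ are independent for each fixed $k$, and $(\Znoise_\samp)_{kk}=0$; hence $(\Znoise_\samp\robjs)_k$ is centered Gaussian with variance $\asymp\tfrac{\adim^2}{\samp}(\robjs)^\top\Sigma_k\robjs\le\tfrac{\adim^2}{\samp}\sig_\pnash^2$. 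A Gaussian tail bound plus a union bound over $k\in[\adim]$ yields $\|\Znoise_\samp\robjs\|_\infty\le c\sqrt{\tfrac{\sig_\pnash^2\adim^2}{\samp}\log(\adim/\delta)}$ with probability at least $1-\delta/2$; intersecting with the probability-$(1-\delta/2)$ event of Lemma~\ref{lem:subset-pert} completes the proof.

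The main obstacle is the middle step (ii)--(iii): one must argue carefully that interiority plus uniqueness of $\robjs$ forces $\consethat_1=[\adim]$, $\consethat_2=\emptyset$ and that the resulting submatrix $\pnash^\top$ is invertible, and then carry out the identity $\pnash^{-\top}\ones_\adim=2\robjs$ that converts the opaque term $\Us_1\Ztnoise\Sigs_1^{-1}\Us_1^\top\ones_\adim$ of Theorem~\ref{thm:local-nash} into the transparent instance-dependent quantity $\|\Znoise_\samp\robjs\|_\infty$. Everything downstream --- the Gaussian maximal inequality and the crude operator-norm control of the remainder --- is routine.
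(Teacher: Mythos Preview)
Your proposal is correct and follows essentially the same route as the paper's own proof: identify $\conset_1=[\adim]$, $\conset_2=\emptyset$ via interiority of $\robjs$, use Lemma~\ref{lem:subset-pert} to force $\consethat=\conset$, invoke $\hat t=t^*=\tfrac12$ from the preference-matrix structure of $\hpnash$, then collapse the bound of Theorem~\ref{thm:local-nash} via $\robjs=t^*\pnash^{-\top}\ones_\adim$ and finish with a sub-Gaussian maximal inequality. You in fact supply more justification than the paper does---notably a clean argument that uniqueness of a fully mixed $\robjs$ forces $\pnash$ to be invertible, and the explicit SVD unwinding $\Us_1\Ztnoise\Sigs_1^{-1}\Us_1^\top\ones_\adim=\Znoise_\samp\pnash^{-\top}\ones_\adim$---whereas the paper simply asserts both and plugs directly into an intermediate expression from inside the proof of Theorem~\ref{thm:local-nash}.
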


We make a few remarks on the above corollary. Observe that the above
is a high probability bound on the error $\DeltaA$ of the plug-in
estimator $\pihatplug$. Compared with the upper bounds of
Corollaries~\ref{cor:linfty} and~\ref{cor:lone}, the asymptotic bound
on the error above is instance dependent -- the effective variance
$\sig_{\pnash}^2$ depends on the underlying preference matrix
$\pnash$. In particular, this variance measures how well does the
underlying von Neumann winner $\robjs$ align with each variance
associated with each column of the matrix $\pnash$. In the worst case,
since each entry of $\pnash$ is bounded above by $1$, the variance
$\sigma_\pnash^2 = 1$ and we recover back the upper bounds from
Corollaries~\ref{cor:linfty} and~\ref{cor:lone} for the uni-criterion
case. The second term in the upper bound comprising the operator norm
of the sampling noise, $\|\Znoise_\samp\|_2^2$, can be shown to be
$O_{d}(\frac{1}{\samp})$ with high probability\footnote{$O_d$ notation hides the dependence on the dimensionality $d$.}, and therefore
contributes as a lower order term.

\begin{proof}[Proof of Corollary~\ref{cor:local-mixed}]
Observe that Assumption~\ref{ass:unique-nash} implies that the set of
tight constraints for the LP~\eqref{eq:nash-lp} are the ones
corresponding to payoff matrix $A$. That is, the set $\conset_1 = [d]$
and $\conset_2 = \phi$. Following Lemma~\ref{lem:subset-pert}, we
have, for $\samp$ large enough, the subset of tight constraints for
the perturbed LP~\eqref{eq:nash-lp-pert} satisfy $\consethat_1 =
\conset_1$ and $\consethat_2 = \conset_2$. Further, the uniqueness
assumption guarantees that the matrix $\pnash$ is full rank and hence,
invertible.

Since the matrix $\hpnash$ is itself a preference matrix (by
construction), the value $\hat{t} = t^* = \frac{1}{2}$ and therefore,
using the upper bound on the error from Theorem~\ref{thm:local-nash},
we have,
\begin{align*}
  \DeltaA(\pihat, \robjs) &\leq t^* \max_{i \in [\adim]}\left[e_i^\top
    \Znoise_\samp\pnash^{-\top}\ones_{\adim} \right] +
  O_d(\|\Znoise_\samp\|_2^2)\\ &\leq \|\Znoise_\samp \robjs\|_\infty +
  O_d(\|\Znoise_\samp\|_2^2)\;
\end{align*}
where the final inequality follows by noting that $\robjs =
t^*\pnash^{-\top}\ones_{\adim}$ and recall that $\Znoise_\samp$
denotes the zero mean noise-matrix obtained by the passive Gaussian
sampling model with (asymptotic) variance \mbox{$\sigma_{i,j}^2 =
  \pnash_{i,j}\cdot (1-\pnash_{i,j})$}.  Let us denote by
$\Sigma_\pnash$ the diagonal matrix measuring the alignment of the
Nash equilibrium $\robjs$ with the variance of the $i^{th}$ column of
the underlying matrix $\pnash$, that is,
\begin{align*}
  \Sigma_{\pnash}(i,j) = \begin{cases}
    (\robjs)^\top\Sigma_i\robjs\quad &\text{for } i = j \\ 0 \quad
    &\text{otherwise}
  \end{cases}.
\end{align*}
Following a similar calculation as in the proof of
Corollary~\ref{cor:lone}, we have that each entry of the matrix
$\Znoise_{i,j}$ will have samples $N_{i,j} =
\Theta(\frac{\samp}{\adim^2})$. Combined with a standard bound for the
maximum of sub-Gaussian random variables~\cite{wainwright2019}, we
have for $\samp > \samp_0(\pnash, \delta)$, with probability at least
$1-\delta$,
\begin{align*}
  \DeltaA(\pihat, \robjs) &\leq
  c\cdot\sqrt{\frac{\sig_{\pnash}^2\adim^2}{\samp}\log\left(
    \frac{\adim}{\delta}\right)} + O_d(\|\Znoise_\samp\|_2^2)\;,
\end{align*}
for some universal constant $c>0$ and where the variance
$\sig_{\pnash}^2 = \max_{i \in [d]}\Sigma_{\pnash}(i,i)$.
\end{proof}

\paragraph{Example: Generalized Rock-Papers-Scissor.}

While in the worst-case, the variance determining the sample
complexity of learning Nash from samples is $\sig_{\pnash}^2 =
\Theta(1)$, we will now construct a family of preference matrices
$\pnash^{(\adim)}$, for different values of dimension $\adim$, and
show that $\sig_{\pnash}^2 = O(\frac{1}{d})$. This exhibits that the
plug-in estimator $\pihatplug$ can indeed adapt to the problem
complexity and has a sample complexity of
$\tilde{O}(\frac{\adim}{\epsilon^2})$ for these class of easier
problems compared to the worst-case complexity of
$\tilde{O}(\frac{\adim^2}{\epsilon^2})$.

Our example is a high-dimensional generalization of the classical
Rock-Papers-Scissors (RPS) game. Recall, that the pay-off matrix for
the RPS game is
\begin{align*}
\pnash^{\textsf{RPS}} = \begin{tabular}{c|ccc} & \text{R} & \text{P} &
  \text{S} \\ \hline \text{R} &0.5 &0 &1\\ \text{P} &1 &0.5
  &0\\ \text{S} &0 &1 &0.5
\end{tabular}.
\end{align*}
Observe that the above payoff matrix encodes a deterministic game:
Rock beats Scissor, Scissor beats Paper, and Paper beats Rock. Similar
to this, we define a randomized version of the above RPS game with
payoffs where we allow a small probability $0.25$ with which the
lesser preferred item in a match-up can defeat the other, for example,
Scissor against Rock. Explicitly, such a payoff matrix $\pnash^{(3)}$
is given by
\begin{align*}
  \pnash^{(3)} = \begin{tabular}{c|ccc}
   & \text{R} & \text{P} & \text{S} \\
   \hline
  \text{R} &0.50 &0.25 &0.75\\
  \text{P} &0.75 &0.50 &0.25\\
  \text{S} &0.25 &0.75 &0.50
  \end{tabular}.
\end{align*}
Similar to the deterministic RPS game, the above randomized game can
be seen to have a unique Nash equilibrium with $\robjs = [\frac{1}{3},
  \frac{1}{3}, \frac{1}{3}]$.

We now describe a $\adim$-dimensional generalization of the above
payoff matrix, for any odd value of $d = 2d'+1$. For the element
$e_1$, the first entry will be set to $0.50$, the next $d'$ entries to
be $0.25$ and the final $d'$ entries to be $0.75$ --
game-theoretically, this means that the element $e_1$ loses to
elements $e_{2}-e_{d'+1}$ and is preferred over elements
$e_{d'+2}-e_{2d'+1}$, both with probability $0.75$. Similarly, for the
$i^{th}$ element, the row $\pnash^{(d)}_i$ is given by
\begin{equation*}
  \pnash^{(d)}(i,j) = \begin{cases} 0.50 \quad &\text{for } j =
    i\\ 0.25 \quad &\text{for } j \in [i+1\;(\text{mod } d),
      i+d'\;(\text{mod } d)]\\ 0.75 \quad &\text{for } j \in
    [i+d'+1\;(\text{mod } d), i+2d'\;(\text{mod } d)]
\end{cases}.
\end{equation*}
It is easy to see from the form of the pay-off matrix that each
element $e_i$ is preferred over $d'$ elements and has a lower
preference than $d'$ elements. By the symmetry of the payoff matrix,
the unique Nash equilibrium is given by the distribution $\robjs =
\frac{1}{\adim}\ones_\adim$ which lies in the interior of the simplex
$\simplex_\adim$ and hence satisfies
Assumption~\ref{ass:unique-nash}. Further, we can compute the variance
$\sig_{\pnash^{(d)}}^2$ as
\begin{equation*}
  \sig_{\pnash^{(d)}}^2 = \max_{i \in [d]} (\robjs)^\top \Sigma_i
  \robjs = \max_i \left( \frac{1}{4d^2} + \sum_{j\neq i}
  \frac{3}{16d^2} \right) \leq \frac{1}{d^2} + \frac{3}{16d},
\end{equation*}
Plugging this variance in the upper bound obtained in
Corollary~\ref{cor:local-mixed}, for $\samp > \samp_0(\pnash,
\delta)$, we have
\begin{equation}
  \Delta_{\pnash^{(d)}}(\pihatplug, \robjs) \leq c
  \sqrt{\frac{d}{\samp}\log\left(\frac{d}{\delta} \right)}
\end{equation}
with probability greater than $1-\delta$. Thus, to obtain an
$\epsilon$-accurate solution for the payoff matrix $\pnash^{(d)}$, the
plug-in estimator $\pihatplug$ requires
$\tilde{O}(\frac{\adim}{\eps^2})$ samples, a factor $\adim$ less than
the worst-case sample complexity of
$\tilde{O}(\frac{\adim^2}{\eps^2})$.

\subsection*{Proof of Lemma~\ref{lem:subset-pert}}

Recall from our discussion above that the the constraint set for the
LP~\eqref{eq:nash-lp}, the constraint polytope is a closed convex set
and has a finite number $|V| = O((2\adim+2)^{\frac{\adim+1}{2}})$,
which follows from McMullen's theorem~\cite{mcmullen1970}. Let us
denote each vertex of the polytope by $\vars_v = (\robj_v, t_v)$ and
the corresponding set of $\adim$ constraints which define the vertex
by $\conset_v$. Further, let $V^*$ denote the set of optimal vertices.

Because of the random Gaussian noise, for $\samp = \Omega(\adim^2)$,
we have that the solution $\hat{\vars} = (\pihatplug,\hat{t})$ will be
unique with probability $1$. In order to establish the claim of the
lemma, we can equivalently show that for any vertex $\vars_v \notin
V^*$, the corresponding vertex $\hat{\vars}_v$ will not be output by
the perturbed LP. This follows from the observation that for $\samp >
O(d^2\log(1/\delta))$ and for any vertices $\vars_v$, we have
\begin{align*}
  \prob\left(|\hat{t}_v - t_v| \geq
  c\sqrt{\frac{\adim^2}{\samp}\log\left(\frac{1}{\delta}
    \right)}\right) \leq \delta,
\end{align*}
for some universal constant\footnote{To be clear, the value of the
constant $c$ can change values across lines, but will always remain a
universal constant independent of problem parameters.} $c>0$. Taking a
union bound over all $|V|$ vertices, we have
\begin{align*}
  \prob\left(\exists\; \vars_v\text{ s.t. }|\hat{t}_v - t_v| \geq
  c\sqrt{\frac{\adim^2}{\samp}\log\left(\frac{|V|}{\delta}
    \right)}\right) \leq \delta.
\end{align*}
Therefore, whenever $\max_vt_v \leq t^* - \gamma$ for some $\gamma >
0$, we have that after $\samp > c\frac{\adim^2}{\gamma^2}\log\left(
\frac{|V|}{\delta}\right)$, with probability at least $1-\delta$, we
have,
\begin{align}
  \max_{v \notin V^*} \hat{t}_v < \min_{v \in V^*} \hat{t}_v\;,
\end{align}
and therefore the perturbed LP will have active constraint set
satisfying $\conset \subseteq \consethat$. This proves the desired
claim.

\section{Additional results and their proofs}
\label{app:add_res}

This section covers additional sample complexity results as well as
optimization algorithms for finding the Blackwell winner of a
multi-criteria preference learning instance.

\subsection{Sample complexity bounds for $\ell_1$-norm}

\begin{corollary}
\label{cor:lone}
Suppose that the distance $\distf$ is induced by the $\ell_1$-norm $\|
\cdot \|_1$. Then there are universal constants such that given a
sample size $\samp > c_1 \adim^2\cdim \log(\frac{c_2 \adim
  \cdim}{\conf})$, then for each valid target set $\Sc$, we have
\begin{equation}
\DeltaP(\pihatplug, \pi^*) \leq c_3 \cdim \sqrt{ \frac{\adim^2
    \cdim}{\samp}\log\left(\frac{c_3 \adim \cdim}{\conf}\right)}
\end{equation}
with probability exceeding $1 - \conf$.
\end{corollary}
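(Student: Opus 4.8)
\subsection{Proof of Corollary~\ref{cor:lone}}

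The plan is to derive this statement as an immediate consequence of the deterministic perturbation bound of Theorem~\ref{thm:plugin_upper} (specialized to $q = 1$), combined with exactly the concentration estimates already established in the proof of Corollary~\ref{cor:linfty}. First I would invoke Theorem~\ref{thm:plugin_upper} with $\preftil = \hpref_\samp$ and $q = 1$, which gives
\[
\DeltaP(\pihatplug, \pi^*) \leq 2 \max_{\ii \in [\adim]} \|\hpref_\samp(\cdot, \ii) - \pref(\cdot, \ii)\|_{\infty, 1}.
\]
The only new ingredient relative to the $\ell_\infty$ case is to pass from the mixed $\|\cdot\|_{\infty,1}$ norm to the $\|\cdot\|_{\infty,\infty}$ norm that was controlled in Corollary~\ref{cor:linfty}. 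Since for each $\ii$ the slice $\hpref_\samp(\cdot, \ii) - \pref(\cdot, \ii)$ lies in $[-1,1]^{\adim \times \cdim}$ and the inner $\ell_1$-norm is taken over the $\cdim$ criteria, the elementary bound $\|u\|_1 \leq \cdim \|u\|_\infty$ for $u \in \real^\cdim$ yields
\[
\max_{\ii} \|\hpref_\samp(\cdot, \ii) - \pref(\cdot, \ii)\|_{\infty, 1} \;\leq\; \cdim \cdot \max_{\ii} \|\hpref_\samp(\cdot, \ii) - \pref(\cdot, \ii)\|_{\infty, \infty} \;=\; \cdim \cdot \max_{\ir, \ic, \jj} |\hpref^\jj(\ir, \ic) - \pref^\jj(\ir, \ic)|.
\]

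Finally I would import, essentially verbatim, the high-probability bound on $\max_{\ir,\ic,\jj}|\hpref^\jj(\ir,\ic) - \pref^\jj(\ir,\ic)|$ proved inside Corollary~\ref{cor:linfty}: under the stated sample-size assumption $\samp > c_1\adim^2\cdim\log(c_2\adim\cdim/\conf)$, the per-cell counts $N_i$ obey the sandwich relation~\eqref{eq:Ni}, and conditioned on that a Hoeffding bound together with a union bound over the $O(\adim^2\cdim)$ tensor entries gives $\max_{\ir,\ic,\jj}|\hpref^\jj(\ir,\ic) - \pref^\jj(\ir,\ic)| \leq c\sqrt{\frac{\adim^2\cdim}{\samp}\log(c\adim\cdim/\conf)}$ with probability at least $1-\conf$. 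Chaining the three displays and absorbing constants produces the claimed bound $\DeltaP(\pihatplug, \pi^*) \leq c_3 \cdim \sqrt{\frac{\adim^2\cdim}{\samp}\log(c_3\adim\cdim/\conf)}$.

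There is no substantive obstacle here; the argument is a direct corollary, and the entire content beyond Corollary~\ref{cor:linfty} is the factor $\cdim$ incurred in converting $\ell_\infty$ to $\ell_1$ over the $\cdim$ score coordinates. I would remark that this factor cannot be removed by working through Theorem~\ref{thm:plugin_upper}, since two $\cdim$-dimensional score vectors that disagree in every coordinate by $\Theta(1/\sqrt{N_i})$ are genuinely at $\ell_1$-distance $\Theta(\cdim/\sqrt{N_i})$, with no cancellation across criteria available at the level of the perturbation bound.
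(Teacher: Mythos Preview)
Your proposal is correct and follows essentially the same approach as the paper: both invoke the $q=1$ case of Theorem~\ref{thm:plugin_upper} and then control $\max_{\ii}\|\hpref(\cdot,\ii)-\pref(\cdot,\ii)\|_{\infty,1}$ via Hoeffding-plus-union-bound concentration, picking up the extra factor $\cdim$ from the $\ell_1$ aggregation over criteria. The only cosmetic difference is that you explicitly use $\|u\|_1\le\cdim\|u\|_\infty$ and then import the entrywise bound from Corollary~\ref{cor:linfty} wholesale, whereas the paper re-derives the concentration from scratch by first bounding each $\|\pref^\jj(\cdot,\ic)-\hpref^\jj(\cdot,\ic)\|_\infty$, summing over $\jj$, and taking a union bound; the resulting estimates are identical.
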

\begin{proof}
Being somewhat more explicit with our notation, let $N_{(i_1, i_2,
  j)}$ denote the number of samples observed under the passive
sampling model at index $(i_1, i_2, j)$ of the tensor. Proceeding as
in equation~\eqref{eq:Hoeff-cond}, we have
\begin{equation*}
  \Pr \left\{ \|\pref^\jj(\cdot, \ic) - \hpref^\jj(\cdot,
  \ic)\|_\infty \geq c\sqrt{ \frac{\log (c \adim / \delta)}{\min_{i_1
        \in [\adim]} N_{(i_1, i_2, j)}} }\right\} \leq \delta.
\end{equation*}
Summing over all criteria $\jj \in [\cdim]$ and then applying the
union bound yields
\begin{equation*}
\Pr \left\{ \|\pref(\cdot, \ic) - \hpref(\cdot, \ic)) \|_{\infty, 1}
\geq c\cdim \sqrt{ \frac{\log (c \adim \cdim/ \delta)}{\min_{i_1, j}
    N_{(i_1, i_2, j)}} }\right\} \leq \delta.
\end{equation*}
Finally, in order to obtain a bound on the maximum deviation in the
$(\infty, 1)$-norm, we take a union bound over all $\adim$ choices of
the index $i_2$, and apply inequality~\eqref{eq:Ni} to obtain
\begin{equation*}
  \max_{\ic}\|\pref(\cdot, \ic) - \hpref(\cdot, \ic)) \|_{\infty, 1}
  \leq c \cdim \sqrt{\frac{\adim^2\cdim}{\samp} \log\left(c
    \frac{\adim\cdim}{\conf}\right)}
\end{equation*}
with probability exceeding $1 - \delta$.
\end{proof}

A few comments regarding the corollary are in order. The above
corollary suggests that the sample complexity required for obtaining
an $\eps$-accurate solution with respect to the $\ell_1$ norm is
$\samp = \widetilde{O}(\frac{\adim^2\cdim^3}{\eps^2})$. Observe that
this bound is a factor of $\cdim^2$ worse than the corresponding one
for $\ell_\infty$ norm established in Corollary~\ref{cor:linfty}. This
additional sample complexity occurs since for any vector $v \in
\real^\cdim$, we have $\|v\|_1 \leq \cdim \|v \|_\infty$. This implies
that the error when measured with respect to $\ell_1$ can be upto
$\cdim$ times larger; since the sample complexity scales as
$\frac{1}{\eps^2}$, the corresponding increase with respect to the
number of criteria $\cdim$ is quadratic.

\subsection{Optimization algorithms}

Recall that Theorem~\ref{thm:opt} established that the objective
function $\vg(\robj;\pref, \Sc, \norm_q)$ is convex in $\robj$ and
Lipschitz with respect to the $\ell_1$ norm. This implies that one
could compute the plug-in solution $\pihatplug$ as a solution to a
constrained optimization problem. In this section, we discuss a few
specific algorithms based on zeroth-order and first-order methods for
obtaining such a solution.

\subsubsection{Zeroth-order optimization}

\begin{algorithm}[t!]
	\DontPrintSemicolon \KwIn{Time steps $\T$, step size $\step$,
          smoothing radius $\smrad$} \textbf{Initialize:} $\parop_1 =
        0$\; \For{$\ti = 1, \ldots, \T$}{ $\robj_\ti = \arg
          \max_{\robj \in \simplex_\adim} \inner{\parop_\ti}{\robj} -
          \reg(\robj)$ where $\reg(\robj) = \sum_\ii \robj_\ii
          \log(\robj_\ii)$\; Sample $\smvec_\ti$ uniformly from the
          Euclidean unit sphere $\{\smvec\; | \; \| \smvec\|_2 = 1
          \}$\; For every $\ii \in [\adim]$, query points $\z_{1, \ii}
          = \pref(\robj_\ti+ \smrad \smvec_\ti, \ii)$ and $\z_{2, \ii}
          = \pref(\robj_\ti+ \smrad \smvec_\ti, \ii)$\; Set
          $\vg(\robj_\ti+ \smrad \smvec_\ti;\pref, \Sc, \distf) =
          \max_{i}\distf(\z_{1, \ii}, \Sc) $ and $\vg(\robj_\ti-
          \smrad \smvec_\ti;\pref, \Sc, \distf) =
          \max_{i}\distf(\z_{2, \ii}, \Sc) $\; Set sub-gradient
          estimate $\subg_\ti = \frac{\adim}{2\smrad}
          \left(\vg(\robj_\ti + \smrad \smvec_\ti;\pref, \Sc, \distf)
          - \vg(\robj_\ti - \smrad \smvec_t;\pref, \Sc, \distf)
          \right)\smvec_\ti$\; Update $\parop_{\ti+1} = \parop_{\ti} -
          \step \subg_{\ti}$ } \KwOut{$\bar{\robj}_\T =
          \frac{1}{\T}\sum_{\ti = 1}^\T\robj_\ti$}
	\caption{Zeroth-order method for multi-criteria preference learning}
  \label{alg:zopl}
\end{algorithm}

Zeroth-order methods for minimizing a function $\f(\x)$ over $\x \in
\X$ work with a function query oracle. That is, at each time step, the
algorithm has access to an oracle which returns the value $\f(\x)$ for
any point $\x \in \X$. In our setup, since we are interested in
minimizing the value function $\vg(\robj;\pref, \Sc, \distf)$ over
$\robj \in \simplex_\adim$, such a function query requires access to
the target set $\Sc$ via an oracle $\oracleS^0$ such that
\begin{equation*}
 \oracleS^0(\z) \rightarrow \min_{\z_1 \in \Sc}\distf(\z, \z_1) \;,
\end{equation*}
for the underlying distance function $\distf(\cdot)$. The oracle
$\oracleS^0$ essentially takes as input a score vector $\z \in
[0,1]^\cdim$ and outputs the distance of this point to the target set
$\Sc$. Given this oracle, it is easy to see that for any $\robj$, one
can compute the corresponding value function $\vg(\robj;\pref, \Sc,
\distf)$.

There have been several algorithms proposed for optimization with such
oracles when the underlying function $\f$ is convex~\cite{flaxman2005,
  agarwal2010, shamir2013, duchi2015, nesterov2017, shamir2017} or
non-convex, smooth~\cite{ghadimi2013}. The key idea in the proposed
algorithms is to utilize the zeroth-order oracle to construct
estimates of the (sub-)gradient of the function $\f$ using a class of
techniques called \emph{randomized smoothing}. The algorithms then
differ in the construction of these estimates depending on the
underlying randomness as well as on the number of oracle calls during
each time step.

Given the results of Theorem~\ref{thm:opt}, we can restrict our focus
on algorithms for the class of convex Lipschitz function $\f$. To this
end, Shamir~\cite{shamir2017} proposed an algorithm for optimizing
such functions which required \emph{two} function evaluations at each
time. The algorithm, adapted to the multi-criteria preference learning
problem, is detailed in Algorithm~\ref{alg:zopl}. For our setup, we
select the negative entropy regularization, $\reg(\robj) = \sum_\ii
\robj_\ii \log(\robj_\ii)$ to suit the geometry of our domain
\mbox{$\X = \simplex_\adim$}.

At each time step $\ti$, the proposed algorithm maintains an estimate
of the distribution $\robj_\ti$, and queries the function value
$\vg(\cdot;\pref, \Sc, \distf)$ at two points---namely, $\robj_\ti +
\smrad \smvec_t$ and $\robj_\ti - \smrad \smvec_t$---where the random
vector $\smvec$ is sampled uniformly from the Euclidean unit sphere
and $\smrad >0$ represents the smoothing radius. Given these queries,
we compute the sub-gradient estimate
\begin{equation*}
  \subg_\ti \defn \frac{\adim}{2\smrad} \left(\vg(\robj_\ti + \smrad
  \smvec_\ti;\pref, \Sc, \distf) - \vg(\robj_\ti - \smrad
  \smvec_t;\pref, \Sc, \distf) \right)\smvec_\ti\;.
\end{equation*}
This sub-gradient estimate is then used to update the parameter
estimate $\robj_{\ti+1}$ using the mirror descent algorithm with the
specified regularization function. The zeroth-order method in
Algorithm~\ref{alg:zopl} does not require the underlying function to
be smooth and hence works for our problem setup with arbitrary
non-differentiable distance functions. We can now obtain the following
convergence result, based on Theorem~1 from the work of
Shamir~\cite{shamir2017}.

\begin{proposition}
  \label{prop:conv_zero}
Under the conditions of Theorem~\ref{thm:opt}, suppose that we run
Algorithm~\ref{alg:zopl} for $\T$ iterations with step-size $\step_\ti
= \frac{c}{\cdim^{1/q}\sqrt{\adim \T}}$ and smoothing radius $\smrad =
\frac{c\log\adim}{\sqrt{\T}}$.  Then the resulting sequence $\robj_1,
\robj_2, \ldots, \robj_\T$ satisfies
\begin{equation*}
  \vg\left(\bar{\robj}_\T; \pref, \Sc, \| \cdot \|_q \right) \leq
  \min_{\pi \in \simplex_{\adim} } \vg(\pi; \pref, \Sc, \| \cdot \|_q
  ) + c \cdim^{\frac{1}{q}}\cdot \sqrt{\frac{\adim \log^2\adim}{\T} }
\end{equation*}
where $\bar{\robj}_\T = \frac{1}{\T}\sum_{\ti=1}^\T \robj_\ti$.
\end{proposition}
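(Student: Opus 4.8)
The plan is to check that $\vg(\cdot;\pref,\Sc,\norm_q)$ meets the hypotheses of the two-point zeroth-order convergence guarantee of Shamir~\cite{shamir2017} (Theorem~1 there, in its mirror-descent form with a Bregman regularizer) and then to read off the rate with the prescribed parameters. Theorem~\ref{thm:opt} supplies the two structural facts needed: $\vg$ is convex on $\simplex_\adim$ and $\cdim^{1/q}$-Lipschitz with respect to $\norm_1$. The first thing I would observe is that both properties extend beyond the simplex: the map $\robj \mapsto \pref(\robj,\ii)$ is affine and hence defined on all of $\real^\adim$, and $\z \mapsto \distf(\z,\Sc) = \inf_{\z' \in \Sc}\|\z - \z'\|_q$ is convex and $1$-Lipschitz on $\real^\cdim$; thus $\vg$ extends to a convex, $\cdim^{1/q}$-Lipschitz-in-$\ell_1$ function on $\real^\adim$ whose restriction to $\simplex_\adim$ is the object we optimize. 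This is what licenses querying $\vg$ at the points $\robj_\ti \pm \smrad \smvec_\ti$ in Algorithm~\ref{alg:zopl}, which may step slightly outside the simplex; the mirror-descent iterates $\robj_\ti$ themselves always remain in $\simplex_\adim$ (being $\arg\max_{\robj \in \simplex_\adim} \inner{\parop_\ti}{\robj} - \reg(\robj)$), so the negative-entropy regularizer is only ever evaluated where it is finite.

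I would then instantiate Shamir's analysis with domain $\X = \simplex_\adim$ and regularizer $\reg(\robj) = \sum_\ii \robj_\ii \log \robj_\ii$, which is $1$-strongly convex with respect to $\norm_1$ and has range at most $\log \adim$ over the simplex. The estimator $\subg_\ti = \frac{\adim}{2\smrad}(\vg(\robj_\ti + \smrad \smvec_\ti) - \vg(\robj_\ti - \smrad \smvec_\ti))\smvec_\ti$ is the standard unbiased estimate of the gradient of the ball-smoothed surrogate $\vg_\smrad(\robj) \defn \En_{\smvec'}[\vg(\robj + \smrad \smvec')]$, which has smoothing bias $\|\vg_\smrad - \vg\|_\infty \le \smrad \cdim^{1/q}\sqrt{\adim}$ (the Euclidean Lipschitz constant of $\vg$ is at most $\sqrt{\adim}$ times its $\ell_1$ one). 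The one genuinely nontrivial ingredient, which I would import from~\cite{shamir2017} rather than reprove, is the \emph{refined} dual-norm second-moment bound $\En[\|\subg_\ti\|_\infty^2] \lesssim \adim \cdim^{2/q}\log \adim$: this exploits the fact that a uniformly random spherical direction is nearly orthogonal to any fixed gradient direction, so $(\vg(\robj + \smrad \smvec) - \vg(\robj - \smrad \smvec))^2$ is, in expectation, a factor $\adim$ smaller than its pointwise Lipschitz bound, which together with $\En[\|\smvec\|_\infty^2] \lesssim \log \adim / \adim$ is precisely what prevents an extra factor of $\sqrt{\adim}$ from entering the final rate.

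Finally I would combine the textbook mirror-descent inequality $\sum_{\ti=1}^\T \inner{\subg_\ti}{\robj_\ti - \robj} \le \frac{\log \adim}{\step} + \frac{\step}{2}\sum_{\ti=1}^\T \|\subg_\ti\|_\infty^2$, take expectations over $\{\smvec_\ti\}$, apply convexity of $\vg_\smrad$ and Jensen's inequality to pass from the iterates to $\bar{\robj}_\T$, and add back the smoothing bias. With $\step = \frac{c}{\cdim^{1/q}\sqrt{\adim \T}}$, the minimizer of $\frac{\log \adim}{\step \T} + \frac{\step}{2}\En[\|\subg\|_\infty^2]$ given the second-moment bound above, and $\smrad = \frac{c\log \adim}{\sqrt{\T}}$, chosen so the smoothing bias is of the same order as the optimization error, both contributions become $O(\cdim^{1/q}\sqrt{\adim \log^2 \adim / \T})$, which is the claimed bound (read, as is standard for such randomized schemes, as an expectation over the algorithm's internal randomness; the inequality $\vg(\bar{\robj}_\T) \ge \min_{\robj \in \simplex_\adim}\vg(\robj)$ holds trivially). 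I expect the only real obstacle to be bookkeeping --- matching the norms, the strong-convexity constant, and the regularizer range to the precise hypotheses of Shamir's theorem, and making the out-of-simplex extension of $\vg$ rigorous --- since the quantitative core (the spherical two-point variance bound on a non-Euclidean geometry) is quoted directly from~\cite{shamir2017}.
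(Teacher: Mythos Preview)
Your proposal is correct and follows the same approach as the paper: verify via Theorem~\ref{thm:opt} that $\vg$ is convex and $\cdim^{1/q}$-Lipschitz in $\norm_1$, note that the negative-entropy regularizer is $1$-strongly convex in $\norm_1$, and invoke Theorem~1 of Shamir~\cite{shamir2017}. The paper's proof is a three-line black-box citation, whereas you additionally unpack the internals of Shamir's argument (the out-of-simplex extension, the smoothing bias, the refined $\ell_\infty$ second-moment bound, and the mirror-descent combination), but the route is identical.
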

\begin{proof}
By Theorem~\ref{thm:opt}, the value function $\vg(\robj;\pref, \Sc,
\norm_q)$ is convex and $\lipv = \cdim^{\frac{1}{q}}$-Lipschitz with
respect to $\norm_1$. Also, the choice of the regularizer $\reg(\robj)
= \sum_\ii \robj_\ii \log(\robj_\ii)$ is $1$-strongly convex with
respect to the $\norm_1$. Plugging in the above values in Theorem~1
from \cite{shamir2017} establishes the above convergence rate.
\end{proof}
Thus, in order to obtain a distribution $\hat{\robj}$ that is
$\eps$-close to $\robjs$ in function value, we need to run
Algorithm~\ref{alg:zopl} for $\T =
O\left(\frac{\cdim^{\frac{2}{q}}\adim \log^2\adim}{\eps^2} \right)$
iterations. Also, note that each iteration of the algorithm requires
$\adim$ calls to the oracle $\oracleS^0$. Therefore the total oracle
complexity of the procedure is
$O\left(\frac{\cdim^{\frac{2}{q}}\adim^2 \log^2\adim}{\eps^2}
\right)$.

\subsection{First-order optimization}

In this section, consider some first-order methods to compute the
plug-in estimator.  \iffalse
\begin{equation*}
\pmin_{\robj \in \simplex_{\adim}} \vg(\robj;\G) = \pmin_{\robj_1 \in
  \simplex_{\adim}}\pmax_{\ic \in [\adim]}\left[\distf(\pref(\robj_1,
  \ic), \Sc)\right]\;.
\end{equation*}
In order to understand sub-gradient based algorithms and the kind of
oracle access we require to the target set $\Sc$, we will begin by
first computing the sub-gradient $\sg_\robj$ of the value function. We
will restrict our attention to the special case where the distance
function is given by some norm~$\norm$.  \fi
%
Let us denote by $\subd \vg(\robj)$ the set of sub-differentials of
the function $\vg(\cdot;\pref, \Sc, \norm)$ evaluated at
$\robj$. Further, let the set $\maxset(\robj)$ denote the set of
maximizers for a policy $\robj$, that is,
\begin{equation}
\label{eq:maxset}
  \maxset(\robj) = \left\lbrace\tilde{\robj} \in \simplex_\adim\; | \;
  \tilde{\robj} \in \arg \max_{\robj_2 \in \simplex_\adim} \pmin_{\z
    \in \Sc} \left[\|\pref(\robj, \robj_2) - \z \|
    \right]\right\rbrace\;.
\end{equation}
Note that both of these quantities depend implicitly on the tuple
$(\Sc, \pref, \| \cdot \|)$, but we have dropped this dependence in
the notation.  Given the setup above, Lemma~\ref{lem:subg_vg} below
characterizes this set $\subd \vg(\robj)$ for any smooth $\ell_q$ norm
(with $1 <q < \infty$).

\begin{lemma} \label{lem:subg_vg}
Suppose that the distance is induced by a smooth $\ell_q$ norm for $1
< q < \infty$. Then the set of sub-differentials of $\vg$ at $\robj$
is given by:
  \begin{equation*}
    \subd\vg(\robj) = \text{conv}\left\lbrace \frac{\pref(\cdot,
      \robj_2) \left[ \pref(\robj, \robj_2) - \proj_\Sc(\pref(\robj,
        \robj_2))\right]}{\|\pref(\robj, \robj_2) -
      \proj_\Sc(\pref(\robj, \robj_2)) \|_q}\; \vert \; \robj_2 \in
    \maxset(\robj) \right\rbrace\;,
  \end{equation*}
where $\proj_\Sc(\z)$ denotes the unique projection of the point $\z$
onto set $\Sc$ along $\norm_q$.
\end{lemma}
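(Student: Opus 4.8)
The plan is to recognize $\vg(\cdot;\pref,\Sc,\norm_q)$ as a pointwise maximum, over the compact set $\simplex_\adim$, of the family of convex functions $\sg_{\robj_2}(\robj)\defn\distf(\pref(\robj,\robj_2),\Sc)=\pmin_{\z\in\Sc}\|\pref(\robj,\robj_2)-\z\|_q$, and then to apply Danskin's theorem (equivalently, the standard subdifferential calculus for suprema of convex functions over a compact index set) twice — once for the outer maximum over $\robj_2$, and once for the inner distance-to-$\Sc$ map, which is itself a minimum. First I would fix a distribution $\robj$ with $\vg(\robj)>0$ — the regime in which the claimed expression is well defined — and observe that for each fixed $\robj_2$ the map $\robj\mapsto\pref(\robj,\robj_2)$ is linear, with $\jj$-th coordinate $\robj^\top\pref^\jj\robj_2$, so its Jacobian is encoded by the matrix $\pref(\cdot,\robj_2)\in[0,1]^{\adim\times\cdim}$ whose $\jj$-th column is $\pref^\jj\robj_2$. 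Hence $\sg_{\robj_2}$ is convex (the convex distance-to-$\Sc$ function composed with a linear map), and $(\robj,\robj_2)\mapsto\sg_{\robj_2}(\robj)$ is jointly continuous. The outer step then yields $\subd\vg(\robj)=\text{conv}\bigcup_{\robj_2\in\maxset(\robj)}\subd\sg_{\robj_2}(\robj)$, with $\maxset(\robj)$ the nonempty compact maximizer set from~\eqref{eq:maxset}.

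The second step computes $\subd\sg_{\robj_2}(\robj)$ for $\robj_2\in\maxset(\robj)$. Writing $\z_0\defn\pref(\robj,\robj_2)$, the fact that $\robj_2$ is a maximizer gives $\distf(\z_0,\Sc)=\vg(\robj)>0$, so $\z_0\notin\Sc$. Because $1<q<\infty$, the norm $\norm_q$ is strictly convex, hence the projection $\proj_\Sc(\z_0)$ onto the closed convex set $\Sc$ is unique; applying Danskin's theorem to $\distf(\z,\Sc)=\pmin_{\z'\in\Sc}\|\z-\z'\|_q$ (unique minimizer) shows that $\z\mapsto\distf(\z,\Sc)$ is differentiable at $\z_0$, with gradient equal to that of $\z\mapsto\|\z-\proj_\Sc(\z_0)\|_q$ at $\z_0$ — i.e.\ the gradient of the smooth norm $\norm_q$ evaluated at $\z_0-\proj_\Sc(\z_0)\neq\zeros$ (here smoothness of $\norm_q$ away from the origin is exactly what is used). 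Composing with the linear inner map by the chain rule, $\subd\sg_{\robj_2}(\robj)$ is the singleton $\big\{\tfrac{\pref(\cdot,\robj_2)[\pref(\robj,\robj_2)-\proj_\Sc(\pref(\robj,\robj_2))]}{\|\pref(\robj,\robj_2)-\proj_\Sc(\pref(\robj,\robj_2))\|_q}\big\}$, and substituting into the outer formula produces the stated characterization.

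The main obstacle is the first, infinite-index supremum rule: justifying $\subd\big(\psup_{\robj_2}\sg_{\robj_2}\big)(\robj)=\text{conv}\bigcup_{\robj_2\in\maxset(\robj)}\subd\sg_{\robj_2}(\robj)$ requires joint continuity of $(\robj,\robj_2)\mapsto\sg_{\robj_2}(\robj)$, convexity of each $\sg_{\robj_2}$, and compactness of $\simplex_\adim$ (hence of $\maxset(\robj)$) — all available here — before invoking the classical theorem; this is where the genuine care goes, rather than into any computation. A secondary technical point is confirming that the gradient produced by Danskin's min-rule indeed lies in the normal cone $N_\Sc(\proj_\Sc(\z_0))$, which is precisely the first-order optimality condition characterizing the projection, so that the infimal-convolution subdifferential $\subd(\norm_q\,\square\,\iota_\Sc)(\z_0)$ collapses to the single element above. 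Finally, I would note that the case $\vg(\robj)=0$ lies outside the scope of the displayed formula; there, instead, $0\in\subd\vg(\robj)$ since $\robj$ is then a global minimizer of $\vg$.
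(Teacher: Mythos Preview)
Your proposal is correct and follows essentially the same approach as the paper: apply Danskin's theorem to the outer maximum over $\robj_2$ to obtain the convex hull over $\maxset(\robj)$, then compute the gradient of the inner distance-to-$\Sc$ map and apply the chain rule through the linear map $\robj\mapsto\pref(\robj,\robj_2)$. The only notable difference is that the paper obtains the gradient formula $\nabla\distf(\z,\Sc)=(\z-\proj_\Sc(\z))/\|\z-\proj_\Sc(\z)\|$ by citing an external result on differentiability of metric projections for smooth norms, whereas you derive it yourself via a second application of Danskin's theorem to the infimum (using uniqueness of the projection under strict convexity of $\norm_q$); your version is thus more self-contained, and your explicit handling of the regularity hypotheses (joint continuity, compactness of the index set) and of the degenerate case $\vg(\robj)=0$ is more careful than the paper's terse argument.
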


\begin{algorithm}[t!]
	\DontPrintSemicolon \KwIn{Time steps $\T$, step size $\step$}
        \textbf{Initialize:} $\parop_1 = \ones_\cdim$\; \For{$\ti = 1,
          \ldots, \T$}{ Set the distribution $\robj_\ti =
          \frac{\parop_\ti}{\|\parop_\ti\|_1}$\; Obtain $\sg_\ti \in
          \text{conv}\left\lbrace \frac{\pref(\cdot, \robj_2) \left[
              \pref(\robj_\ti, \robj_2) - \proj_\Sc(\pref(\robj_\ti,
              \robj_2))\right]}{\|\pref(\robj_\ti, \robj_2) -
            \proj_\Sc(\pref(\robj_\ti, \robj_2)) \|_q}\; \vert \;
          \robj_2 \in \maxset(\robj_\ti) \right\rbrace$ \hfill[See
            eq.\eqref{eq:maxset} for $\maxset(\robj_\ti)$] \; Update
          $\parop_{\ti+1, \ii} = \robj_{\ti, \ii}\exp(-\step \sg_{\ti,
            \ii}) $ } \KwOut{$\bar{\robj}_\T = \frac{1}{\T}\sum_{\ti =
            1}^\T\robj_\ti$}
	\caption{First-order method for multi-criteria preference learning}
  \label{alg:fopl}
\end{algorithm}

%%%%%%%%%%%
\iffalse
\begin{lemma}[Sub-differential set for $\vg(\robj;\G)$]
  \label{lem:subg_vg}
Consider a preference problem indexed by $\G = (\pref, \Sc, \distf)$
where the $\distf$ function is given by a smooth\footnote{We say that
a norm is smooth if its unit ball is a smooth convex body, that is,
the unit ball has a unique supporting hyperplane at each boundary
point.}  norm. Then the set of sub-differentials of $\vg$ at $\robj$
is given by
\begin{equation*}
    \subd\vg(\robj;\G) = \text{conv}\left\lbrace \frac{\pref_{\robj_2}
      \left[ \pref(\robj, \robj_2) - \proj_\Sc(\pref(\robj,
        \robj_2))\right]}{\|\pref(\robj, \robj_2) -
      \proj_\Sc(\pref(\robj, \robj_2)) \|}\; \vert \; \robj_2 \in
    \maxset(\robj;\G) \right\rbrace\;,
  \end{equation*}
where $\pref_{\robj_2}$ represents the matrix given by $\pref(\cdot, \robj_2) \in \ubox^{\adim \times \cdim}$ and $\proj_\Sc(\z)$ denotes the unique projection of the point $\z$ onto set $\Sc$ along $\norm$.
\end{lemma}
\fi
%%%%%%%%%%%

We defer the proof of the above lemma to later in the section. Note that in order to access such a sub-gradient, we need access to an oracle $\oracleS^1$ that provides projection queries of the form
\begin{equation*}
  \oracleS^1(\z) \rightarrow \arg \min_{\z_1 \in \Sc} \distf(\z, \z_1).
\end{equation*}
The oracle $\oracleS^1$ takes in a point $\z$ and outputs the closest point in the set $\Sc$ to this point. Given such an oracle, we can compute the sub-gradient of the function $\vg(\robj;\pref, \Sc, \distf)$ using Lemma~\ref{lem:subg_vg} by evaluating it at the point given by $\pref(\robj, \robj_2)$ for some $\robj_2 \in \maxset(\robj)$.

Given access to such a projection oracle $\oracleS^1$, we detail out a procedure based on a standard implementation of mirror descent with entropic regularization (or Exponentiated gradient method) in Algorithm~\ref{alg:fopl} to minimize the objective $\vg(\robj;\G)$. Note that we select the negative entropy function, $r(\robj) = \sum_\ii \robj_\ii \log(\robj_\ii)$, as the regularization function for the mirror descent procedure since our parameter space is given by the simplex $\simplex_\cdim$ and the negative entropy function is known to be 1-strongly convex with respect to $\norm_1$ over this space.

The algorithm works by maintaining at each time instance a distribution $\robj_\ti$ over the set of objects and updates it via an exponentiated gradient update. That is, the sub-gradient $\sg_\ti$ is evaluated at the current point $\robj_\ti$ using access to both $\oracleS^1$ and $\oracleS^0$, and is used to update each coordinate of the variable $\parop_\ti$. The updated distribution $\robj_{\ti+1}$ is obtained via a KL-projection of $\parop_\ti$ onto the simplex $\simplex_\cdim$, which can be shown to be equivalent to the normalization $\parop/\|\parop \|_1$. We now proceed to prove a convergence result for this gradient-based Algorithm~\ref{alg:fopl}, based on a standard analysis of the mirror descent procedure (for example, see \cite[Theorem 4.2]{bubeck2015}).

\begin{proposition} \label{prop:conv_fopl}
Suppose the conditions of Theorem~\ref{thm:opt} hold and consider any $\ell_q$-norm for $1 < q < \infty$. Suppose that running Algorithm~\ref{alg:zopl} for $\T$ iterations with step-size $\step_\ti = \frac{1}{\cdim^{1/q}}\sqrt{\frac{2\log \adim}{\T}}$ produces a sequence $\robj_1, \robj_2, \ldots, \robj_\T$. Then we have
  \begin{equation*}
    \vg (\bar{\robj}_\T; \pref, \Sc, \| \cdot \|_q) \leq \min_{\robj \in \simplex_{\adim}} \vg(\robj; \pref, \Sc, \| \cdot \|_q) + \cdim^{\frac{1}{q}}\cdot \sqrt{\frac{2\log\adim}{\T} }\;
  \end{equation*}
  where $\bar{\robj}_\T = \frac{1}{\T}\sum_{\ti=1}^\T \robj_\ti$.
\end{proposition}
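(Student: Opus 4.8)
The plan is to recognize Algorithm~\ref{alg:fopl} as an instance of mirror descent with the negative-entropy mirror map, run on the convex objective $\vg(\cdot;\pref,\Sc,\norm_q)$, and then to invoke the standard convergence guarantee for subgradient mirror descent on convex Lipschitz functions. Three ingredients do all the work. First, by Theorem~\ref{thm:opt}, the map $\robj \mapsto \vg(\robj;\pref,\Sc,\norm_q)$ is convex on $\simplex_\adim$ and Lipschitz in the $\ell_1$ norm with constant $\cdim^{1/q}$. Second, by Lemma~\ref{lem:subg_vg}, for every $1 < q < \infty$ the vector $\sg_\ti$ computed at iteration $\ti$ belongs to $\subd\vg(\robj_\ti)$, so the iteration is a genuine subgradient step; moreover, since $\vg$ is $\cdim^{1/q}$-Lipschitz with respect to $\norm_1$, every element of $\subd\vg(\robj)$ has dual ($\ell_\infty$) norm at most $\cdim^{1/q}$, i.e.\ $\|\sg_\ti\|_\infty \le \cdim^{1/q}$ for all $\ti$. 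Third, the regularizer $\reg(\robj) = \sum_\ii \robj_\ii\log\robj_\ii$ is $1$-strongly convex with respect to $\norm_1$ on $\simplex_\adim$, and the exponentiated-gradient update $\parop_{\ti+1,\ii} = \robj_{\ti,\ii}\exp(-\step\sg_{\ti,\ii})$ followed by $\ell_1$-normalization coincides with the proximal update $\robj_{\ti+1} = \arg\min_{\robj\in\simplex_\adim}\{\step\inner{\sg_\ti}{\robj} + D_\reg(\robj,\robj_\ti)\}$, where $D_\reg$ denotes the Bregman (KL) divergence generated by $\reg$.

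With these facts in hand, I would apply the standard mirror-descent bound (e.g.\ \cite[Theorem~4.2]{bubeck2015}): for a convex, $L$-Lipschitz (with respect to $\norm_1$) objective and a mirror map that is $1$-strongly convex with respect to $\norm_1$, running $\T$ steps from $\robj_1$ with constant step size $\step$ yields
\begin{equation*}
  \vg(\bar{\robj}_\T;\pref,\Sc,\norm_q) - \min_{\robj\in\simplex_\adim}\vg(\robj;\pref,\Sc,\norm_q) \;\le\; \frac{D_\reg(\robjs,\robj_1)}{\step\T} + \frac{\step L^2}{2},
\end{equation*}
where $\robjs$ is any minimizer of $\vg$. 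Since $\parop_1 = \ones_\adim$ normalizes to the uniform distribution $\robj_1$, we have $D_\reg(\robjs,\robj_1) = \log\adim - \ent(\robjs) \le \log\adim$; combined with $L = \cdim^{1/q}$ this gives the upper bound $\log\adim/(\step\T) + \step\cdim^{2/q}/2$. Substituting $\step = \cdim^{-1/q}\sqrt{2\log\adim/\T}$ balances the two terms and produces exactly $\cdim^{1/q}\sqrt{2\log\adim/\T}$, which is the claimed rate.

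The argument is essentially bookkeeping once these ingredients are assembled, so there is no substantive obstacle; the only points requiring care are (i) checking that the exponentiated-gradient step is indeed the entropic mirror-descent proximal step, which is a short computation via the KKT conditions for KL-projection onto the simplex, and (ii) verifying that the initialization normalizes to the uniform distribution so that the Bregman term is bounded by $\log\adim$ rather than by a larger quantity. It is also worth noting that the restriction $1 < q < \infty$ is precisely what makes Lemma~\ref{lem:subg_vg} applicable---it guarantees that $\proj_\Sc$ is single-valued and hence that the explicit subgradient used in Algorithm~\ref{alg:fopl} is well defined.
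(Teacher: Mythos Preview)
Your proposal is correct and follows essentially the same approach as the paper: invoke the convexity and $\cdim^{1/q}$-Lipschitzness from Theorem~\ref{thm:opt}, the $1$-strong convexity of the negative-entropy mirror map with respect to $\norm_1$, and then apply the standard mirror-descent bound \cite[Theorem~4.2]{bubeck2015}. The paper's own proof is a terse three-line version of exactly this; you have simply filled in the details (the role of Lemma~\ref{lem:subg_vg}, the Bregman-divergence bound $\log\adim$ from the uniform initialization, and the step-size balancing) that the paper leaves implicit.
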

\begin{proof}
  Note that the function $\vg(\robj;\pref, \Sc, \norm_q)$ is convex and $\cdim^{\frac{1}{q}}$-Lipschitz with respect to the $\ell_1$ norm from Theorem~\ref{thm:opt}. Further, the mirror map given by negative entropy function is 1-strongly convex with respect to $\norm_1$. Plugging in these values in Theorem 4.2 from \cite{bubeck2015}  establishes the required convergence rate.
\end{proof}

In order to obtain an $\eps$-accurate solution in function value, it suffices to run the above algorithm for $\T =O\left(\frac{\cdim^{\frac{2}{q}}\log \adim}{\eps^2}\right)$ iterations, with each iteration using $1$ call to the oracle $\oracleS^1$ and $\adim$ calls to the oracle $\oracleS^0$ (to obtain the set $\maxset$). Thus, we see that the total oracle complexity changes as $\oracleS^1: O\left(\frac{\cdim^{\frac{2}{q}}\log \adim}{\eps^2}\right)$ calls and \mbox{$\oracleS^0: O\left(\frac{\cdim^{\frac{2}{q}}\adim\log \adim}{\eps^2}\right)$} calls -- effectively, an $O({\adim \log \adim})$ decrease in the calls to $\oracleS^0$ is compensated by a corresponding increase of $O(\frac{\log\adim}{\eps^2})$ calls to the stronger oracle $\oracleS^1$.

\paragraph{Proof of Lemma~\ref{lem:subg_vg}.}
  Consider the function $\phi(\robj_1, \robj_2) = \pmax_{\z \in \Sc} \|\pref(\robj_1, \robj_2)  - z \|$ over the domain $\robj_2 \in \simplex_\adim$. For any fixed $\robj_2$, we have that the function $\phi(\robj_1, \robj_2)$ is convex in $\robj_1$. Thus, by Danskin's theorem, we have that the subdifferential set is given by:
  \begin{equation}\label{eq:dans_pd}
    \subd\vg(\robj) = \text{conv}\left\lbrace \frac{\subd\phi(\robj, \robj_2)}{\subd\robj}\; \vert \; \robj_2 \in \maxset(\robj) \right\rbrace\;,
  \end{equation}
where $\text{conv}$ represents the convex hull of the set. Let us now focus on the partial derivative $\frac{\subd\phi(\robj, \robj_2)}{\subd\robj}$ for any $\robj_2$ which is a maximizer. This partial derivative involves differentiation of a metric projection onto a convex set, which has been studied extensively in the literature of convex analysis~\cite{penot1970, zajivcek1984, alimov2014}. Recently, Balestro et al.~\cite{balestro2019} established that for distance functions given by smooth norms, the derivative of metric projection for any $\z \notin \Sc$ is given by:
  \begin{equation*}
    \nabla \distf(\z, \Sc) = \nabla \min_{\z_2 \in \Sc} \|\z - \z_2 \| = \frac{\z -  \proj_\Sc(\z) }{\|\z - \proj_\Sc(\z) \|}\;,
  \end{equation*}
  where $\proj_\Sc(\z)$ denotes the unique projection of the point $\z$ onto set $\Sc$. Combining this with the chain rule of differentiation, we have that:
  \begin{equation*}
    \frac{\subd\phi(\robj, \robj_2)}{\subd\robj} = \frac{\pref(\cdot, \robj_2) \left[ \pref(\robj, \robj_2) - \proj_\Sc(\pref(\robj, \robj_2))\right]}{\|\pref(\robj, \robj_2) - \proj_\Sc(\pref(\robj, \robj_2)) \|_q}\;.
  \end{equation*}
  The above, in conjunction with equation~\eqref{eq:dans_pd} establishes the desired claim.
\qed

\section{Details of user study}\label{app:exp}
In this section, we provide the deferred details of the user study from Section~\ref{sec:pol_drive}.
\paragraph{Self-driving environment.} The self-driving environment consists of an autonomous car which can be controlled by providing real-valued inputs acceleration and angular acceleration at every time step. We allow the policies to have access to the dynamics of this environment. Observe that there is no explicit reward function in the environment and each policy differs in the way it optimizes a chosen reward function to drive the car forward in a safe manner.

\paragraph{Policies.} The MPC based Policies A-E were constructed by optimizing linear rewards comprising features F1-F9 as
\begin{itemize}
    \item[F1] Distance from the starting point along y-axis.
    \item[F2] Velocity of the autonomous car.
    \item[F3] Distance from the center of each lane.
    \item[F4] Gaussian collision detector for nearby objects.
    \item[F5] Collision detector which works at smaller radii than
    F4.
    \item[F6] Over-speeding feature which penalizes higher speeds.
    \item[F7] Reward for over-taking vehicles in the front.
    \item[F8] Gaussian off-road detector.
    \item[F9] Reward to promote speeding up near obstacles.
\end{itemize}
For each of the base policy, we set the weights of the features to encode different driving behaviors.
\begin{itemize}
    \item[Pol A]  programmed to prefer the right-most lane and progress forward at a slow speed.
    \item[Pol B] programmed to prefer the left-most lane and move forward as fast as possible.
    \item[Pol C] programmed to be conservative, avoids collision and proceeds forward.
    \item[Pol D] programmed to get attracted towards other cars and obstacles.
    \item[Pol E] programmed to prefer center lane and exhibit opportunistic behavior by moving ahead of other cars.
\end{itemize}

\paragraph{Details of target set and linear weights.} We selected the two data-oblivious sets to trade-off between the criteria C1-C5 as
\begin{equation}\label{eq:exp_targ}
\begin{small}
\begin{gathered}
  \Sc_1 = \{\z\; | \; \z \in [0,1]^{5}, \z_1 \geq 0.3, \z_2 \geq 0.3, \z_3 \geq 0.2, \z_4 \geq 0.3, \z_5 \geq 0.4 \},\\
  \Sc_2 = \{\z\; | \; \z \in [0,1]^{5}, \z_1 \geq 0.25, \z_2 \geq 0.25, \z_3 \geq 0.25, \z_4 \geq 0.25, \z_5 \geq 0.25, \z_1 + \z_5 \geq 0.9 \}.
\end{gathered}
\end{small}
\end{equation}
In addition, we selected 9 set of weights $\wght_{1:9}$ for linearly combining the different criteria.
\begin{itemize}
    \item[$\wght_1$:] Average of the users' self-reported weights.
    \item[$\wght_2$:] Weight vector obtained by regressing the overall criterion on C1-C5 with squared loss as \begin{equation*}
        \wght_2 \in \arg \min_{\wght \in \simplex_5} \sum_{\ir, \ic}(\prefov(\ir, \ic) - \sum_{\jj}\wght(j)\pref^\jj(\ir, \ic))^2.
    \end{equation*}
    \item[$\wght_3$:] Weight obtained by regressing Bradley-Terry-Luce (BTL) scores. The BTL parametric model assumes a real-valued score $v_i$ for each policy and posits that $\Pr(\text{Pol } i \prefeq \text{Pol }j) = \exp(v_i)/\exp(v_i) + \exp(v_j)$. Denoting the scores obtained from the overall preferences by $v^{\sf{ov}}$ and those obtained from the individual criteria by $v^\jj$ for $\jj \in [5]$, the weight
    \begin{equation*}
        \wght_2 \in \arg \min_{\wght \in \simplex_5} \sum_{\ii}
        (v^{\sf{ov}}_\ii - \sum_{\jj} \wght(\jj) v^\jj_i)^2.
    \end{equation*}
    \item[$\wght_4$:] Data-oblivious weight $\wght_4 =[0.2, 0.2, 0.2, 0.2, 0.2]$.
    \item[$\wght_5$:] Data-oblivious weight $\wght_5 =[0.25, 0.5/3, 0.5/3, 0.5/3, 0.25]$.
    \item[$\wght_6$:] Data-oblivious weight $\wght_6 =[0.30, 0.4/3, 0.4/3, 0.4/3, 0.30]$.
    \item[$\wght_7$:] Data-oblivious weight $\wght_7 =[0.5/3, 0.5/3, 0.25, 0.5/3, 0.25]$.
    \item[$\wght_8$:] Data-oblivious weight $\wght_8 =[0.4/3, 0.4/3, 0.3, 0.4/3, 0.30]$.
    \item[$\wght_9$:] Data-oblivious weight $\wght_9 =[0.3, 0.1/2, 0.3, 0.1/2, 0.3]$.
\end{itemize}
The set of data oblivious weights were chosen to account for different trade-offs along the criteria C1-C5 including the uniform weight $\wght_4$.

\paragraph{Data Collection.} Table~\ref{tab:phase_one} shows the comparison data collected from the Mturk users in both the phases of the experiment. The entry $i, j$ of the comparison matrices represents the fraction of users which preferred Policy $i$ over Policy $j$. The top 5 rows and columns of each matrix correspond to the baseline policies while the bottom rows correspond to the two randomized policies R1 and R2 obtained as the Blackwell winner corresponding to sets $\Sc_1$ and $\Sc_2$ respectively.

In addition, we would like to highlight some details from an
experiment design perspective. Since the experiment was run in two
phases, we could not guarantee the same set of subjects to participate
in both parts of the experiment. In order to limit distribution
shifts, we restricted the nationality of the subjects to United States
and began both the phases on the same time and day of the week. Also,
in order to prevent biased evaluations, the ordering of the policy
pairs as well as the ordering policies within a comparison was
randomized across the users.

Figures~\ref{fig:ins},~\ref{fig:pol} and~\ref{fig:ques} shows the
experiment setup we used for obtaining comparison data from Amazon
Mechanical Turk users consisting of the instructions, the policy
comparison page and the questionnaire that the users were asked to
fill out.  \setlength{\tabcolsep}{3pt}
\begin{table}[t]
  \begin{footnotesize}
\parbox{.33\linewidth}{
\centering
\begin{tabular}{c|ccccc}
  & A & B & C & D & E\\
  \hline
A & 0.50 & 0.64 & 0.45 & 0.41 & 0.39\\
B & 0.36 & 0.50 & 0.30 & 0.30 & 0.25\\
C & 0.55 & 0.70 & 0.50 & 0.55 & 0.57\\
D & 0.59 & 0.70 & 0.45 & 0.50 & 0.52\\
E & 0.61 & 0.75 & 0.43 & 0.48 & 0.50\\
\hline
R1 & 0.49 & 0.80 & 0.22 & 0.46 & 0.29\\
R2 & 0.49 & 0.88 & 0.66 & 0.61 & 0.41\\
\end{tabular}
\subcaption{C1: Aggressiveness}
}
\parbox{.33\linewidth}{
\centering
\begin{tabular}{c|ccccc}
  & A & B & C & D & E\\
  \hline
A & 0.50 & 0.57 & 0.50 & 0.50 & 0.41\\
B & 0.43 & 0.50 & 0.30 & 0.39 & 0.45\\
C & 0.50 & 0.70 & 0.50 & 0.43 & 0.59\\
D & 0.50 & 0.61 & 0.57 & 0.50 & 0.57\\
E & 0.59 & 0.55 & 0.41 & 0.43 & 0.50\\
\hline
R1 & 0.46 & 0.71 & 0.32 & 0.51 & 0.39\\
R2 & 0.51 & 0.71 & 0.61 & 0.59 & 0.51\\
\end{tabular}
\subcaption{C2: Predictability}
}
\parbox{.33\linewidth}{
\centering
\begin{tabular}{c|ccccc}
  & A & B & C & D & E\\
  \hline
A & 0.50 & 0.16 & 0.25 & 0.32 & 0.30\\
B & 0.84 & 0.50 & 0.89 & 0.82 & 0.68\\
C & 0.75 & 0.11 & 0.50 & 0.73 & 0.61\\
D & 0.68 & 0.18 & 0.27 & 0.50 & 0.41\\
E & 0.70 & 0.32 & 0.39 & 0.59 & 0.50\\
\hline
R1 & 0.73 & 0.22 & 0.76 & 0.78 & 0.76\\
R2 & 0.90 & 0.24 & 0.44 & 0.66 & 0.66\\
\end{tabular}
\subcaption{C3: Quickness}
}

\vspace{3mm}

\parbox{.33\linewidth}{
\centering
\begin{tabular}{c|ccccc}
  & A & B & C & D & E\\
  \hline
A & 0.50 & 0.59 & 0.45 & 0.57 & 0.39\\
B & 0.41 & 0.50 & 0.32 & 0.34 & 0.32\\
C & 0.55 & 0.68 & 0.50 & 0.48 & 0.59\\
D & 0.43 & 0.66 & 0.52 & 0.50 & 0.50\\
E & 0.61 & 0.68 & 0.41 & 0.50 & 0.50\\
\hline
R1 & 0.44 & 0.80 & 0.20 & 0.39 & 0.24\\
R2 & 0.41 & 0.80 & 0.71 & 0.59 & 0.39\\
\end{tabular}
\subcaption{C4: Conservativeness}
}
\parbox{.33\linewidth}{
\centering
\begin{tabular}{c|ccccc}
  & A & B & C & D & E\\
  \hline
A & 0.50 & 0.52 & 0.41 & 0.50 & 0.43\\
B & 0.48 & 0.50 & 0.32 & 0.55 & 0.55\\
C & 0.59 & 0.68 & 0.50 & 0.55 & 0.57\\
D & 0.50 & 0.45 & 0.45 & 0.50 & 0.50\\
E & 0.57 & 0.45 & 0.43 & 0.50 & 0.50\\
\hline
R1 & 0.54 & 0.68 & 0.32 & 0.49 & 0.41 \\
R2 & 0.63 & 0.73 & 0.59 & 0.61 & 0.54\\
\end{tabular}
\subcaption{C5: Collision Risk}
}
\parbox{.33\linewidth}{
\centering
\begin{tabular}{c|ccccc}
  & A & B & C & D & E\\
  \hline
A & 0.50 & 0.39 & 0.25 & 0.43 & 0.34\\
B & 0.61 & 0.50 & 0.30 & 0.50 & 0.50\\
C & 0.75 & 0.70 & 0.50 & 0.57 & 0.61\\
D & 0.57 & 0.50 & 0.43 & 0.50 & 0.48\\
E & 0.66 & 0.50 & 0.39 & 0.52 & 0.50\\
\hline
R1 & 0.66 & 0.76 & 0.29 & 0.59 & 0.39\\
R2 & 0.66 & 0.73 & \textbf{0.66} & 0.56 & 0.51\\
\end{tabular}
\subcaption{Overall Preferences}
}
\caption{Each matrix consists of pairwise comparisons between policies elicited from a user study with around 50 participants on Mturk. An entry $i, j$ of the comparison matrices represents the fraction of users which preferred Policy i over Policy j. Policies A-E comprise the base set of policies while Policies R1-R2 are the randomized Blackwell winners obtained from the sets in equation~\eqref{eq:exp_targ}. While Policy C is the overall von Neumann winner, Policy R2 is preferred over it by 66\% of the users.}
\label{tab:phase_one}
\end{footnotesize}
\end{table}

\begin{figure}
\captionsetup{font=small}
\includegraphics[scale = 0.4]{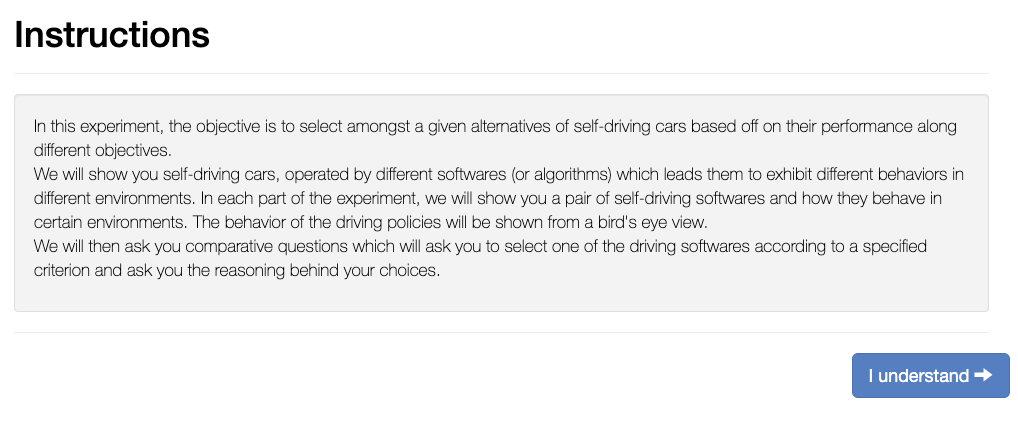}
\includegraphics[scale = 0.4]{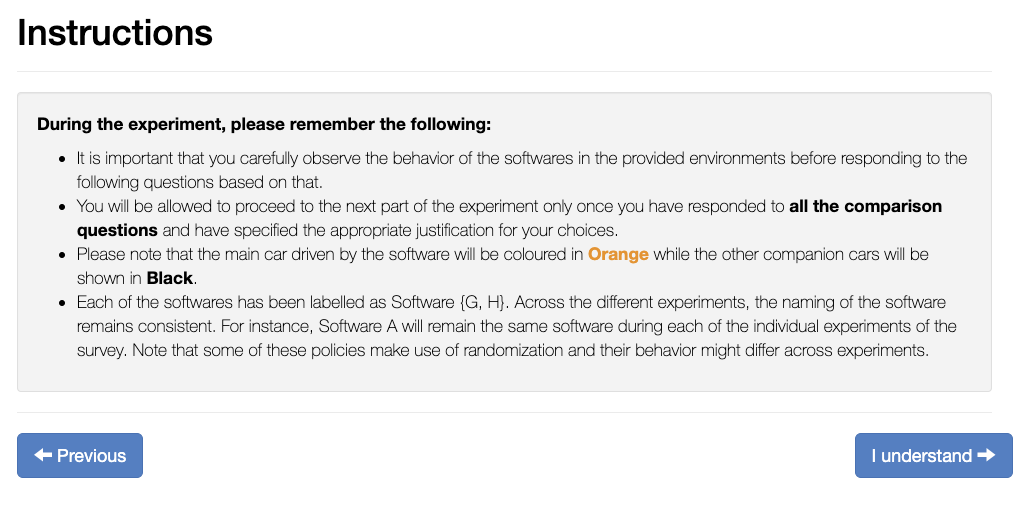}
\centering
\caption{Instructions provided to the users before the experiment began. The users were asked to compare behavior of policies and were told to expect some policies to exhibit a randomized behavior.}
\label{fig:ins}
\end{figure}

\begin{figure}
\includegraphics[scale = 0.4]{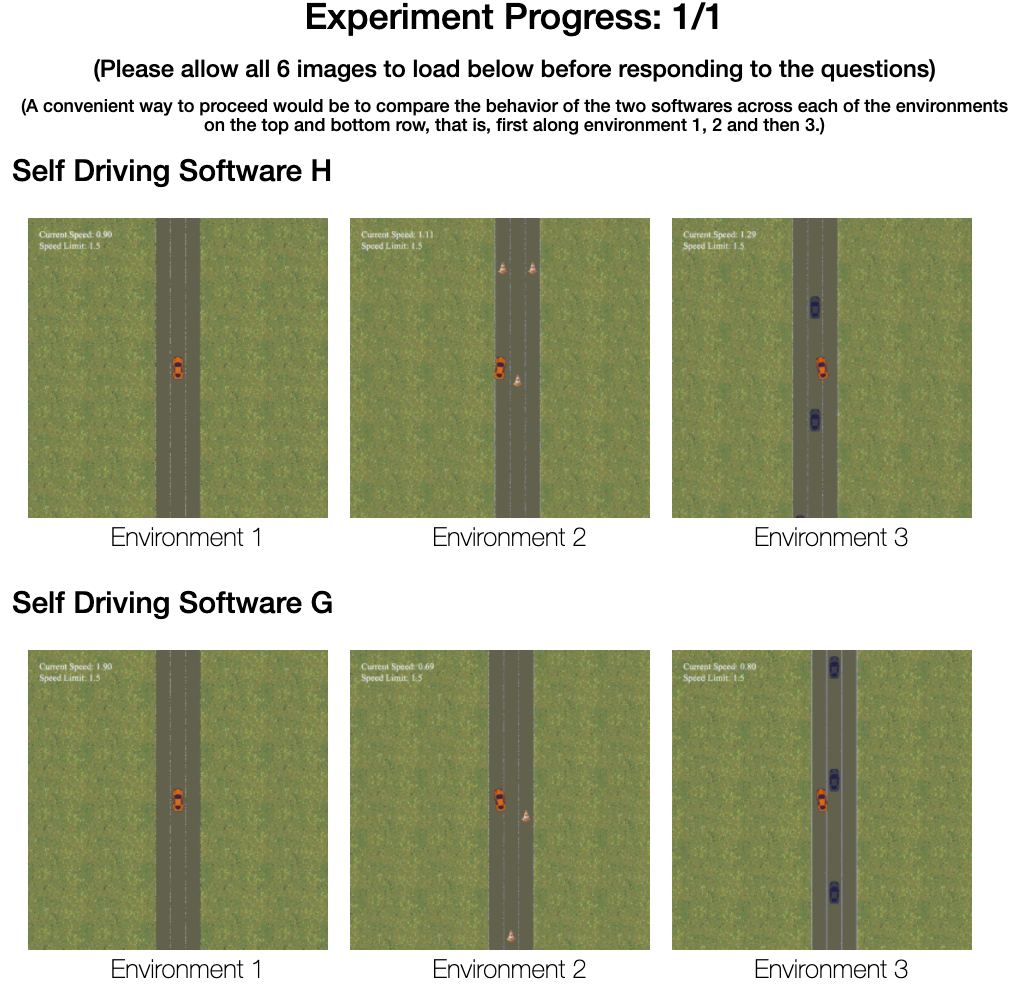}
\centering
\caption{Layout of the experiment where each panel shows a GIF exhibiting a Policy controlling the autonomous vehicle in one of the worlds of the environment. The users were instructed to compare behaviors across each of the columns before proceeding to answer the questions.}
\label{fig:pol}
\end{figure}

\begin{figure}
\includegraphics[scale = 0.4]{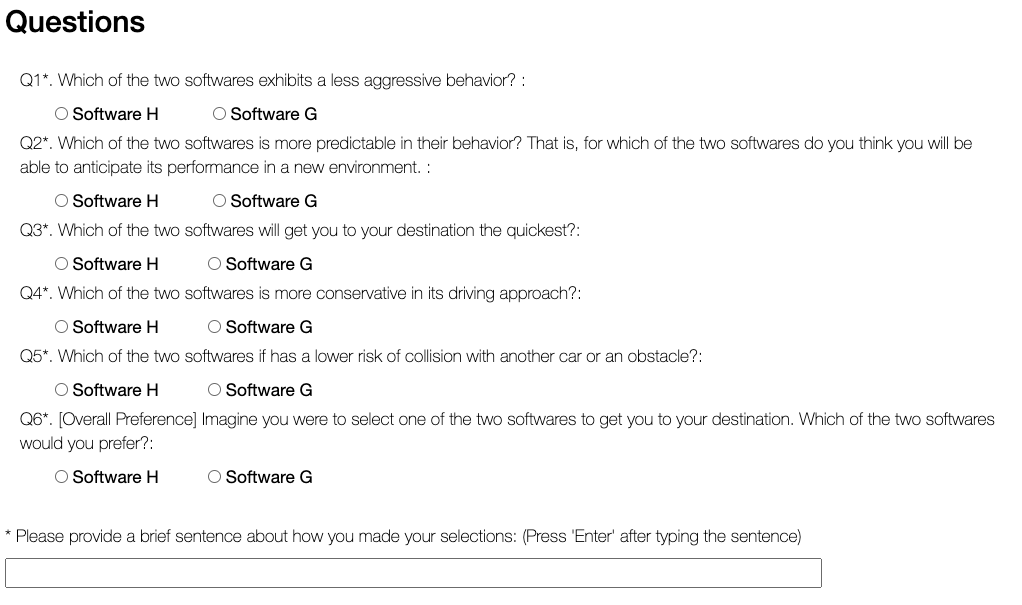}
\includegraphics[scale = 0.4]{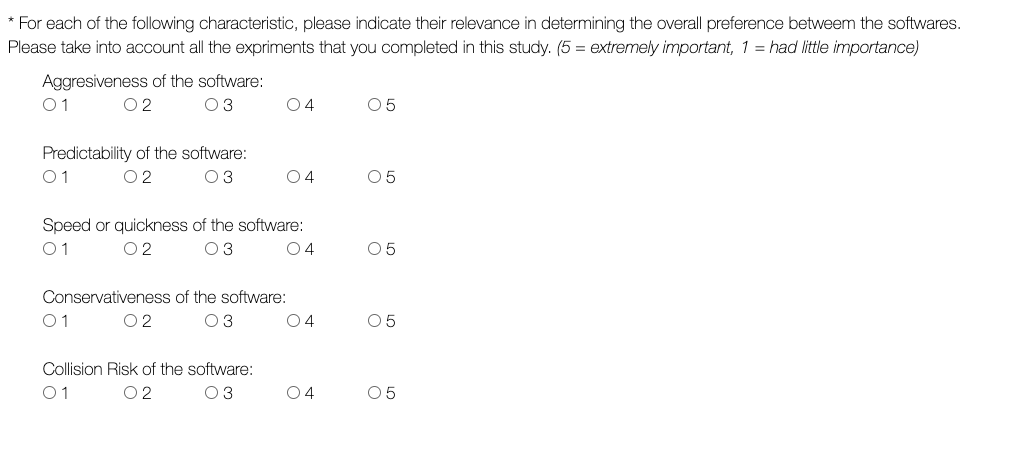}
\centering
\caption{Layout of the questions panel comprising the 6 comparison questions and the form for reporting the relevance of each criterion in the overall evaluation. }
\label{fig:ques}
\end{figure}

\newpage
\printbibliography
\end{document}